\renewcommand\footnotetextcopyrightpermission[1]{}
 \providecommand\BibTeX{{%
    \normalfont B\kern-0.5em{\scshape i\kern-0.25em b}\kern-0.8em\TeX}}}
\newtheorem{assumption}{Assumption}
\DeclareMathOperator*{\argmax}{arg\,max}
\DeclarePairedDelimiter{\ceil}{\lceil}{\rceil}
\begin{document}

\title{POND: Pessimistic-Optimistic oNline Dispatching}

\author{Xin Liu}
\affiliation{%
 \institution{University of Michigan, Ann Arbor}
 \streetaddress{1301 Beal Avenue}
 \city{Ann Arbor}
 \state{MI, 48109}
 \country{USA}}
\email{xinliuee@umich.edu}

\author{Bin Li}
\affiliation{%
 \institution{University of Rhode Island}
 \streetaddress{2 East Alumni Avenue}
 \city{Kingston}
 \state{RI, 02881}
 \country{USA}}
\email{binli@uri.edu}

\author{Pengyi Shi}
\affiliation{%
 \institution{Purdue University}
 \streetaddress{403 W State St Kran}
 \city{West Lafayette}
 \state{IN, 47907}
 \country{USA}}
\email{shi178@purdue.edu}

\author{Lei Ying}
\affiliation{%
 \institution{University of Michigan, Ann Arbor}
 \streetaddress{1301 Beal Avenue}
 \city{Ann Arbor}
 \state{MI, 48109}
 \country{USA}}
\email{leiying@umich.edu}

\renewcommand{\shortauthors}{Xin Liu, Bin Li, Pengyi Shi, and Lei Ying}

\begin{abstract}
 This paper considers constrained online dispatching with unknown arrival, reward, and constraint distributions.  We propose a novel online dispatching algorithm, named POND, standing for Pessimistic-Optimistic oNline Dispatching, which  achieves $O(\sqrt{T})$ regret and $O(1)$ constraint violation. Both bounds are sharp. Our experiments on synthetic and real datasets show that POND achieves low regret with minimal constraint violations. 
\end{abstract}

% \begin{CCSXML}
% <ccs2012>
%   <concept>
%       <concept_id>10003752.10003809.10010047.10010048</concept_id>
%       <concept_desc>Theory of computation~Online learning algorithms</concept_desc>
%       <concept_significance>500</concept_significance>
%       </concept>
%   <concept>
%       <concept_id>10002950.10003648</concept_id>
%       <concept_desc>Mathematics of computing~Probability and statistics</concept_desc>
%       <concept_significance>500</concept_significance>
%       </concept>
%  </ccs2012>
% \end{CCSXML}

% \ccsdesc[500]{Theory of computation~Online learning algorithms}
% \ccsdesc[500]{Mathematics of computing~Probability and statistics}

\maketitle
\thispagestyle{empty}
\fancyfoot{}

\section{Introduction}
Online dispatching refers to the process (or an algorithm) that dispatches incoming jobs to available servers in real-time. The problem arises in many different fields. Examples include routing customer calls to representatives in a call center, assigning patients to wards in a hospital, dispatching goods to different shipping companies, scheduling packets over multiple frequency channels in wireless communications, routing search queries to servers in a data center, selecting an advertisement to display to an Internet user, and allocating jobs to workers in crowdsourcing. 

In this paper, we consider the following discrete-time model over a finite horizon $T$ for the online dispatching problem. We assume there are $N$ types of jobs, the set of jobs is denoted by $\mathcal N = \{1,2 \cdots, N\},$ and $M$ types of servers, the set of servers is denoted by $\mathcal M = \{1,2 \cdots, M\}.$ Here a job may represent a patient who comes to an emergency room and needs to be hospitalized, an Internet user who browses a webpage, or a job submitted to a crowdsourcing platform; and a server may represent a hospital ward or a doctor in the emergency room, an advertisement of a product, or a worker registered at the crowdsourcing platform.  We assume that jobs of type $i$ arrive at each time slot $t$ according to a stochastic process $\Lambda_i(t)$ with unknown mean $\mathbb E[\Lambda_i(t)] = \lambda_i.$ The online dispatcher sends $x_{i,j}(t)$ of the $\Lambda_i(t)$ jobs to server $j,$ and receives reward 
$R_{i,j,s}(t)$ for the $s^{\text{th}}$ job, which is a random variable with mean $r_{ij},$ i.e., $\mathbb E[R_{i,j,s}(t)] = r_{i,j}, \forall s.$ 
Again, we assume $r_{i,j}$ is unknown to the dispatcher (e.g. click-through-rates are unknown to advertising platforms and average job completion quality is unknown to crowdsourcing platforms). The objective of the online dispatcher is to maximize the cumulative rewards over the $T$ time slots, i.e.,
\begin{equation}
    \sum_{t=0}^{T-1} \mathbb E\left[\sum_{i,j}\sum_{s=1}^{x_{i,j}(t)} R_{i,j,s}(t)\right] \label{obj-intro},
\end{equation} 
subject to cumulative constraints 
\begin{equation}
\sum_{t=0}^{T-1} \mathbb E\left[\sum_i w_{i,j}(t) x_{i,j}(t)\right] \leq \sum_{t=0}^{T-1} \mathbb E[\rho_j(t)], ~ \forall j \label{eq:cons}
\end{equation}
where the expectation is with respect to the randomness in job arrivals ($\Lambda$), rewards received ($R$), the constraint $\rho,$ and the dispatching policy. We consider a general set of linear constraints in this paper as in \eqref{eq:cons}. Assuming $w_{i,j}(t)=-1$ and $\rho_j(t)=\rho_j<0,$ $\forall 0\leq t < T,$ constraint \eqref{eq:cons} can represent a fairness constraint with $-\rho_j$ being a target workload level so that worker $j$ has a workload of at least $-\rho_j$ on average. This fairness constraint is much desired in many systems because an unfair load distribution across servers (such as customer representatives) often leads to the loss of trust of the system and the loss of work efficiency. Constraint \eqref{eq:cons}  can also be interpreted as a budget constraint if $w_{i,j}(t)$ is the cost incurred for server $j$ to complete a type-$i$ job at time slot $t$, and $\rho_j(t)$ is new budget allocated to server $j$ at time slot $t.$ In this paper, we assume $\rho_j(t)$  is unknown apriori and revealed at the end of time slot $t$. The information of $w_{i,j}$ and $\rho_j$ are unknown to the dispatcher. 

The focus of this paper is on efficient online dispatching algorithms to maximize the cumulative reward \eqref{obj-intro} under constraints in the form of \eqref{eq:cons}. We note that different versions of this problem have been studied in different fields. For example, without constraints and assuming job types are not related, the problem is a contextual multi-armed bandit problem where each job is a context (called ``One Bandit per Context" in \cite{LatSze_20}). With a special form of the fairness constraint, the problem is called fair contextual multi-armed bandits \cite{CheAleLuo_20}. Different from existing work, this paper considers general constraints and establishes {\em sharp} regret and constraint violation bounds with unknown job arrival distributions, reward distributions, and constraint distributions. A detailed review of related work can be found in Section \ref{sec:related}. We next summarize the main contributions of this paper. 

\begin{itemize}
\item {\bf Algorithm.} We propose a new online dispatching algorithm, called pessimistic-optimistic online dispatching or \textbf{POND} in short, which combines the celebrated Upper Confidence Bound (UCB) \cite{AurCesFis_02}, an {\em optimistic}  approach for estimating the rewards, and the celebrated MaxWeight based on virtual queues, where the virtual queues are updated with reduced service rates, so {\em pessimistically} tracking the constraint violations.  POND includes three key components:
\begin{enumerate}
    \item \textbf{UCB} -- POND utilizes UCB or a UCB-type algorithm (e.g. MOSS, Minimax Optimal Strategy \cite{AudBub_09}) to learn the mean rewards $r_{ij}$; 
    \item \textbf{Virtual Queues} -- Virtual queues track the level of constraint violation so far. A  $O(1/\sqrt{T})$  ``tightness'' is added in the virtual queue updates so that the virtual queues overestimate the constraint violations. 
    \item \textbf{MaxWeight} -- At each time slot, the incoming jobs are allocated to servers to maximize the total ``weight'', where the weight of allocating a type-$i$ job to the $j$th server is a linear combination of the estimated reward $\hat{r}_{ij}$ and the values of the virtual queues, to balance between maximizing  rewards and avoiding constraint violations. 
\end{enumerate}

\item {\bf Theory.}  We prove that over $T$ time slots, POND achieves $O(\sqrt{T})$ regret with $O(1)$ constraint violations.  These bounds are sharp because our regret bound $\Omega(\sqrt{T})$  matches the (reward)-distribution-independent lower bound for multi-armed bandit problems without constraints \cite{BubCes_12}; and otherwise, $O(1)$ constraint violation is the smallest possible. Our main proof combines the Lyapunov drift analysis \cite{Nee_10,SriYin_14} for queueing systems and the regret analysis for multi-armed bandit problems, bridged by the ``tightness'' introduced in virtual queues. In particular, both the regret and the constraint violations depend on the level of ``tightness'' added to the virtual queues as a large tightness reduces the constraint violations but leads to more suboptimal dispatches and vice versa.  By optimizing the level of ``tightness'', POND achieves both optimal regret and constraint violations. 

\item {\bf Experiments.} We verified that POND balances the regret and constraint violations effectively using experiments based on both synthetic data and a real dataset on online tutoring. Specifically, Figure \ref{intro:regret and cv v.s. time} shows in the experiments with synthetic data, POND achieves $O(\sqrt{T})$ regret and $O(1)$ constraint violations by adding ``tightness'', and achieves $O(\sqrt{T})$ regret and $O(\sqrt{T})$ constraint violations without the ``tightness''. Moreover, POND significantly outperforms Explore-then-Commit algorithm (ETC)  in the experiments. For example, POND with tightness $0.5$ (marked with circles) over $T=10,000$ has a regret of $323$ with capacity violation of 7 and resource violation of -35, while ETC has a regret of $536$ (\~70\% higher) with capacity violation of $-48$ and resource violation of $250$ (details can be found in Section \ref{sec: experiments}).
\begin{figure}[h]
\centering
\begin{subfigure}{.33\textwidth}
  \centering
  \includegraphics[width=1.0\linewidth]{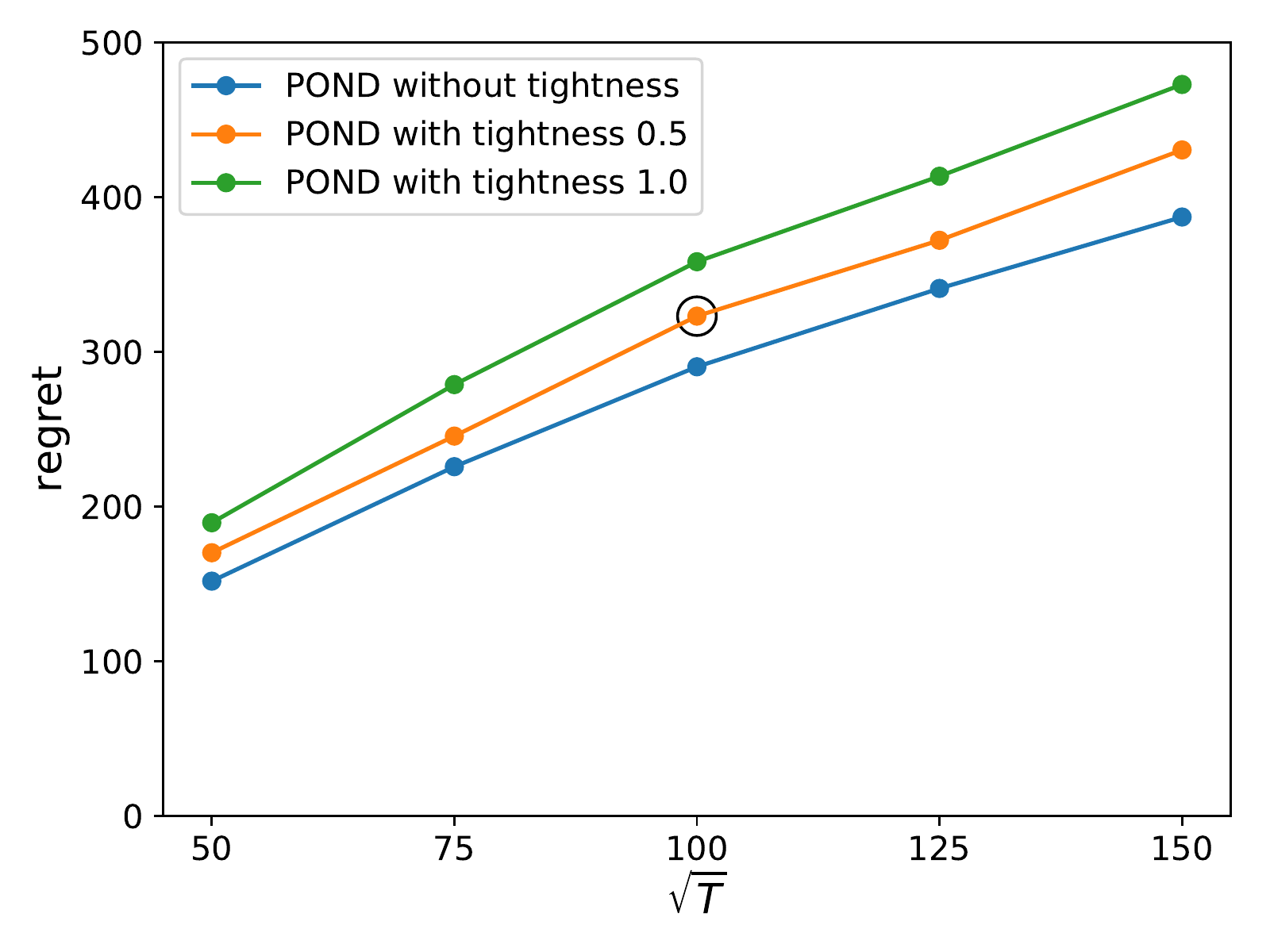}
  \caption{Regret}
\end{subfigure}%
\begin{subfigure}{.33\textwidth}
  \centering
  \includegraphics[width=1.0\linewidth]{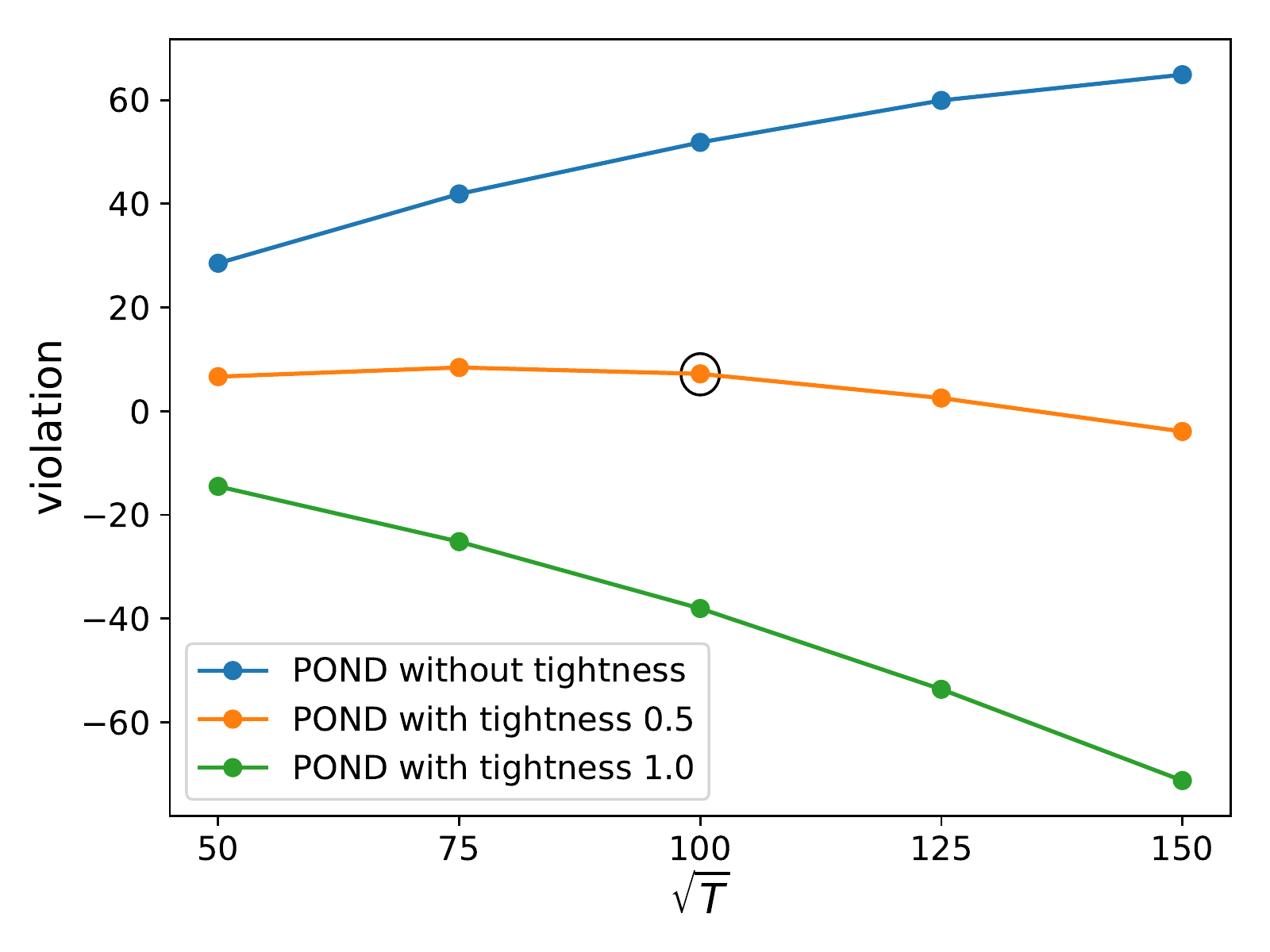}
  \caption{Capacity violation}
\end{subfigure}
\begin{subfigure}{.33\textwidth}
  \centering
  \includegraphics[width=1.0\linewidth]{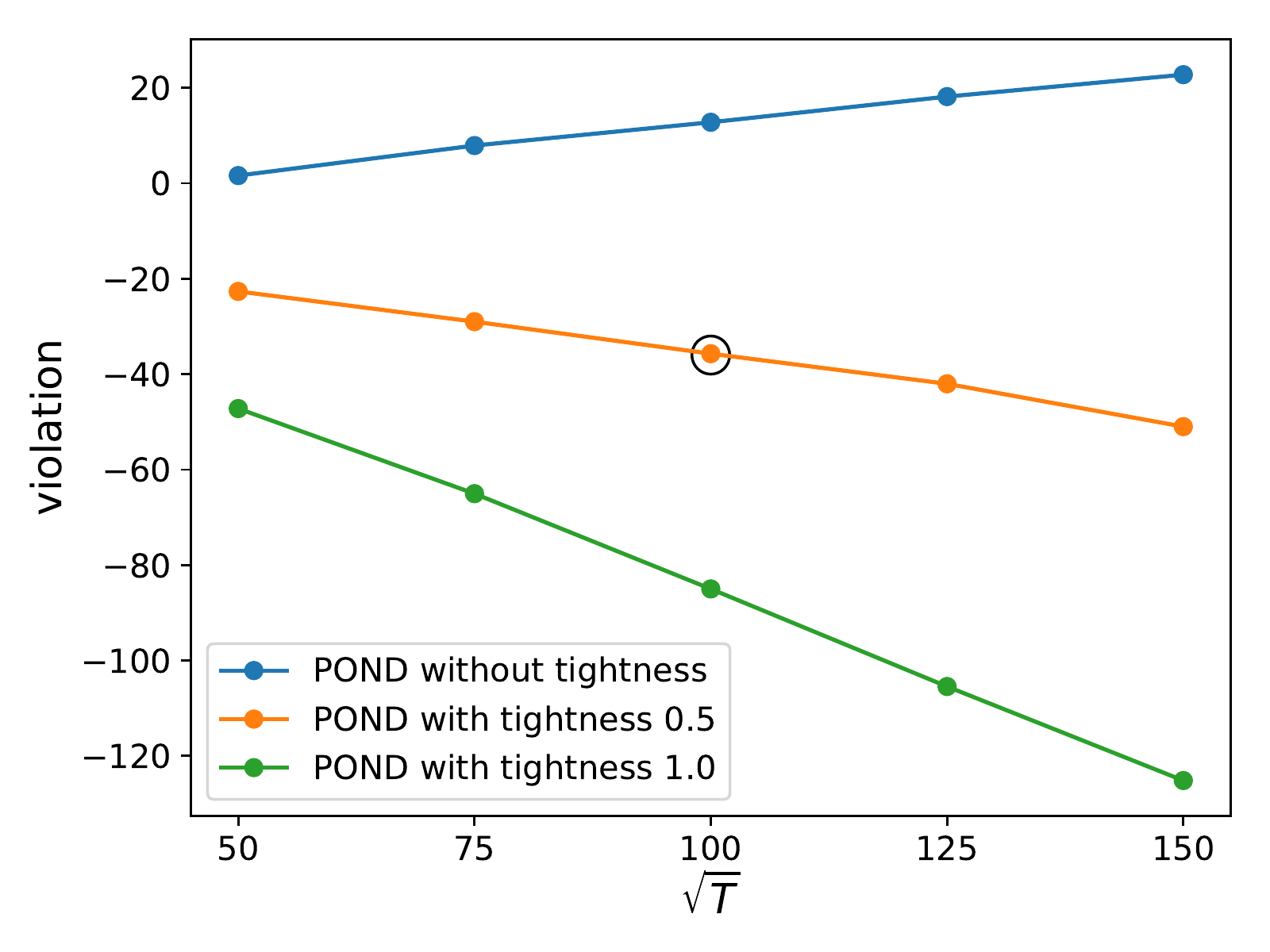}
  \caption{Resource violation}
\end{subfigure}
\caption{Regret and constraint violation versus $\sqrt{T}.$}
\label{intro:regret and cv v.s. time}
\end{figure} 
\end{itemize}

\subsection{Related work}
\label{sec:related}
Online dispatching has been widely studied in different fields. In particular, the problem is related to multi-armed bandits, online convex optimization, and load-balancing/scheduling in queueing systems. We next review related work by organizing them into these three categories.   

\begin{itemize}
\item {\bf Multi-Armed Bandits.} Online dispatching is particularly challenging when reward distributions are unknown apriori. 
In such a case, the problem has often been formulated as a multi-armed bandit (MAB) problem, see e.g. \cite{JohKamKan_16, HsuXuLin_18,  LiLiuJi_19, FerSimWan_18}. 
\cite{JohKamKan_16} considered online matching between jobs and servers with unknown server types and proposed an exploration (for identifying server types) and exploitation (for maximizing profits) algorithm, which achieves a steady-state regret of  $O(\log M/M),$  where $M$ is the number of servers.  \cite{HsuXuLin_18} studied an online task assignment with random payoffs subject to capacity constraints. A joint UCB learning and dynamic task allocation algorithm is proposed with $O(T)$ regret.
\cite{CayErySri_20} considered budget-constrained bandits ($B$ is total budget across all arms) with correlated cost and reward distributions and proposed an enhanced-UCB algorithm to achieve $O(\log (B))$ regret by exploiting the correlation. This paper considers general constraints (e.g. fairness constraint) significantly beyond the total budget constraint in \cite{CayErySri_20}.
Recently, \cite{CheAleLuo_20} considered an adversarial contextual bandit with fairness constraints. The proposed algorithm achieves  $O(\sqrt{T})$ regret with strict fairness guarantees when the context distribution is known (i.e., arrival distribution is known). For unknown context distribution, an algorithm has been proposed to achieve $O(\sqrt{T})$ regret and $O(\sqrt{T})$ constraint violation. \cite{LiLiuJi_19} studied a combinatorial sleeping bandits problem under fairness
constraints and proposed an algorithm based on UCB that achieves $O(\sqrt{T})$ regret and $O(\sqrt{T})$ constraint violation with a properly chosen tuning parameter. The model studied in \cite{LiLiuJi_19} is similar to ours.
This paper proposes a new algorithm --- POND, which achieves $O(\sqrt{T})$ regret and $O(1)$ constraint violation, and both are sharp. 
\cite{FerSimWan_18} also studied a similar problem with hard budget constraint, where the amount of budgets are known apriori. They proposed an algorithm based on the Thompson sampling and linear programming, which achieves $O(\sqrt{T})$ Bayesian regret. Their algorithm cannot be applied to our model because solving the linear programming requires the constraint parameters ($\rho$) to be known apriori. 

\item {\bf Online Convex Optimization.} Another line of research that is related to ours is online convex optimization \cite{Zin_03, Haz_13} for online resource management. For example, \cite{MahJinYan_12} considered online convex optimization under static constraints and proposed an online primal-dual algorithm that guarantees $O\left(T^{\max\{\beta,1-\beta\}}\right)$ regret and $O\left(T^{1-\frac{\beta}{2}}\right)$ constraint violation, where $\beta \in [0, 1]$ is a tuning parameter. The result has recently been improved in \cite{YuNee_20} to achieve $O(\sqrt{T})$ regret and $O(1)$ constraint violations under the assumption that the algorithm has the access to the full gradient at each step, which does not hold in our model.  Online convex optimization with stochastic constraints (i.i.d. assumption) has also been studied in \cite{YuNeeWei_17} which proposes an online primal-dual with proximal regularized algorithm, to achieve $O(\sqrt{T})$ regret and $O(\sqrt{T})$ constraint violations. They later relaxed the Slater's condition in \cite{YuNeeWei_17} and obtained $O(\sqrt{T})$ regret and $O(\sqrt{T})$ constraint violations 
in \cite{WeiYuNee_20} .
Assuming the objective function and constraints are revealed exactly before making a decision, two recent papers \cite{LuBalMir_20} and \cite{BalLuMir_20} established $O(\sqrt{T})$ regret. In our model, the realizations of the instantaneous rewards and  constraints are known after the dispatching decisions are made.

\item {\bf Load-Balancing and Scheduling.} There are recent work on load balancing and scheduling in queueing systems with
bandit learning.
\cite{KriAraJoh_18} studied scheduling in multi-class queueing systems with single and parallel servers with unknown arrival and service distributions and showed that the learned $c\mu$-rule  can achieve constant regret. \cite{KriSenJoh_16,KriSenJoh_21}  studied ``Queueing Bandits'', which is a variant of the classic multi-armed bandit problem in a discrete-time queueing system with unknown service rates. They showed that the proposed algorithm Q-ThS achieves $O(\log^3T)$ regret in the early stage and $O(\log T/T)$ in the late stage. 
\cite{ChoJosWan_21} further studied ``Queueing Bandits'' with unknown service rates and queue lengths. They focused on a class of weighted random routing policies and showed an $\epsilon_t$-exploration policy can achieve $O(\sqrt{T}\log T)$ regret.
These papers \cite{KriAraJoh_18,KriSenJoh_16,KriSenJoh_21,ChoJosWan_21}  considered queue-regret, which is the difference of queue lengths between the proposed algorithm and the optimal algorithm, and is fundamentally  different from the reward regret considered in this paper. \cite{TarSenVec_19} studied online channel-states partition and user rates allocation with the bandit approach and proposed epoch-greedy bandit algorithm, which achieves $O(T^{2/3}\log T)$ regret.
\end{itemize}

\section{System Model, POND, and Main Results}
\label{sec:system-model}

We consider an online dispatching system with a set of job types $\mathcal N = \{1,2 \cdots, N\}$ and a set of servers $\mathcal M = \{1,2 \cdots, M\}.$ Figure \ref{fig:system model} illustrates an example with $N=2$ and $M=3$.
We assume a discrete-time system with a finite time horizon $T$ with time slot $t \in [0, 1, \cdots, T-1].$ Jobs arrive according to random processes. 
The number of type-$i$  arrivals at time slot $t$ is denoted by $\Lambda_i(t),$ which is a random variable with $\mathbb E[\Lambda_i(t)] = \lambda_i.$ 
We define  $x_{i,j}(t)$ to be the decision variable that is the number of type-$i$ jobs assigned to server $j$ at time slot $t.$ We further define $\mu_j(t)$ to be the number of jobs server $j$ can complete at time slot $t,$ assuming there are a sufficient number of jobs so the server does not idle. We assume $\mu_j(t)$ is a random variable with $\mathbb E[\mu_j(t)] = \mu_j.$ 
When a type-$i$ job is assigned to server $j,$ we receive reward $R_{i,j}(t)$ immediately\footnote{The same results hold for a model where the reward is received after server $j$ completes a type-$i$ job.}.  We model $R_{i,j}(t)$ to be a random variable in with an unknown distribution with $\mathbb E[R_{i,j}(t)]=r_{i,j},$ which is the case in many applications such as order dispatching in logistics, online advertising, and patient assignment in healthcare. Furthermore, we assume the arrival processes $\{\Lambda_i(t)\},$  service processes $\{\mu_j(t)\},$
and the reward processes $\{R_{i,j}(t)\}$ are i.i.d across job types, servers and time slots. 

\begin{figure}[h]
\centering
  \includegraphics[width=2.3in]{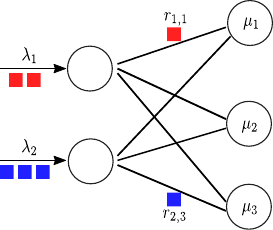}
  \caption{An example of our system with two types of jobs and three servers.}
  \label{fig:system model}
\end{figure}

The goal of this paper is to develop online dispatching algorithms to maximize the cumulative reward over $T$ time slots: 
\begin{align}
\sum_{t=0}^{T-1}\mathbb E\left[\sum_{i,j}\sum_{s=1}^{x_{i,j}(t)}R_{i,j,s}(t)\right] \label{obj:org}    
\end{align} 
where $R_{i,j,s}(t)$ are i.i.d. random variables across $s$ and has the same distribution with $R_{i,j}(t).$ The objective function in \eqref{obj:org} is equivalent to
\begin{align}
\sum_{t=0}^{T-1}\mathbb E\left[\sum_{i,j}r_{i,j}x_{i,j}(t)\right] \label{obj}
\end{align}
because $x_{i,j}(t)$ is independent of $R_{i,j,s}(t), \forall s.$ For a resource-constrained server system, we aim to maximize the objective \eqref{obj} subject to a set of constraints, including the capacity, fairness and resource budget constraints (all these constraints are unified into general forms), formulated as follows: 

\textit{Optimization Formulation:}
\begin{align}
    \max_{\mathbf x(t)} & ~  \sum_{t=0}^{T-1}\mathbb E\left[\sum_{i,j}r_{i,j}x_{i,j}(t)\right] \label{obj-T}\\
    \text{s.t.}
    &~ \sum_{j} x_{i,j}(t)= \Lambda_i(t), \forall i \in \mathcal N, ~x_{i,j}(t) \geq 0, \forall i \in \mathcal N, j \in \mathcal M, \label{arrival-T}\\
    & ~ \sum_{t=0}^{T-1}  \mathbb E\left[\sum_{i} w_{i,j}^{(k)}(t)x_{i,j}(t)\right] \leq \sum_{t=0}^{T-1}  \mathbb E\left[\rho^{(k)}_j(t)\right], ~ \forall j \in \mathcal M, \forall k \in \mathcal K. \label{resource limit-T}
\end{align}
where $x_{i,j}(t)$ is the number of type-$i$ jobs assigned to server $j$ at time slot $t$ and $\mathbf x(t)$ is its matrix version in which the $(i,j)$th entry is $x_{i,j}(t);$ \eqref{arrival-T} represents the allocating conservation for job arrivals; and \eqref{resource limit-T} can represent the capacity, fairness and resource budget constraints, where
$w_{i,j}^{(k)}(t)$ is the ``weight'' of a type-$i$ job to server $j$ and $\rho_j^{(k)}(t)$ is the corresponding ``requirement''. 
We next list a few examples of $w_{i,j}^{(k)}(t)$ and $\rho_j^{(k)}(t)$ so that the constraint represents the  capacity, fairness and resource budget constraint, respectively:
\begin{itemize}
\item Let $w_{i,j}^{(k)}(t) = 1$ and $\rho_{j}^{(k)}(t) = \mu_j(t), \forall t.$ The constraint represents the \emph{average} capacity constraints for server $j;$ 
\item Let $w_{i,j}^{(k)}(t) = -1$ and $\rho_{j}^{(k)}(t) = -\sum_{i}\Lambda_{i}(t)d_j, \forall t.$ The constraint represents  the \emph{average} fairness constraints, that is, each server $j$ needs to serve at least $d_j$ fraction of the total arrivals;
\item Let $w_{i,j}^{(k)}(t) $ be the amount of resource consumed by a type-$i$ job at server $j,$ and $\rho_{j}^{(k)}(t)$ be the budget added to server $j.$ The constraint represents the \emph{average} resource budget constraints. 
\end{itemize}
In this paper, we assume for given $k,$ $w_{i,j}^{(k)}(t)\geq 0$ for all $i,$ $j,$ and $t;$ or $w_{i,j}^{(k)}(t)\leq 0$ for all $i,$ $j,$ and $t;$ $w_{i,j}^{(k)}(t)$ and $\rho_{j}^{(k)}(t)$ are i.i.d across $i,$ $j,$ and $t.$

There are two major challenges in solving \eqref{obj-T}-\eqref{resource limit-T} in real time: unknown reward distributions, and unknown statistics of arrival processes, service processes and constraint parameters. To tackle unknown reward distributions, we utilize UCB learning~(e.g. UCB \cite{AurCesFis_02} or MOSS \cite{AudBub_09}), to learn (estimate) $r_{i,j}.$ 
To deal with unknown arrival  processes, service processes and stochastic constraints, we maintain virtual queues on the server side.
The virtual queues are related to dual variables \cite{Nee_10, SriYin_14}, which are used to track the constraint violations. 

\textit{Virtual Queues:}
\begin{align} 
Q_j^{(k)}(t+1) =& \left[Q_j^{(k)}(t) + \sum_{i}w_{i,j}^{(k)}(t) x_{i,j}(t) - \rho_j^{(k)}(t) + \epsilon \right]^{+}, \forall j \in \mathcal M, \forall k \in \mathcal K. \label{virtual queue}
\end{align}
The operator $(x)^{+} = \max(x, 0).$ $Q_j^{(k)}(t)$ is the virtual queue associated to the $k^{\text{th}}$ constraint imposed on server $j.$ $\sum_{i}w_{i,j}^{(k)}(t) x_{i,j}(t)$ is the ``total weight'' (e.g. capacity or budget consumption) on server $j$ and $\rho_{j}^{(k)}(t)$ is the ``requirement'' (e.g. capacity or budget limit) on the server $j.$ $\epsilon$ is a tightness constant that decides the trade-off between the regret and constraint violations, which we will specify in the proof later. This idea of adding tightness was inspired by the adaptive virtual queue (AVQ) used for the Internet congestion control \cite{KunSri_01}. We will see that by choosing  $\epsilon=O(1/\sqrt{T}),$ the algorithm presented next can achieve $O(\sqrt{T})$ regret and $O(1)$ constraint violations.

\subsection{POND}\label{sec: pond}
To maximize the cumulative reward in \eqref{obj} while keeping constraint violations reasonably small, we incorporate the learned reward and virtual queues in \eqref{virtual queue} to design POND - Pessimistic-Optimistic oNline Dispatching (Algorithm \ref{main alg}).

In Algorithm \ref{main alg}, we first utilize the classic UCB algorithm or MOSS algorithm to learn the reward $\hat r_{i,j}(t),$ then allocate the incoming jobs according to a ``max-weight'' algorithm, and finally update virtual queues and reward estimation according to the max-weight dispatching decisions. Note that $\hat{r}_{i,j}(t)=\infty$ when $N_{i,j}(t-1)=0,$ which implies that $\eta_{i,j}(t)=\infty$. When multiple $\eta_{i,j}(t)=\infty,$ we break the tie uniformly at random. 
In weight $\eta_{i,j}(t) = V \hat r_{i,j}(t) - \sum_k w_{i,j}^{(k)}(t)Q_j^{(k)}(t),$ parameter $V$ is chose to be $O(\sqrt{T})$ to balance the reward and virtual queues (constraint violations). When the virtual queue $Q_j^{(k)}(t)$ associated to capacity constraint is large (capacity constraint of server $j$ is violated too often), which implies the algorithm allocates too many jobs to server $j,$ weight $\eta_{i,j}(t)$ tends to be small so POND is less likely to allocate new incoming jobs to server $j$. Similarly, when virtual queue $Q_j^{(k)}(t)$ associated to fairness constraint is large (fairness constraint of server $j$ has been violated), which implies server $j$ has not received sufficient number of jobs,  weight $\eta_{i,j}(t)$ tends to be large (recall $w_{i,j}^{(k)}(t)=-1$ in fairness constraints) so POND  is more likely to allocate new incoming jobs to server $j.$

\begin{algorithm}[H]
\SetAlgoLined
 Input: $V, \epsilon,$ $Q_j^{(k)}(0)=0, \forall k,$ and $\bar r_{i,j}(-1)=N_{i,j}(-1)=0, \forall i, j.$\\
\For{$t=1,\cdots, T-1$}
{
UCB learning: $\hat r_{i,j}(t) = \bar r_{i,j}(t-1) + \sqrt{\frac{\log T}{N_{i,j}(t-1)}};$ \\
or MOSS learning: $\hat r_{i,j}(t) = \bar r_{i,j}(t-1) + \sqrt{\frac{2}{N_{i,j}(t-1)}\log \frac{T}{M\cdot N_{i,j}(t-1)}}.$\\ 

Compute the weight of a type-$i$ job to server $j$ at time $t:$
\begin{align*}
\eta_{i,j}(t) = V \hat r_{i,j}(t) - \sum_k w_{i,j}^{(k)}(t)Q_j^{(k)}(t), \forall j.
\end{align*}

Observe jobs arrival $\Lambda_i(t)$ and do max-weight allocation:
 \begin{align*}
     x_{i,j}(t) \in \argmax \limits_{\Lambda_i(t)=\sum_j x_{i,j} }~ \sum_{i, j}
\eta_{i,j}(t) x_{i,j}.
 \end{align*}\\

Update virtual queues:
\begin{align*}
  Q_j^{(k)}(t+1) =& \left[Q_j^{(k)}(t) + \sum_{i}w_{i,j}^{(k)}(t) x_{i,j}(t) - \rho_j^{(k)}(t) + \epsilon \right]^{+}, \forall j,k.
 \end{align*} 
 
Update the estimation of $\bar r_{i,j}(t)$ according to the rewards received:
 \begin{align*}
 N_{i,j}(t) &= N_{i,j}(t-1)+ x_{i,j}(t), \\
 \bar r_{i,j}(t) &= \frac{\bar r_{i,j}(t-1) N_{i,j}(t-1) + \sum_{s=1}^{x_{i,j}(t)} R_{i,j,s}(t)}{N_{i,j}(t)}.   
 \end{align*}
 }
\caption{POND Algorithm.}
\label{main alg}
\end{algorithm}

We remark that MOSS learning achieves the tight regret bound $O(\sqrt{T}),$ and UCB achieves regret bound $O(\sqrt{T\log T}).$ However, in practice, MOSS learning might explore too much and suffer from suboptimality and instability (instability means the distribution of regret under MOSS might not be well-behaved, for example, the variance could be  $O(T)$) \cite{LatSze_20}. 

\subsection{Main Results}
To analyze the performance of POND, we compare it with an offline optimization problem given the reward, arrival, service and constraint parameters. By abuse of notation, define $x_{i,j} = \frac{1}{T}\sum_{t=0}^{T-1}\mathbb E[x_{i,j}(t)],$ $w_{i,j}^{(k)} = \mathbb E[w_{i,j}^{(k)}(t)]$ and $\rho_j^{(k)} = \mathbb E[\rho_j^{(k)}(t)]$ in the optimization problem \eqref{obj-T}-\eqref{resource limit-T}. We consider the following offline optimization problem (or fluid optimization problem): 
\begin{align}
    \max_{\mathbf x} & ~  \sum_{i,j} r_{i,j} x_{i,j} \label{obj-fluid}\\
    \text{s.t.} & ~ \lambda_i = \sum_{j}  x_{i,j}, ~\forall i \in \mathcal N,  ~x_{i,j} \geq 0, \forall i \in \mathcal N, j \in \mathcal M, \label{arrival-fluid} \\
    & ~ \sum_{i}  w_{i,j}^{(k)} x_{i,j} \leq \rho_j^{(k)}, ~\forall j \in \mathcal M, \forall k \in \mathcal K. \label{resource limit-fluid} 
\end{align}
where $x_{i,j}$ corresponds to the average number of type-$i$ jobs assigned to server $j$ per time slot;
\eqref{arrival-fluid} includes throughput constraints; 
\eqref{resource limit-fluid} includes capacity constraints, fairness constraints and resource budget constraints. 

Next, we define performance metrics, including regret and constraint violation and present an informal version of main theorem.

\textit{Regret:}
Let $\mathcal X$ be the feasible set and $\mathbf x^*$ be the solution to the offline problem \eqref{obj-fluid}-\eqref{resource limit-fluid}. We define the regret of an online dispatching algorithm to be 
\begin{align*}
\mathcal R(T) = T \sum_{i,j}r_{i,j} x_{i,j}^* - \sum_{t=0}^{T-1}\mathbb E\left[\sum_{i,j}r_{i,j}x_{i,j}(t)\right].
\end{align*}

\textit{Constraint violation:}
We define constraint violations to be
\begin{align}
    \mathcal V(T) = \sum_{j}\sum_{k} \left(\sum_{t=0}^{T-1}  \mathbb E\left[\sum_{i} w_{i,j}^{(k)}(t)x_{i,j}(t)-\rho_j^{(k)}(t)\right] \right)^+,  \nonumber
\end{align}
which includes violations from capacity, fairness, and budget constraints. Note that $\mathcal V(T) \leq C$ implies that each constraint violation is bounded by $C.$

\begin{theorem}[Informal Statement]
Assuming bounded arrivals and rewards and let $V=O(\sqrt{T})$ and $\epsilon=O(1/\sqrt{T}),$ 
the regret and constraint violations under \text{POND} are
\begin{align*}
\mathcal R(T) = O(\sqrt{T}) ~\text{and}~ \mathcal V(T) = O(1).
\end{align*}
\label{them:informal}
\end{theorem}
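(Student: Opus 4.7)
My plan is to apply a Lyapunov drift-plus-penalty analysis in the style of \cite{Nee_10,SriYin_14}, fused with the classical UCB/MOSS regret bound \cite{AurCesFis_02,AudBub_09}. The tightness parameter $\epsilon$ is what lets the two pieces interlock: it introduces a controlled deterministic drift into each virtual queue so that $Q_j^{(k)}(T)$ is pinned near $\epsilon T$ by the end of the horizon, while leaving the penalty parameter $V$ free to balance regret against constraint violation.

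First I would define the Lyapunov function $L(t) = \tfrac12\sum_{j,k}(Q_j^{(k)}(t))^2$. Squaring the virtual-queue recursion \eqref{virtual queue} and using $((x)^+)^2 \le x^2$ yields a one-step drift bounded by $B + \sum_{j,k} Q_j^{(k)}(t)\bigl(\sum_i w_{i,j}^{(k)}(t)x_{i,j}(t) - \rho_j^{(k)}(t) + \epsilon\bigr)$, where the constant $B$ absorbs the uniformly bounded second-order terms. I then subtract the per-slot reward $V\sum_{i,j}r_{i,j}x_{i,j}(t)$ and invoke max-weight optimality: since $x(t)$ maximizes $\sum_{i,j}\eta_{i,j}(t)x_{i,j}$ over allocations of $\Lambda_i(t)$, I may replace $x(t)$ on the right-hand side by the randomized comparator $y_{i,j}(t) = x_{i,j}^\star\Lambda_i(t)/\lambda_i$, which is feasible pointwise and matches the fluid optimum in conditional expectation. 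Adding and subtracting $r_{i,j}$ inside the weight splits the right-hand side into $B$, a comparator contribution $\le \epsilon\sum_{j,k}Q_j^{(k)}(t)$ by fluid feasibility \eqref{resource limit-fluid}, a bandit overestimation term $V\sum_{i,j}(\hat r_{i,j}(t)-r_{i,j})(x_{i,j}(t)-y_{i,j}(t))$, and the offline optimum $-V\sum_{i,j}r_{i,j}x_{i,j}^\star$.

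Summing over $t$ and taking expectation produces the master inequality $\mathbb E[L(T)] + V\,\mathcal R(T) \le BT + \epsilon\sum_{t,j,k}\mathbb E[Q_j^{(k)}(t)] + V\,\mathcal R_{\mathrm{MAB}}(T)$, where $\mathcal R_{\mathrm{MAB}}(T)$ is the cumulative bandit overestimation, $O(\sqrt{T\log T})$ for UCB and $O(\sqrt T)$ for MOSS by the standard confidence-bound arguments (the counts $N_{i,j}(t)$ still increase by exactly $x_{i,j}(t)$ per slot, so the classical per-arm analysis carries over). It then suffices to prove the uniform queue bound $\mathbb E[Q_j^{(k)}(t)] \le \epsilon t + O(1)$. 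From it, constraint violation follows directly by dropping $[\cdot]^+$ in \eqref{virtual queue}: $Q_j^{(k)}(T)\ge\sum_t(\sum_i w_{i,j}^{(k)}(t)x_{i,j}(t)-\rho_j^{(k)}(t))+\epsilon T$, so $\mathcal V(T)\le\sum_{j,k}(\mathbb E[Q_j^{(k)}(T)]-\epsilon T)^+=O(1)$. For regret, plugging the queue bound back makes $\epsilon\sum_{t,j,k}\mathbb E[Q_j^{(k)}(t)] = O(\epsilon^2 T^2) = O(T)$ when $\epsilon=\Theta(1/\sqrt T)$; with $V=\Theta(\sqrt T)$ the master inequality gives $V\mathcal R(T)=O(T)$ and hence $\mathcal R(T)=O(\sqrt T)$.

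The main obstacle is the uniform bound $\mathbb E[Q_j^{(k)}(t)] \le \epsilon t + O(1)$. A naive Foster--Lyapunov argument, using Cauchy--Schwarz $\mathbb E[\sum_{j,k}Q_j^{(k)}(t)]\le\sqrt{2|K|M\,\mathbb E[L(t)]}$ inside the drift inequality, closes into a quadratic recursion whose solution is only $\mathbb E[Q_j^{(k)}(t)]=O(\epsilon t+\sqrt t)$, which combined with $\epsilon T=\Theta(\sqrt T)$ yields the weaker $O(\sqrt T)$ violation familiar from \cite{LiLiuJi_19,CheAleLuo_20}. To shave off the $O(\sqrt t)$ residual I have to separate the deterministic $\epsilon$-drift from the martingale fluctuation and absorb the bandit overestimation that the $V\hat r$ term injects into the queues; this is precisely where the pessimistic virtual queue and the optimistic reward estimator must be reconciled at matching orders of magnitude, and it is the delicate step of the proof.
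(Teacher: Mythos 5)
Your overall architecture --- a drift-plus-penalty analysis fused with the UCB/MOSS confidence-bound bounds, a comparator substituted through max-weight optimality, and the violation controlled by $\sum_{j,k}\bigl(\mathbb E[Q_j^{(k)}(T)]-T\epsilon\bigr)^+$ --- is the same as the paper's, and your regret accounting closes order-wise. (The paper uses the $\epsilon$-tight optimum $\mathbf x^*_\epsilon$ as the comparator, so its queue contribution is non-positive, and pays the $O(T\epsilon/\delta)$ gap between $\mathbf x^*$ and $\mathbf x^*_\epsilon$ separately via Lemma \ref{lemma: elsilon gap} and Slater's condition; you instead compare against $x^*_{i,j}\Lambda_i(t)/\lambda_i$ and carry a residual $\epsilon\sum_{j,k}Q_j^{(k)}(t)$. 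Both variants give $O(\sqrt T)$ regret once the queues are known to be $O(V)$ on average, which already follows from a plain telescoping of the drift inequality.) The genuine gap is exactly the step you flag as ``delicate'' and leave unproven: the bound $\mathbb E[Q_j^{(k)}(T)]\le \epsilon T+O(1)$. The remedy you sketch --- separating the deterministic $\epsilon$-drift from the martingale fluctuation to ``shave off the $O(\sqrt t)$ residual'' --- is not how the bound is obtained, and the uniform-in-$t$ claim $\mathbb E[Q_j^{(k)}(t)]\le\epsilon t+O(1)$ you say you need is actually false for small $t$, since nothing prevents the queues from growing at a constant rate until they reach the $O(V)$ scale where negative drift kicks in.

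What closes the argument is different in kind. Using Slater's condition (Assumption \ref{assumption:slater}), the paper shows the conditional one-step drift of $\|\mathbf Q(t)\|_2$ is at most $-(\delta/2-\epsilon)$ whenever $\|\mathbf Q(t)\|_2\ge\theta$ with $\theta=4(V\sum_i\lambda_i+B)/\delta$ (Lemma \ref{negative drift}), and then invokes a Hajek-type exponential drift lemma (Lemma \ref{drif lemma}) to obtain a bound on $\mathbb E[\|\mathbf Q(T)\|_1]$ \emph{at the terminal time} of the form $\sqrt{MK}\,\theta+O(1)$, with an explicit leading constant. The key realization is that the per-slot queue bound remains $O(V)=O(\sqrt T)$ --- precisely the ``weak'' bound you dismiss --- and the $O(1)$ violation comes not from improving it but from tuning $\epsilon=4V\sqrt{MK}\sum_i\lambda_i/(T\delta)$ so that $T\epsilon$ cancels the $\theta$-term exactly, leaving only constants. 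Two ingredients are therefore indispensable and missing from your proposal: Slater's condition to generate strictly negative drift outside a ball of radius $O(V)$, and a concentration-type drift lemma that converts this into an expectation bound at time $T$ itself (a naive telescoping of the quadratic drift only yields the time-averaged bound $\frac1T\sum_t\mathbb E[\|\mathbf Q(t)\|_1]=O(V)$, or the crude $\mathbb E[\|\mathbf Q(T)\|_2]=O(\sqrt{VT})$, neither of which suffices for the violation).
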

The formal statement of the theorem can be found in Theorem \ref{thm:formal-UCB} and Theorem \ref{thm:formal-MOSS}. 

In Theorem \ref{them:informal}, the time horizon $T$ is known to POND in advance. We remark when the time horizon $T$ is unknown, the doubling trick \cite{LatSze_20,BesKau_17} can be applied, and the same order-wise results in Theorem \ref{them:informal} hold.  

\section{Proof of regret and constraint violation trade-off}

In this section, we will introduce technical assumptions, present the formal version of Theorem \ref{them:informal} and prove the main results.

\subsection{Preliminaries}

To analyze the algorithm, 
we introduce an ``$\epsilon$-tight'' ($\epsilon\geq 0$) optimization problem ($\epsilon$ is corresponding to tightness added in the original problem \eqref{obj-fluid}-\eqref{resource limit-fluid} and virtual queues \eqref{virtual queue}):
\begin{align}
    \max_{\mathbf x} & ~  \sum_{i,j} r_{i,j} x_{i,j} \label{obj-fluid-tight}\\
    \text{s.t.} & ~ \lambda_i = \sum_{j}  x_{i,j}, ~\forall i \in \mathcal N,  ~x_{i,j} \geq 0, \forall i \in \mathcal N, j \in \mathcal M, \label{arrival-fluid-tight} \\
    & ~ \sum_{i}  w_{i,j}^{(k)} x_{i,j} + \epsilon \leq \rho_j^{(k)}, ~\forall j \in \mathcal M, \forall k \in \mathcal K. \label{resource limit-fluid-tight} 
\end{align}
Let $\mathbf x^*_{\epsilon}$ and $\mathcal X_{\epsilon}$ be an optimal solution and feasible region in \eqref{obj-fluid-tight}-\eqref{resource limit-fluid-tight}, respectively. Further, we make the following mild assumptions.
\begin{assumption}
The reward $r_{i,j}(t)$ is a random variable in $[0, 1]$ with $\mathbb E[r_{i,j}(t)] = r_{i,j}$ for any $i \in \mathcal N, j \in \mathcal M, 0 \leq t < T.$ \label{assumption:reward}
\end{assumption}

\begin{assumption}
The arrival $\Lambda_{i}(t) \leq C_{\lambda}$ for any $i \in \mathcal N, 0 \leq t < T$ and $\mathbb E[\Lambda_{i}(t)] = \lambda_i, 0 \leq t < T.$ \label{assumption:arrival}
\end{assumption}

\begin{assumption}
The weights and requirements in the constraints 
satisfy $|w_{i,j}^{(k)}(t)|, |\rho_{j}^{(k)}(t)| \leq C_u$  $\forall k, i, j$ and $0 \leq t < T.$ 
\label{assumption:constraints}
\end{assumption}

\begin{assumption}
There exists $\delta > 0$ such that we can always find a feasible solution $\mathbf x \in \mathcal X$ to satisfy $\sum_{i}  w_{i,j}^{(k)} x_{i,j} - \rho_j^{(k)} \leq -\delta, \forall j \in \mathcal M, k \in \mathcal K.$ \label{assumption:slater}
\end{assumption}

Now we are ready to present our formal results in Theorem \ref{thm:formal-UCB} and \ref{thm:formal-MOSS}.

\begin{theorem}
Under Assumptions \ref{assumption:reward}-\ref{assumption:slater}, assuming $\delta \geq 4\epsilon$ with $\epsilon= \frac{2\sqrt{B\sqrt{MK}}}{\sqrt{T}}$ and let $V =\frac{\delta}{2\sum_i\lambda_i} \sqrt{\frac{TB}{\sqrt{MK}}},$ POND in Algorithm \ref{main alg} with UCB learning achieves the following regret and constraint violations bounds: 
\begin{align*}
&\sum_{t=0}^{T-1} \mathbb E\left[\sum_{i,j} r_{i,j} (x_{i,j}^*-x_{i,j}(t))\right] \leq \left(\frac{4}{\delta}\sqrt{B\sqrt{MK}}+8\sqrt{M\log T} \right) \sqrt{T}\sum_{i} \lambda_i=O\left(\sqrt{T\log T}\right), \\
&\sum_j\sum_{k} \left(\sum_{t=0}^{T-1}  \mathbb E\left[\sum_{i} w_{i,j}^{(k)}(t)x_{i,j}(t)-\rho_j^{(k)}(t)\right] \right)^+ \leq \left(\frac{3\nu_{\max}^2}{\gamma}\log\left(\frac{2\nu_{\max}}{\gamma}\right) + \nu_{\max}+\frac{4B}{\delta}\right)(MK)^{1.5}=O(1).
\end{align*}
where $B=MK (C_{\lambda}^{2}C_{u}^{2} + \epsilon^2),$ $\gamma = \frac{\delta}{2} - \epsilon$ and $\nu_{\max} = \max(\gamma, MKC_{\lambda}C_u).$
\label{thm:formal-UCB}
\end{theorem}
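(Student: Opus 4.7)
My plan is to prove the theorem by combining a Lyapunov-drift analysis on the virtual queues with a UCB regret analysis, channelled through the max-weight optimality of POND. The key idea is to derive a single ``master'' drift inequality and then instantiate it with two different competing reference allocations: the $\epsilon$-tight optimum $x^*_\epsilon$ for the regret bound, and a Slater-slack convex combination of $x^*$ and the Slater point for the constraint-violation bound. Take $L(t) = \tfrac12 \sum_{j,k}(Q_j^{(k)}(t))^2$. From the virtual-queue recursion and Assumptions~\ref{assumption:reward}--\ref{assumption:constraints}, the per-slot drift obeys $L(t+1)-L(t) \le \sum_{j,k} Q_j^{(k)}(t)[\sum_i w_{ij}^{(k)}(t) x_{ij}(t) - \rho_j^{(k)}(t) + \epsilon] + B/2$. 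Because POND maximizes $\sum_{ij}\eta_{ij}(t) x_{ij}$ subject to $\sum_j x_{ij} = \Lambda_i(t)$, for any admissible randomized reference $\tilde x_{ij}(t) = \Lambda_i(t)\bar x_{ij}/\lambda_i$ with $\bar x$ feasible in the fluid LP I get the ``swap'' inequality $\sum_{j,k} Q_j^{(k)}(t)\sum_i w_{ij}^{(k)}(t)(x_{ij}(t)-\tilde x_{ij}(t)) \le V \sum_{ij} \hat r_{ij}(t)(x_{ij}(t)-\tilde x_{ij}(t))$. Substituting and taking conditional expectation given $\mathcal F_t$ (using that $\Lambda(t)$ is independent of $Q(t)$ and of $w(t),\rho(t)$) yields the master inequality
\begin{align*}
\mathbb E[L(t+1)-L(t)\mid\mathcal F_t] \le V\,\mathbb E\bigl[\textstyle\sum_{ij}\hat r_{ij}(t)(x_{ij}(t)-\tilde x_{ij}(t))\,\big|\,\mathcal F_t\bigr] + \sum_{j,k} Q_j^{(k)}(t)\bigl[\textstyle\sum_i w_{ij}^{(k)}\bar x_{ij}-\rho_j^{(k)}+\epsilon\bigr] + B/2.
\end{align*}

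For the regret bound I instantiate $\bar x = x^*_\epsilon$. Since $\sum_i w_{ij}^{(k)} x^*_{\epsilon,ij} - \rho_j^{(k)} + \epsilon \le 0$ by $\epsilon$-tight feasibility, the $Q$-weighted term is nonpositive in expectation and disappears. Telescoping, using $L(0)=0$ and $L(T)\ge 0$, and decomposing $V\hat r = V r + V(\hat r - r)$ inside the UCB term produces $V\sum_t\mathbb E[\sum r_{ij}(x^*_{\epsilon,ij}-x_{ij}(t))] \le V\sum_t \mathbb E[\sum_{ij}(\hat r_{ij}(t)-r_{ij})(x_{ij}(t)-\tilde x_{ij}(t))] + BT/2$, after also invoking UCB optimism $\mathbb E[\hat r_{ij}(t)] \ge r_{ij}$ on the good event. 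The cumulative UCB error on the right is bounded in standard Auer--Cesa-Bianchi--Fischer fashion from $|\hat r_{ij}(t) - r_{ij}|\le 2\sqrt{\log T/N_{ij}(t)}$ (with the bad event contributing only $O(1)$), giving $O(\sqrt{MT\log T}\sum_i\lambda_i)$ after summing across $(i,j)$. Dividing through by $V=\Theta(\sqrt T)$ and adding the $O(\epsilon T/\delta)=O(\sqrt T)$ gap between $x^*_\epsilon$ and $x^*$ (obtained by interpolating with the Slater point of Assumption~\ref{assumption:slater}) yields the stated $O(\sqrt{T\log T})$ regret with the constants shown.

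For the constraint-violation bound, iterating $Q_j^{(k)}(t+1)\ge Q_j^{(k)}(t) + \sum_i w_{ij}^{(k)}(t) x_{ij}(t) - \rho_j^{(k)}(t) + \epsilon$ gives the deterministic lower bound $Q_j^{(k)}(T)\ge \sum_{t=0}^{T-1}(\sum_i w_{ij}^{(k)}(t)x_{ij}(t)-\rho_j^{(k)}(t)) + T\epsilon$, so $\sum_t\mathbb E[\sum_i w_{ij}^{(k)}(t) x_{ij}(t) - \rho_j^{(k)}(t)] \le \mathbb E[Q_j^{(k)}(T)]-T\epsilon$, and it suffices to control this excess by an $O(1)$ constant. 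I now instantiate the master inequality with $\bar x = \tfrac12 x^* + \tfrac12 \tilde x$, the midpoint of the original optimum and the Slater point. Under $\delta\ge 4\epsilon$ the reference slack satisfies $\sum_i w_{ij}^{(k)} \bar x_{ij} - \rho_j^{(k)} + \epsilon \le -(\delta/2-\epsilon) = -\gamma < 0$, turning the master inequality into $\mathbb E[L(t+1)-L(t)\mid\mathcal F_t] \le -\gamma\sum_{j,k} Q_j^{(k)}(t) + (\text{bounded non-}Q\text{ terms}) + B/2$. Combining this strictly negative drift with the per-step increment bound $|Q_j^{(k)}(t+1)-Q_j^{(k)}(t)|\le \nu_{\max}$, I apply Hajek's geometric drift lemma on a linear Lyapunov function of $Q_j^{(k)}$, producing a uniform-in-$T$ exponential tail whose first moment explicitly matches $3\nu_{\max}^2/\gamma \cdot \log(2\nu_{\max}/\gamma) + \nu_{\max} + 4B/\delta$ in the theorem statement (the three summands arise from the geometric tail contribution, a boundary correction, and the $B$-penalty, respectively).

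The main obstacle will be obtaining a genuinely $T$-independent bound on $\mathbb E[Q_j^{(k)}(T)]-T\epsilon$ despite the master inequality carrying the UCB cross-term $V\cdot\mathbb E[\sum(\hat r-r)(x-\tilde x)\mid\mathcal F_t]$, which is naively $\Theta(V)=\Theta(\sqrt T)$. Making this survive Hajek's lemma requires (i) separating the high-probability UCB good event (where the cross-term shrinks like $V\sqrt{\log T/N_{ij}(t)}$) from the rare bad event that can be controlled by absorbing $O(1)$ mass into the constants, and (ii) applying the drift lemma to a linear rather than quadratic Lyapunov function so that a uniformly negative constant drift $-\gamma$ dominates a bounded (not $V$-scaling) additive perturbation, producing a tail bound whose first moment is $T$-free. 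The regret analysis is comparatively routine once the master inequality is in hand, and the $O(\sqrt{T\log T})$ contribution from UCB assembles straightforwardly through the standard arm-counting argument.
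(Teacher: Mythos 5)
Your regret analysis follows essentially the same route as the paper: the quadratic Lyapunov drift, the max-weight ``swap'' inequality against a randomized reference $\tilde x_{i,j}(t)=\Lambda_i(t)x^*_{\epsilon,i,j}/\lambda_i$, telescoping, the split of the cross-term into $\sum(\hat r-r)x(t)$ (arm-counting, $O(\sqrt{MT\log T})$) and $\sum(r-\hat r)\tilde x$ (optimism, $O(1)$), and the $O(T\epsilon/\delta)$ interpolation gap to $x^*$. That half is sound.

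The gap is in your plan for the constraint violation. You identify the ``main obstacle'' as the UCB cross-term $V\,\mathbb E[\sum(\hat r-r)(x-\tilde x)]$ being $\Theta(V)$, and you propose to neutralize it via a good-event/bad-event split so as to obtain a tail bound on $Q(T)$ ``whose first moment is $T$-free.'' This mislocates the difficulty and the proposed fix would not work. The $V$-scaling term in the drift is not the UCB \emph{error}; it is the reward term $V\sum_{i,j}\hat r_{i,j}(t)x_{i,j}(t)$ itself, which is $\Theta(V)$ no matter how accurate $\hat r$ is, because POND deliberately trades constraint slack for reward with weight $V$. Consequently the drift of $\|\mathbf Q\|_2$ is only guaranteed negative above the threshold $\theta=\frac{4(V\sum_i\lambda_i+B)}{\delta}=\Theta(V)=\Theta(\sqrt T)$, and Hajek's lemma can only deliver $\mathbb E[\|\mathbf Q(T)\|_1]\leq O(1)+\sqrt{MK}\,\theta=\Theta(\sqrt T)$; a $T$-free moment bound on $Q(T)$ is simply false for this algorithm. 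The actual mechanism in the paper is a cancellation, not a shrinkage: one accepts $\mathbb E[Q_j^{(k)}(T)]\leq \frac{3\sqrt{MK}\nu_{\max}^2}{\gamma}\log(\frac{2\nu_{\max}}{\gamma})+\sqrt{MK}\nu_{\max}+\frac{4\sqrt{MK}(V\sum_i\lambda_i+B)}{\delta}$ and then chooses $\epsilon$ so that $T\epsilon$ \emph{exactly} cancels the $\frac{4\sqrt{MK}V\sum_i\lambda_i}{\delta}$ piece (with the stated $\epsilon$ and $V$ these two quantities are both equal to $2\sqrt{TB\sqrt{MK}}$), leaving only the $T$-independent remainder $\frac{4\sqrt{MK}B}{\delta}$ plus the Hajek constants. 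Your write-up of the three summands in the final bound omits this $V\sum_i\lambda_i$ piece precisely because you assume it can be made to disappear before the subtraction of $T\epsilon$; it cannot. Two smaller points: the drift lemma must be applied to $\bar L(t)=\|\mathbf Q(t)\|_2$ (or some norm with a verified bounded-increment and negative-drift condition), not to the quadratic $L(t)$, since $|L(t+1)-L(t)|$ is unbounded — the paper uses the concavity of $\sqrt{\cdot}$ to convert the quadratic drift bound into a drift bound for $\|\mathbf Q\|_2$; and your midpoint reference $\bar x=\tfrac12 x^*+\tfrac12\tilde x$ is unnecessary — the Slater point itself gives slack $-\delta+\epsilon\leq -3\delta/4$, and the weaker constant $\gamma=\delta/2-\epsilon$ emerges from absorbing the $(V\sum_i\lambda_i+B)/\theta$ correction, not from diluting the Slater slack.
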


The regret bound of POND can be improved to be $O(\sqrt{T})$ with MOSS learning.
\begin{theorem}
Under Assumptions \ref{assumption:reward}-\ref{assumption:slater}, assuming $\delta \geq 4\epsilon$ with $\epsilon= \frac{2\sqrt{B\sqrt{MK}}}{\sqrt{T}}$ and let $V =\frac{\delta}{2\sum_i\lambda_i} \sqrt{\frac{TB}{\sqrt{MK}}},$ POND in Algorithm \ref{main alg} with MOSS learning achieves the following regret and constraint violations bounds: 
\begin{align*}
&\sum_{t=0}^{T-1} \mathbb E\left[\sum_{i,j} r_{i,j} (x_{i,j}^*-x_{i,j}(t))\right] \leq \left(\frac{4}{\delta}\sqrt{B\sqrt{MK}}+25\sqrt{M} \right) \sqrt{T}\sum_{i} \lambda_i=O\left(\sqrt{T}\right), \\
&\sum_j\sum_{k} \left(\sum_{t=0}^{T-1}  \mathbb E\left[\sum_{i} w_{i,j}^{(k)}(t)x_{i,j}(t)-\rho_j^{(k)}(t)\right] \right)^+ \leq \left(\frac{3\nu_{\max}^2}{\gamma}\log\left(\frac{2\nu_{\max}}{\gamma}\right) + \nu_{\max}+\frac{4B}{\delta}\right)(MK)^{1.5}=O(1).
\end{align*}
where $B=MK (C_{\lambda}^{2}C_{u}^{2} + \epsilon^2),$ $\gamma = \frac{\delta}{2} - \epsilon$ and $\nu_{\max} = \max(\gamma, MKC_{\lambda}C_u).$
\label{thm:formal-MOSS}
\end{theorem}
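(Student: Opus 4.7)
The only substantive difference between Theorem \ref{thm:formal-MOSS} and the already-claimed Theorem \ref{thm:formal-UCB} is that the UCB oracle is replaced by the MOSS oracle; every other ingredient (the virtual queue, the max-weight allocation, the tightness $\epsilon$, and the offline reference $\mathbf x^*$) is identical. Accordingly, my plan is to reuse the Lyapunov drift-plus-penalty skeleton of the UCB proof verbatim, and replace only the single step that bounds the bandit-learning error. Specifically, I would first pass from the true offline optimum $\mathbf x^*$ to the $\epsilon$-tight optimum $\mathbf x^*_\epsilon$ of \eqref{obj-fluid-tight}-\eqref{resource limit-fluid-tight}: Assumption \ref{assumption:slater} together with $\delta\geq 4\epsilon$ guarantees feasibility and the sensitivity argument gives $\sum_{i,j}r_{i,j}(x^*_{i,j}-x^*_{\epsilon,i,j})=O(\epsilon/\delta)\cdot\sum_i\lambda_i$, absorbing one contribution to the final regret.

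Next, I would define the Lyapunov function $L(t)=\tfrac12\sum_{j,k}(Q_j^{(k)}(t))^2$ and expand the one-slot drift using the recursion \eqref{virtual queue}; bounded arrivals/weights (Assumptions \ref{assumption:arrival}-\ref{assumption:constraints}) yield a per-slot increment bound involving the constant $B=MK(C_\lambda^2 C_u^2+\epsilon^2)$. Forming the drift-plus-penalty $\Delta L(t)-V\,\mathbb E\bigl[\sum_{i,j}r_{i,j}x_{i,j}(t)\bigr]$ and exploiting the fact that POND picks $\mathbf x(t)$ to maximize $\sum_{i,j}\eta_{i,j}(t)x_{i,j}$, I compare against the deterministic feasible $\mathbf x^*_\epsilon$. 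This comparison produces the standard terms already present in the UCB analysis, plus one additional bandit-learning term of the form $V\,\mathbb E\bigl[\sum_{i,j}(\hat r_{i,j}(t)-r_{i,j})x_{i,j}(t)\bigr]$. Telescoping over $t=0,\dots,T-1$ and using $L(0)=0$, $L(T)\geq 0$ then yields the regret bound provided this bandit term is controlled.

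The one step that changes is the control of the bandit term, and this is the main technical point of the proof. For each job type $i$, the sequence of server choices forms an $M$-armed bandit with random horizon $n_i=\sum_t\Lambda_i(t)$, and the estimate $\hat r_{i,j}(t)$ used by POND is precisely the MOSS index. A standard MOSS analysis (e.g. \cite{AudBub_09,LatSze_20}) gives pseudo-regret $O(\sqrt{Mn_i})$, free of any $\log T$ factor; by bounded arrivals $n_i\leq C_\lambda T$ so taking expectation and summing over $i$ yields $\sum_t\mathbb E\bigl[\sum_{i,j}(\hat r_{i,j}(t)-r_{i,j})x_{i,j}(t)\bigr]\leq 25\sqrt{M}\cdot\sqrt{T}\sum_i\lambda_i$, which is exactly what replaces the $8\sqrt{M\log T}$ term in Theorem \ref{thm:formal-UCB}. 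Substituting this bound and the choices $V=\Theta(\sqrt{T})$, $\epsilon=\Theta(1/\sqrt{T})$ balances $BT/V$ against $V\epsilon\sum_i\lambda_i$ and delivers the stated $O(\sqrt{T})$ regret. The constraint violation bound is proved by an independent Foster-Lyapunov-style drift argument: Assumption \ref{assumption:slater} with $\gamma=\delta/2-\epsilon>0$ implies a feasible allocation exists that produces negative drift of $\|\mathbf Q(t)\|$ whenever $\|\mathbf Q(t)\|$ is large, giving uniform-in-$t$ bounds on $\mathbb E[Q_j^{(k)}(t)]$ and hence $\mathcal V(T)=O(1)$. This second argument is agnostic to whether UCB or MOSS is used and can be imported verbatim from Theorem \ref{thm:formal-UCB}.

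The hard part, as indicated above, will be carrying the MOSS regret bound through the batched, coupled setting. Specifically, within a single time slot $t$ multiple type-$i$ jobs are dispatched using the same frozen index $\hat r_{i,j}(t)$ (as opposed to updating after every single pull), and the sequence of pulls is entangled with the virtual-queue dynamics through the max-weight rule rather than being driven purely by MOSS. I would handle this by treating all pulls within a slot as a small block (at most $C_\lambda$ pulls), absorbing the induced $O(1)$ slack into constants, and by observing that the max-weight step still chooses the server with largest \emph{index-plus-queue-weight}, so the MOSS self-normalized confidence bounds and the peeling argument that underlie the $O(\sqrt{Mn_i})$ bound continue to apply type-by-type. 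Once this is established, the rest of the proof follows the UCB case unchanged.
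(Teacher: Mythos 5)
Your overall architecture matches the paper's: the same drift-plus-penalty decomposition against the $\epsilon$-tight benchmark $\mathbf x^*_\epsilon$, the same $TB/V$ and $T\epsilon/\delta$ contributions, and the same Hajek-type drift argument on $\|\mathbf Q(t)\|_2$ for the $O(1)$ violation bound (which is indeed learner-agnostic and is imported unchanged). The genuine gap is in the bandit step. You cannot invoke the off-the-shelf MOSS pseudo-regret theorem $O(\sqrt{Mn_i})$: that theorem bounds $\sum_t(r_{i,j^*}-r_{i,J_t})$ and its proof hinges on the pulled arm being the index maximizer, whereas POND pulls $\argmax_j\{V\hat r_{i,j}-\sum_k w^{(k)}_{i,j}(t)Q_j^{(k)}(t)\}$. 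What the regret decomposition actually requires is not the pseudo-regret but two one-sided index-concentration bounds: the overestimation on the arms actually pulled, $\sum_t\mathbb E[\sum_{i,j}(\hat r_{i,j}(t)-r_{i,j})x_{i,j}(t)]$ (term \eqref{UCB-bound}), and the underestimation on the benchmark allocation, $\sum_t\mathbb E[\sum_{i,j}(r_{i,j}-\hat r_{i,j}(t))x^*_{\epsilon,i,j}]$ (term \eqref{UCB-negative-bound}). As you correctly intuit in your last paragraph, both survive the coupling with the queues because they follow from a union bound over the sample count $s\le C_\lambda T$ (rewards pre-drawn, maximal Hoeffding, dyadic peeling, then integrating the tail over $y\ge\zeta=\sqrt{2M/T}$), valid for an arbitrary adapted pull sequence; but this must be carried out explicitly --- it is the content of Lemmas \ref{MAB-lemma} and \ref{MAB-negative-lemma}, not a citation to the MOSS regret theorem. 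Your batching concern (several pulls per slot under a frozen index) is already absorbed by the $s\le C_\lambda T$ union bound and needs no separate blocking argument.

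Moreover, you omit the underestimation term \eqref{UCB-negative-bound} entirely (you only name the $\hat r-r$ direction) and assert the rest ``follows the UCB case unchanged.'' It does not: under UCB the bonus $\sqrt{\log T/s}$ makes that term $O(1)$ via a $2/T^2$-per-step tail bound, but the MOSS bonus $\sqrt{\tfrac{2}{s}\log\tfrac{T}{Ms}}$ vanishes for large $s$, so the index underestimates $r_{i,j}$ with non-negligible probability and the term grows to $(3\sqrt{MT}+6M)\sum_i\lambda_i$. This is where part of the constant $25$ comes from ($21$ from \eqref{UCB-bound} plus $3$ from \eqref{UCB-negative-bound}, with the $6M$ absorbed for large $T$); skipping it leaves the stated bound unproved.
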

The bounds in Theorem \ref{thm:formal-MOSS} are sharp because the regret bound does not depend on the reward distributions so matches the (reward)-distribution independent regret $\Omega(\sqrt{T})$ in multi-armed bandit problems without constraints \cite{BubCes_12}; and $O(1)$ constraint violation is the smallest possible and order-wise optimal (note we can even achieve zero constraint violation for large $T$ by choosing a slightly large $\epsilon$ without affecting the order of the regret bound). 

\subsection{Outline of the Proofs of Theorem \ref{thm:formal-UCB} and \ref{thm:formal-MOSS}}

To outline the formal proofs for Theorem \ref{thm:formal-UCB} and \ref{thm:formal-MOSS}, we first provide an intuitive way to derive POND by using the Lyapunov-drift analysis (see, e.g. \cite{Sto_05,Nee_10,SriYin_14}), in particular, the drift-plus-penalty method \cite{Nee_10}. Define Lyapunov function to be $$L(t) = \frac{1}{2}\sum_j\sum_k \left(Q_j^{(k)}(t)\right)^2$$ and its drift to be $$\Delta(t) = L(t+1) - L(t).$$ Algorithm \ref{main alg} is such that it decides allocation $\mathbf x(t)$ to maximize the utility (reward) ``minus'' the Lyapunov drift at time slot $t:$
\begin{align}
     V\sum_{i,j} \hat r_{i,j}(t)x_{i,j}(t) - \Delta(t), \label{lyapunov-drift}
\end{align}
where parameter $V$ controls the trade-off between the utility (reward) and Lyapunov drift.

We note that after substituting the virtual queues update, the maximization problem above becomes the max-weight problem with the weight $V \hat r_{i,j}(t)- \sum_{k} w_{i,j}^{(k)}(t)Q_j^{(k)}(t)$ of a type-$i$ job to server $j$ in POND. Intuitively, a large $V$ gives preference to allocating jobs to servers with higher rewards; and a small $V$ gives preference to reducing virtual queue lengths (i.e., constraint violations). 

To analyze regret $\mathcal R(T)$, we decompose it as follows: 
\begin{align}
&T \sum_{i,j}r_{i,j}x_{i,j}^* - \sum_{t=0}^{T-1} \mathbb E\left[\sum_{i,j}r_{i,j}x_{i,j}(t)\right] \nonumber\\
=&   
\underbrace{T \sum_{i,j}r_{i,j}\left(x^*_{i,j} -x^*_{\epsilon,i,j}\right)}_{\text{$\epsilon$-tight}} + \underbrace{\sum_{t=0}^{T-1} \mathbb E\left[ \sum_{i,j}\left(r_{i,j}-\hat r_{i,j}(t)\right)x_{\epsilon,i,j}^*\right]}_{\text{reward mismatch}}+\underbrace{ \sum_{t=0}^{T-1}\mathbb E\left[ \sum_{i,j}\left(\hat r_{i,j}(t)-r_{i,j}\right)x_{i,j}(t)\right]}_{\text{reward mismatch}} \nonumber\\
&+\sum_{t=0}^{T-1} \mathbb E\left[ \sum_{i,j}\hat r_{i,j}(t)x_{\epsilon,i,j}^* \underbrace{- \sum_{i,j}\hat r_{i,j}(t)x_{i,j}(t) + \Delta(t)}_{\text{drift - reward}}\right] - \underbrace{\sum_{t=0}^{T-1} \mathbb E\left[\Delta(t)\right]}_{\text{accumulated drift}} \label{intuition}.
\end{align}
After the decomposition, we can see that 
\begin{itemize}
\item The first term is the difference between the regret when comparing the optimal offline (fluid) optimization and the ``$\epsilon$-tight'' offline (fluid) optimization. This term, $x^*_{i,j} -x^*_{\epsilon,i,j},$ adds $O(\epsilon)$ regret at each time slot, leading to $O(T\epsilon)$ regret over $T$ time slots. 

\item The second and third terms are related to a mismatch between the estimated reward and the actual reward received. This term is small because $\hat{\mathbf r}(t) \rightarrow \mathbf r$ with sufficient exploration. In particular, the UCB algorithm (or MOSS algorithm) guarantees that the cumulative mismatch over $T$ time slots is $O(\sqrt{T})$ regret.

\item The fourth term is on ``drift - reward''. POND aims to minimize $\frac{\Delta(t)}{V} - \sum_{i,j}\hat r_{i,j}(t)x_{i,j}(t),$ which is bounded by $\frac{\text{Constant}}{V} - \sum_{i,j}\hat r_{i,j}(t)x_{\epsilon,i,j}^*$ because $\mathbf x(t)$ is better than static $\mathbf x_{\epsilon}^*.$ Therefore, ``drift - reward'' adds $O\left(\frac{T}{V}\right)$ regret over $T$ time slots, which is $O(\sqrt{T})$ by choosing $V = O(\sqrt{T}).$

\item The last term is the accumulated drift, which is equal to $L(T)$  given $Q(0) = 0$ and is positive.
\end{itemize}

The constraint violations by time $T$ can be bounded by the expected virtual queue lengths at time $T,$ in particular, 
\begin{align*}
\sum_{t=0}^{T-1}  \mathbb E\left[\sum_{i} w_{i,j}^{(k)}(t)x_{i,j}(t)-\rho_j^{(k)}(t)\right] \leq \mathbb E[ Q_j^{(k)}(T)]-T\epsilon, ~\forall j, k.  
\end{align*}
We will use the Lyapunov drift analysis to show that $\mathbb E[Q_j^{(k)}(T)]$ is bounded by $O(V).$ Therefore, by choosing proper  $V = O(\sqrt{T})$ and $\epsilon = O(1/\sqrt{T}),$ we can reduce the violation to be $O(1)$ at the expense of adding $O(T\epsilon) =O(\sqrt{T})$ to  total regret (due to the first term in \eqref{intuition}).
Moreover, it is well known that $\Omega(\sqrt{T})$ regret is a lower bound in multi-armed bandit without constraints \cite{LatSze_20}, which implies that the regret bound $O(\sqrt{T})$ in Theorem \ref{thm:formal-MOSS} is optimal. 

Next, we provide the detailed proof of Theorem \ref{thm:formal-UCB} and \ref{thm:formal-MOSS}, where we perform Lyapunov drift analysis to bridge the regret and constraint violations.  

\subsection{Lyapunov Drift}
Define Lyapunov function to be $$L(t) = \frac{1}{2}\sum_j\sum_k \left(Q_j^{(k)}(t)\right)^2.$$ The Lyapunov drift 
\begin{align*}
&L(t+1)-L(t) \\
\leq&  \sum_j\sum_{k}Q_j^{k}(t)\left(\sum_{i}w_{i,j}^{(k)}(t) x_{i,j}(t) - \rho_j^{(k)}(t)+\epsilon\right) + \frac{\sum_{j}\sum_{k} \left(\sum_{i}w_{i,j}^{(k)}(t) x_{i,j}(t) - \rho_j^{(k)}(t)+\epsilon\right)^2}{2}\\
\leq& \sum_j\sum_{k}Q_j^{k}(t)\left(\sum_{i}w_{i,j}^{(k)}(t) x_{i,j}(t) - \rho_j^{(k)}(t)+\epsilon\right) -  V \sum_{i,j} \hat r_{i,j}(t)x_{i,j}(t) + V \sum_{i,j} \hat r_{i,j}(t)x_{i,j}(t) + B
\end{align*}
where the first inequality holds because $ Q_j^{(k)}(t+1) = \left[Q_j^{(k)}(t) + \sum_{i}w_{i,j}^{(k)}(t) x_{i,j}(t) - \rho_j^{(k)}(t)+ \epsilon \right]^{+}$ and the second inequality holds because $B=MK(C_{\lambda}^{2}C_{u}^{2} + \epsilon^2).$ We study the expected drift conditioned on the current state $\mathbf H(t)=[\mathbf Q(t),  \hat{\mathbf r}(t)]=\mathbf h=[\mathbf Q, \hat{\mathbf r}],$ including the virtual queues $\mathbf Q$ and the learned reward $\hat{\mathbf r}$ at time slot $t.$
\begin{align}
& \mathbb E[L(t+1)-L(t)|\mathbf H(t)=\mathbf h] \nonumber\\
\leq& \mathbb E\left[\sum_j\sum_{k}Q_j^{(k)}\left(\sum_{i}w_{i,j}^{(k)}(t) x_{i,j}(t) - \rho_j^{(k)}(t)+\epsilon\right)  -  V \sum_{i,j} \hat r_{i,j}(t)x_{i,j}(t) \Big|\mathbf H(t)=\mathbf h\right] \nonumber\\
&+ V \mathbb E\left[\sum_{i,j} \hat r_{i,j}(t)x_{i,j}(t)\Big|\mathbf H(t)=\mathbf h\right] + B \nonumber\\
\leq&\mathbb E\left[\sum_j\sum_{k}Q_j^{(k)}\left(\sum_{i}w_{i,j}^{(k)}(t) x_{\epsilon,i,j} - \rho_j^{(k)}(t)+\epsilon\right)  -  V \sum_{i,j} \hat r_{i,j}(t)x_{\epsilon,i,j}\Big|\mathbf H(t)=\mathbf h\right] \nonumber\\
&+ V \mathbb E\left[\sum_{i,j} \hat r_{i,j}(t)x_{i,j}(t)\Big|\mathbf H(t)=\mathbf h\right] + B \nonumber\\
=& \sum_j\sum_{k}Q_j^{(k)}\left(\sum_{i}w_{i,j}^{(k)} x_{\epsilon,i,j} - \rho_j^{(k)}+\epsilon\right) -  V\mathbb E\left[\sum_{i,j} \hat r_{i,j}(t)x_{\epsilon,i,j}-\sum_{i,j} \hat r_{i,j}(t)x_{i,j}(t) \Big| \mathbf H(t)=\mathbf h\right] + B. \label{drif ucb}
\end{align}
where $\mathbf x_{\epsilon}$ is a feasible solution to \eqref{obj-fluid-tight}-\eqref{resource limit-fluid-tight} with the $(i,j)^{\text{th}}$ entry $x_{\epsilon,i,j}$ and the second inequality holds according to Lemma \ref{lemma: drift-inequality} below; the last equality holds because $w_{i,j}^{(k)}(t)$ and $\rho_j^{(k)}(t)$ are independent with $\hat r_{i,j}(t).$ Based on Lyapunov drift analysis above, we investigate the regret and constraint violations in the two following subsections.
\begin{restatable}{lemma}{lemmadriftinequality}\label{lemma: drift-inequality}
\label{LEMMA: DRIFT-INEQUALITY}
Given any $\mathbf x_{\epsilon} \in \mathcal X_{\epsilon}$ and $\mathbf x(t)$ is the solution in Algorithm \ref{main alg}, we have
\begin{align}
&\mathbb E\left[\sum_j\sum_{k}Q_j^{(k)}\left(\sum_{i}w_{i,j}^{(k)}(t) x_{i,j}(t) - \rho_j^{(k)}(t)+\epsilon\right)  -  V \sum_{i,j} \hat r_{i,j}(t)x_{i,j}(t)\Big|\mathbf H(t)=\mathbf h\right] \nonumber\\
\leq&\mathbb E\left[\sum_j\sum_{k}Q_j^{(k)}\left(\sum_{i}w_{i,j}^{(k)}(t) x_{\epsilon,i,j} - \rho_j^{(k)}(t)+\epsilon\right)  -  V \sum_{i,j} \hat r_{i,j}(t)x_{\epsilon,i,j}\Big|\mathbf H(t)=\mathbf h\right] \nonumber
\end{align}
\end{restatable}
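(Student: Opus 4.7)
The plan is to exploit the pointwise max-weight optimality of POND's decision against a randomized feasible comparator constructed from the fluid solution $\mathbf x_\epsilon$. First, since $Q_j^{(k)}$ is $\mathbf h$-measurable while $\rho_j^{(k)}(t)$ is i.i.d.\ with mean $\rho_j^{(k)}$, the additive terms $\sum_{j,k}Q_j^{(k)}\bigl(-\rho_j^{(k)}(t)+\epsilon\bigr)$ have identical conditional expectations on the two sides of the inequality, so they can be subtracted out. The $\mathbf h$-measurable scalars $\hat r_{i,j}(t)$ and $Q_j^{(k)}$ and the deterministic $x_{\epsilon,i,j}$ can be pulled out of the expectation on the right side, using $\mathbb E[w_{i,j}^{(k)}(t)\mid\mathbf h]=w_{i,j}^{(k)}$. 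The claim therefore reduces to showing
\begin{align*}
\mathbb E\!\left[\sum_{i,j}\eta_{i,j}(t)\,x_{i,j}(t)\,\Big|\,\mathbf h\right] \;\geq\; \sum_{i,j}\!\left(V\hat r_{i,j}(t)-\sum_k w_{i,j}^{(k)}Q_j^{(k)}\right)\!x_{\epsilon,i,j},
\end{align*}
where $\eta_{i,j}(t)=V\hat r_{i,j}(t)-\sum_k w_{i,j}^{(k)}(t)Q_j^{(k)}$ is POND's weight.

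The comparator I would use is the randomized allocation $\hat x_{i,j}(t) = (\Lambda_i(t)/\lambda_i)\,x_{\epsilon,i,j}$ (types with $\lambda_i=0$ force $\Lambda_i(t)\equiv 0$ and can be ignored). By construction $\hat x_{i,j}(t)\ge 0$ and $\sum_j\hat x_{i,j}(t)=\Lambda_i(t)$, so $\hat x(t)$ is feasible for the per-slot max-weight problem. Since the max-weight objective is linear and decomposes across job types into an LP on a simplex for each $i$, POND's (integer) optimum dominates every fractional feasible point, yielding the pointwise inequality $\sum_{i,j}\eta_{i,j}(t)\,x_{i,j}(t)\geq\sum_{i,j}\eta_{i,j}(t)\,\hat x_{i,j}(t)$ for every realization of $\Lambda(t)$ and $\{w_{i,j}^{(k)}(t)\}$. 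Taking conditional expectation given $\mathbf h$ and using that $\Lambda_i(t)$ and $\{w_{i,j}^{(k)}(t)\}$ are independent of each other and of $\mathbf h$ with means $\lambda_i$ and $w_{i,j}^{(k)}$ respectively, the expected value of the comparator collapses to exactly $\sum_{i,j}(V\hat r_{i,j}(t)-\sum_k w_{i,j}^{(k)}Q_j^{(k)})\,x_{\epsilon,i,j}$, which is the desired lower bound.

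The only mildly subtle point, and what I would be most careful about, is justifying that POND's integer max-weight in fact dominates the fractional comparator $\hat x(t)$: this is valid because the per-slot problem splits across job types into a linear objective over the simplex $\{x\ge 0:\sum_j x_{i,j}=\Lambda_i(t)\}$, whose optimum is attained at a vertex and hence realized by an integer assignment. The independence needed to separate $\mathbb E[w_{i,j}^{(k)}(t)\,\Lambda_i(t)\mid\mathbf h]$ into the product of marginal means is a direct consequence of the i.i.d.\ assumptions in Section~\ref{sec:system-model}, so no extra probabilistic machinery is required.
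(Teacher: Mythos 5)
Your proof is correct and takes essentially the same route as the paper's: both reduce the claim to comparing POND's max-weight objective against a feasible per-slot comparator built from $\mathbf x_\epsilon$, then use the independence of $\Lambda_i(t)$ and $w_{i,j}^{(k)}(t)$ from $\mathbf H(t)$ to replace realized quantities by their means. The only cosmetic difference is the choice of comparator --- the paper routes all type-$i$ arrivals to $j^*_i\in\argmax_j \eta_{i,j}$ while you scale $\mathbf x_\epsilon$ by $\Lambda_i(t)/\lambda_i$ --- and both are dominated pointwise by the max-weight decision.
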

The proof of Lemma \ref{lemma: drift-inequality} can be found in Appendix \ref{app: lemma-drift-inequality}.

\subsection{Regret}\label{sec: regret}
Let $x_{\epsilon,i,j} = x_{\epsilon,i,j}^*$ in the drift analysis \eqref{drif ucb} and it implies  
\begin{align*}
& \mathbb E[L(t+1)-L(t)|\mathbf H(t)=\mathbf h] +  V \sum_{i,j}r_{i,j}x^*_{i,j} - V \mathbb E\left[\sum_{i,j}r_{i,j}x_{i,j}(t)\Big|\mathbf H(t)=\mathbf h\right] \\
\leq& \sum_j\sum_{k}Q_j^{(k)}\left(\sum_{i}w_{i,j}^{(k)} x_{\epsilon,i,j}^* - \rho_j^{(k)}+\epsilon\right)+ B + V \sum_{i,j}r_{i,j}x^*_{i,j} - V \sum_{i,j}r_{i,j}x^*_{\epsilon,i,j}\\
&+ V \mathbb E\left[\sum_{i,j}\hat r_{i,j}(t)x_{i,j}(t)\Big|\mathbf H(t)=\mathbf h\right] - V \mathbb E\left[\sum_{i,j}r_{i,j}x_{i,j}(t)\Big|\mathbf H(t)=\mathbf h\right]  \\
&+ V \sum_{i,j}r_{i,j}x^*_{\epsilon,i,j} - V \mathbb E\left[\sum_{i,j}\hat r_{i,j}(t)x_{\epsilon,i,j}^*\Big|\mathbf H(t)=\mathbf h\right], 
\end{align*}
where we added the regret $\sum_{i,j}r_{i,j}x^*_{i,j} - \mathbb E\left[\sum_{i,j}r_{i,j}x_{i,j}(t)|\mathbf H(t)=\mathbf h\right]$ to the drift,  and then added and subtracted $V \sum_{i,j}r_{i,j}x^*_{\epsilon,i,j}.$ Note $Q(0) = 0, Q(t) \geq 0, \forall 0\leq t < T$ and  $\left(\sum_{i}w_{i,j}^{(k)} x_{\epsilon,i,j}^* - \rho_j^{(k)}+\epsilon\right)$ is non-positive because $x^*_{\epsilon}$ is feasible solution to \eqref{obj-fluid-tight}-\eqref{resource limit-fluid-tight}.
Taking the expected value with respect to $\mathbf H(t),$ doing the telescope summation across time up to $T-1$ and dividing $V$ both sides imply
\begin{align}
 T \sum_{i,j}r_{i,j}x_{i,j}^* - \sum_{t=0}^{T-1} \mathbb E\left[\sum_{i,j}r_{i,j}x_{i,j}(t)\right] \leq&   
 T \sum_{i,j}r_{i,j}x^*_{i,j} - T \sum_{i,j}r_{i,j}x^*_{\epsilon,i,j} + \frac{TB}{V} \label{epsilon-bound}\\ 
&+ \sum_{t=0}^{T-1}\mathbb E\left[ \sum_{i,j}\hat r_{i,j}(t)x_{i,j}(t) -  \sum_{i,j}r_{i,j}x_{i,j}(t)\right] \label{UCB-bound} \\
&+ \sum_{t=0}^{T-1} \mathbb E\left[ \sum_{i,j}r_{i,j}x_{\epsilon,i,j}^* -  \sum_{i,j}\hat r_{i,j}(t)x_{\epsilon,i,j}^*\right], \label{UCB-negative-bound}
\end{align}
where the upper bound on the regret consists of three major terms: $  T \sum_{i,j}r_{i,j}x^*_{i,j} - T \sum_{i,j}r_{i,j}x^*_{\epsilon,i,j}$ in \eqref{epsilon-bound} is  the difference between the original optimization problem \eqref{obj-fluid}-\eqref{resource limit-fluid} and the ``$\epsilon$-tight'' optimization problem \eqref{obj-fluid-tight}-\eqref{resource limit-fluid-tight}; \eqref{UCB-bound} is the difference between estimated rewards and actual rewards under allocation ${\bf x}(t)$,  and \eqref{UCB-negative-bound} is the difference between estimated rewards and the actual reward under allocation ${\bf x}_\epsilon^*$. In the following, we introduce three lemmas to bound the above three terms, respectively. 

The first lemma is on the difference term in \eqref{epsilon-bound} and we prove it is $O(T\epsilon).$
\begin{restatable}{lemma}{lemmaelsilongap}\label{lemma: elsilon gap}
Let $\mathbf x^*$ the optimal solution to \eqref{obj-fluid}-\eqref{resource limit-fluid}  and $\mathbf x^*_{\epsilon}$ the optimal solution to \eqref{obj-fluid-tight}-\eqref{resource limit-fluid-tight}. Under Assumptions \ref{assumption:reward}-\ref{assumption:slater}, we have 
$$T \sum_{i,j}r_{i,j}x^*_{i,j} - T \sum_{i,j}r_{i,j}x^*_{\epsilon,i,j} \leq \frac{T\epsilon}{\delta} \sum_i \lambda_i.$$\label{epsilon-lemma}
\end{restatable}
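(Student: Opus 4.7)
The plan is to exploit Slater's condition (Assumption \ref{assumption:slater}) to build a feasible solution of the $\epsilon$-tight fluid program out of the optimum $\mathbf x^\ast$ of the original fluid program, and then use optimality of $\mathbf x^\ast_\epsilon$ to convert the construction into the desired inequality. Concretely, let $\tilde{\mathbf x}\in\mathcal X$ denote a Slater point guaranteed by Assumption \ref{assumption:slater}, i.e., $\sum_i w_{i,j}^{(k)}\tilde x_{i,j}-\rho_j^{(k)}\leq -\delta$ for every $j,k$, and form the convex combination
\[
\hat{\mathbf x}=(1-\alpha)\mathbf x^\ast+\alpha\tilde{\mathbf x},\qquad \alpha\in[0,1].
\]

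I would next verify that $\hat{\mathbf x}$ is feasible for \eqref{obj-fluid-tight}--\eqref{resource limit-fluid-tight} provided $\alpha$ is large enough. Non-negativity and the arrival equality $\sum_j\hat x_{i,j}=\lambda_i$ follow immediately from the linearity of \eqref{arrival-fluid-tight} and the fact that both $\mathbf x^\ast$ and $\tilde{\mathbf x}$ satisfy them. For the tightened resource constraints, combining $\sum_i w_{i,j}^{(k)}x^\ast_{i,j}\leq \rho_j^{(k)}$ (feasibility of $\mathbf x^\ast$ in the original problem) with the Slater inequality yields
\[
\sum_i w_{i,j}^{(k)}\hat x_{i,j}\leq (1-\alpha)\rho_j^{(k)}+\alpha(\rho_j^{(k)}-\delta)=\rho_j^{(k)}-\alpha\delta.
\]
Choosing $\alpha=\epsilon/\delta$, which satisfies $\alpha\in[0,1/4]$ under the standing assumption $\delta\geq 4\epsilon$, gives $\sum_i w_{i,j}^{(k)}\hat x_{i,j}+\epsilon\leq \rho_j^{(k)}$, so $\hat{\mathbf x}\in\mathcal X_\epsilon$.

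The last step is to chain the inequalities. Optimality of $\mathbf x^\ast_\epsilon$ for \eqref{obj-fluid-tight}--\eqref{resource limit-fluid-tight} gives $\sum_{i,j}r_{i,j}x^\ast_{\epsilon,i,j}\geq \sum_{i,j}r_{i,j}\hat x_{i,j}$, hence
\[
\sum_{i,j}r_{i,j}\bigl(x^\ast_{i,j}-x^\ast_{\epsilon,i,j}\bigr)\leq \sum_{i,j}r_{i,j}\bigl(x^\ast_{i,j}-\hat x_{i,j}\bigr)=\alpha\sum_{i,j}r_{i,j}\bigl(x^\ast_{i,j}-\tilde x_{i,j}\bigr).
\]
Dropping the non-negative term $\alpha\sum_{i,j}r_{i,j}\tilde x_{i,j}$ and using $r_{i,j}\leq 1$ from Assumption \ref{assumption:reward} together with $\sum_j x^\ast_{i,j}=\lambda_i$ from \eqref{arrival-fluid} yields $\sum_{i,j}r_{i,j}x^\ast_{i,j}\leq \sum_i\lambda_i$. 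Plugging $\alpha=\epsilon/\delta$ and multiplying through by $T$ delivers the claimed bound $T\sum_{i,j}r_{i,j}(x^\ast_{i,j}-x^\ast_{\epsilon,i,j})\leq (T\epsilon/\delta)\sum_i\lambda_i$.

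I do not expect a genuine obstacle here: the whole argument is a one-line perturbation of an interior-point-style construction, and the only care required is to keep $\alpha=\epsilon/\delta\leq 1$, which is exactly the role of the hypothesis $\delta\geq 4\epsilon$ (any $\delta\geq\epsilon$ would in fact suffice for this lemma). The mildly delicate point is simply recognizing that the reward gap $\sum_{i,j}r_{i,j}(x^\ast_{i,j}-\tilde x_{i,j})$ should be upper-bounded by dropping the (non-negative) $\tilde{\mathbf x}$ contribution and then by the total arrival $\sum_i\lambda_i$, rather than by comparing $\mathbf x^\ast$ and $\tilde{\mathbf x}$ directly.
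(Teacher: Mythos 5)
Your proposal is correct and follows essentially the same route as the paper's own proof: both construct the convex combination $(1-\epsilon/\delta)\mathbf x^*+(\epsilon/\delta)\tilde{\mathbf x}$ with the Slater point, verify its feasibility for the $\epsilon$-tight program, and then chain optimality of $\mathbf x^*_\epsilon$ with dropping the nonnegative Slater-point contribution and the bounds $r_{i,j}\leq 1$, $\sum_j x^*_{i,j}=\lambda_i$. Your side remark that $\delta\geq\epsilon$ suffices for this particular lemma is also accurate.
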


The second lemma is on the term \eqref{UCB-bound} and we show it is $O(\sqrt{MT})$ with MOSS learning and $O(\sqrt{MT\log T})$ with UCB learning.
\begin{lemma}
With MOSS and UCB learning, we have upper bound on \eqref{UCB-bound} that 
\begin{itemize}
    \item MOSS learning:
\begin{align*}
\eqref{UCB-bound} \leq  21\sqrt{MT} \sum_{i} \lambda_i;
\end{align*}
    \item UCB learning:
\begin{align*}
\eqref{UCB-bound} \leq  7\sqrt{MT\log T} \sum_{i} \lambda_i.
\end{align*}\label{MAB-lemma}
\end{itemize}
\end{lemma}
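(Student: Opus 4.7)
The plan is to interpret \eqref{UCB-bound} as the cumulative exploration bonus of $N$ independent multi-armed bandit instances, one per job type, and to invoke the distribution-free regret analyses of UCB and MOSS. Because the weights $\eta_{i,j}(t)$ are linear in $x_{i,j}$ under the constraint $\sum_j x_{i,j}(t) = \Lambda_i(t)$, the max-weight allocation routes all $\Lambda_i(t)$ arriving type-$i$ jobs to a single server $a_{i,t} \in \argmax_j \eta_{i,j}(t)$, so for each fixed $i$ the algorithm is running a bandit over the $M$ servers where pulls occur in chunks of size $\Lambda_i(t) \leq C_\lambda$. This reduction is what lets me reuse the classical UCB/MOSS machinery.

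The first technical step is to bound the per-step mismatch by twice the confidence radius. For UCB, Hoeffding's inequality applied to $\bar r_{i,j}(t-1)$ gives $|\bar r_{i,j}(t-1) - r_{i,j}| \leq \sqrt{\log T / N_{i,j}(t-1)}$ except on a ``bad'' event of probability $O(T^{-2})$; a union bound over the $O(NMT)$ relevant triples leaves an $O(1)$ residual since rewards lie in $[0,1]$. On the clean event, $\hat r_{i,j}(t) - r_{i,j} \leq 2\sqrt{\log T/N_{i,j}(t-1)}$. MOSS is handled analogously using the sharper anytime confidence bound of \cite{AudBub_09}, whose peeling argument removes the $\log T$ factor at the cost of a larger constant.

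The second step is to control the weighted sum of confidence radii for each (type, server) pair. Because $N_{i,j}(t)$ can increment by more than $1$ at a single slot, the classical telescoping $\sum_{s=1}^{N} s^{-1/2} \leq 2\sqrt{N}$ no longer applies verbatim; instead I would use the inequality
\begin{equation*}
\frac{x_{i,j}(t)}{\sqrt{N_{i,j}(t-1)}} \;\leq\; 2\sqrt{1+C_\lambda}\,\bigl(\sqrt{N_{i,j}(t)} - \sqrt{N_{i,j}(t-1)}\bigr),
\end{equation*}
which follows from $\sqrt{a+b} - \sqrt{a} \geq b/(2\sqrt{a+b})$ together with $N_{i,j}(t) \leq (1+C_\lambda) N_{i,j}(t-1)$ whenever $N_{i,j}(t-1) \geq 1$ (the case $N_{i,j}(t-1)=0$ contributes only the first pull). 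Summing over $t$ telescopes to $O(\sqrt{N_{i,j}(T)})$, after which Cauchy--Schwarz across servers gives $\sum_j \sqrt{N_{i,j}(T)} \leq \sqrt{M\, N_i(T)}$ with $N_i(T) = \sum_t \Lambda_i(t)$. Taking expectations, using Jensen's inequality on the concave square root to get $\mathbb E\sqrt{N_i(T)} \leq \sqrt{T\lambda_i}$, and summing over $i$ yields the claimed $\sqrt{MT\log T}\sum_i \lambda_i$ bound for UCB and the $\sqrt{MT}\sum_i \lambda_i$ bound for MOSS, up to constants that absorb $C_\lambda$ and the clean-event failure.

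The main obstacle I anticipate is precisely the chunked-pull issue: within a single slot, the confidence radius does not shrink as successive jobs of the same type are dispatched, so the standard UCB calculation has to be adapted to accommodate bulk pulls without losing the $\sqrt{T}$ scaling. The telescoping inequality above is the key workaround; it introduces a dependence on $C_\lambda$ through the constant but preserves the optimal order. A secondary delicate point is tuning the confidence level (the $\log T$ inside the radius, and the corresponding MOSS choice) so that the bad-event contribution remains $O(1)$ in expectation and does not inflate the leading constant, particularly in the MOSS regime where the peeling argument needs to be applied carefully per (type, server) pair.
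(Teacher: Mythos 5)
Your UCB argument is a genuinely different route from the paper's and is essentially sound. The paper does not use a clean-event/telescoping argument at all: it splits the overestimate at a threshold $\zeta$, charges $T\zeta\sum_i\lambda_i$ for the part below $\zeta$, and bounds the part above $\zeta$ by integrating the tail $\int_\zeta^\infty \sum_{s\le C_\lambda T}\mathbb P\bigl(\bar R_{i,j,s}+\mathrm{radius}(s)-r_{i,j}>y\bigr)\,dy$ after decoupling $x_{i,j}(t)\le\Lambda_i(t)$ from the estimation error. Your chunked telescoping inequality is correct (for $N_{i,j}(t-1)\ge 1$ one has $C_\lambda N_{i,j}(t-1)\ge x_{i,j}(t)$, which is exactly what the inequality needs), and the clean event has failure probability $O(NMC_\lambda/T)$ after the union bound over $(i,j,s)$, contributing only a lower-order term. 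Two caveats: your final bound comes out as $O\bigl(\sqrt{(1+C_\lambda)MT\log T}\sum_i\sqrt{\lambda_i}\bigr)$ rather than $7\sqrt{MT\log T}\sum_i\lambda_i$ — same order in $T$ and $M$, but a different dependence on the arrival rates (note $\sum_i\sqrt{\lambda_i}>\sum_i\lambda_i$ when $\lambda_i<1$) and a constant that picks up $C_\lambda$, which the paper's bound avoids.

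The MOSS half has a genuine gap. You assert it is ``handled analogously using the sharper anytime confidence bound,'' but no high-probability anytime clean event exists for the MOSS index: the probability that $\bar R_{i,j,s}$ overshoots $r_{i,j}$ by more than the MOSS radius $\sqrt{\tfrac{2}{s}\log\tfrac{T}{Ms}}$ is roughly $(Ms/T)^{4}$, which is of order one once $s$ is of order $T/M$, so the union bound over $s\le C_\lambda T$ diverges rather than giving an $O(1)$ residual. This is not a technicality — the entire point of MOSS is that its confidence level is deliberately weak for well-sampled arms, and the $\log T$ factor is removed only by controlling the \emph{expected overshoot} directly. That is what the paper does: set $\zeta=\sqrt{2M/T}$, write the excess as $\int_\zeta^\infty\mathbb P(\cdot>y)\,dy$, split the sum over sample counts at $s(y)=\lceil 2\log(Ty^2/M)/((1-c)^2y^2)\rceil$, and control the remaining tail with the maximal Hoeffding inequality over dyadic blocks. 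Your proposal flags the peeling as a ``delicate point'' but never supplies it, and the clean-event framing you lead with cannot be repaired into that argument; as written, the MOSS bound $21\sqrt{MT}\sum_i\lambda_i$ is not established.
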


The third lemma is on the term \eqref{UCB-negative-bound} and we show it is $O(\sqrt{T})$ for MOSS learning and $O(1)$ for UCB learning.
\begin{lemma} 
With MOSS and UCB learning, we have upper bound on \eqref{UCB-negative-bound} that
\begin{itemize}
    \item MOSS learning:
\begin{align*}
\eqref{UCB-negative-bound} \leq  \left(3\sqrt{MT}+6M\right) \sum_{i}\lambda_i;
\end{align*}
    \item UCB learning:
\begin{align*}
\eqref{UCB-negative-bound} \leq 2C_{\lambda}\sum_{i} \lambda_i.
\end{align*}
\end{itemize}
\label{MAB-negative-lemma}
\end{lemma}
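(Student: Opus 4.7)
The plan is to exploit the fact that $\mathbf x_{\epsilon}^{*}$ is a deterministic constant (the optimum of the $\epsilon$-tight fluid LP \eqref{obj-fluid-tight}--\eqref{resource limit-fluid-tight}) rather than an algorithmic choice, so it can be pulled outside the expectation:
\[
\eqref{UCB-negative-bound} \;=\; \sum_{i,j} x_{\epsilon,i,j}^{*}\sum_{t=0}^{T-1}\mathbb{E}\bigl[r_{i,j}-\hat r_{i,j}(t)\bigr] \;\le\; \sum_{i,j} x_{\epsilon,i,j}^{*}\sum_{t=0}^{T-1}\mathbb{E}\bigl[(r_{i,j}-\hat r_{i,j}(t))^{+}\bigr].
\]
Because $\hat r_{i,j}(t)$ is designed as an upper-confidence estimate in both algorithms, the integrand vanishes whenever optimism holds, leaving only a per-$(i,j)$ one-sided tail calculation that is completely decoupled from the algorithm's actual allocation. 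The LP flow constraint $\sum_j x_{\epsilon,i,j}^{*}=\lambda_i$ then converts any uniform per-$(i,j)$ bound $F(M,T)$ directly into $F(M,T)\sum_i\lambda_i$.

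For UCB, the plan is the classical Hoeffding-plus-peeling argument. For fixed $n\ge 1$, Hoeffding gives $\Pr\bigl(\bar r_{i,j,n}+\sqrt{(\log T)/n}<r_{i,j}\bigr)\le T^{-2}$; a union bound over $n\in\{1,\ldots,t\}$ (handling the random count $N_{i,j}(t-1)$; the case $n=0$ is vacuous because the algorithm sets $\hat r_{i,j}(t)=\infty$) yields $\Pr(\hat r_{i,j}(t)<r_{i,j})\le t/T^{2}$. Bounding $(r_{i,j}-\hat r_{i,j}(t))^{+}\le 1$ on the bad event and summing over $t$ gives an $O(1)$ per-$(i,j)$ contribution; multiplying by $\sum_{i,j} x_{\epsilon,i,j}^{*}=\sum_i\lambda_i$ produces the stated UCB bound, with the constant $2C_\lambda$ absorbing numerical slack.

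For MOSS, the same decomposition works but the concentration step is substantially harder. The MOSS bonus $b_n=\sqrt{(2/n)\log(T/(Mn))}$ is smaller than UCB's and vanishes once $n\ge T/M$, so the ``probability-$\le$-one'' bound that worked for UCB produces only an $O(T)$ per-$(i,j)$ estimate rather than the needed $O(\sqrt{MT})$. Instead, I would integrate the Hoeffding tail explicitly: conditional on $N_{i,j}(t-1)=n$,
\[
\mathbb{E}\bigl[(r_{i,j}-\hat r_{i,j}(t))^{+}\mid N_{i,j}(t-1)=n\bigr] \;\le\; \int_0^1 e^{-2n(x+b_n)^{2}}\,dx,
\]
and then split on whether $n<T/M$ or $n\ge T/M$. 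Letting $T_n:=|\{t:N_{i,j}(t-1)=n\}|$ (so $\sum_n T_n=T$), the well-explored regime contributes at most $\sum_{n\ge T/M}T_n/(2\sqrt{n})\le\tfrac{1}{2}\sqrt{MT}$ via the Jensen bound $\mathbb{E}[(r_{i,j}-\bar r_{i,j,n})^{+}]\le 1/(2\sqrt{n})$, while the optimistic regime contributes an additional $O(\sqrt{MT}+M)$ via an Audibert--Bubeck-style peeling over $n$ using the Gaussian decay of the integral above. Combining gives the per-$(i,j)$ bound $O(\sqrt{MT}+M)$, which lifts to $(3\sqrt{MT}+6M)\sum_i\lambda_i$ after multiplying by $\sum_{i,j}x_{\epsilon,i,j}^{*}=\sum_i\lambda_i$.

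The main obstacle is clearly the MOSS case, specifically controlling the optimistic regime $n<T/M$ where the Hoeffding exponent is only $O(\log(T/(Mn)))$. Handling the boundary $n\approx T/M$ where $b_n\to 0^{+}$ without losing the $\sqrt{M}$ dependence requires a careful peeling argument that mirrors the standard MOSS regret proof but applied to a one-sided underestimation tail rather than the two-sided gap of the usual bandit analysis; the constants $3$ and $6$ in the stated bound are determined by the precise form of this peeling.
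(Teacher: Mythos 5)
Your overall decomposition is the same as the paper's: pull the deterministic $x^*_{\epsilon,i,j}$ out of the expectation, reduce to the one-sided underestimation tail $\mathbb E[(r_{i,j}-\hat r_{i,j}(t))^+]$ per pair $(i,j)$, and convert a uniform per-pair bound into the stated result via $\sum_j x^*_{\epsilon,i,j}=\lambda_i$. Your UCB half is essentially the paper's proof (Hoeffding for each possible sample count, union bound over counts up to $C_\lambda T$ --- note the count ranges up to $C_\lambda t$, not $t$, which is exactly where the factor $C_\lambda$ in $2C_\lambda\sum_i\lambda_i$ comes from rather than ``numerical slack'') and is fine.

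The MOSS half has a genuine gap. Your key step conditions on the random count, $\mathbb E[(r_{i,j}-\hat r_{i,j}(t))^+\mid N_{i,j}(t-1)=n]\le\int_0^1 e^{-2n(x+b_n)^2}dx$, and then weights by $T_n=|\{t:N_{i,j}(t-1)=n\}|$. This is not justified: the event $\{N_{i,j}(t-1)=n\}$ is determined by the algorithm's past allocations, which depend on the realized rewards, so conditionally on that event $\bar R_{i,j,n}$ is \emph{not} an average of $n$ i.i.d.\ samples and Hoeffding/Jensen cannot be applied to it directly; moreover $T_n$ is itself random and correlated with the rewards, so the sum $\sum_n T_n\cdot(\text{conditional bound})$ does not bound the expectation you need. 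The paper sidesteps both problems at once: using the ``rewards drawn in advance'' device, it bounds the underestimation at \emph{every} $t$ by the single random variable $\zeta_{i,j}=r_{i,j}-\min_{s\le C_\lambda T}\bigl(\bar R_{i,j,s}+\sqrt{\tfrac{2}{s}\log\tfrac{T}{Ms}}\bigr)$, so that $\mathbb P(r_{i,j}-\hat r_{i,j}(t)>y)\le\mathbb P(\zeta_{i,j}>y)$ uniformly in $t$ with no conditioning on $N_{i,j}$. It then controls $\mathbb P(\zeta_{i,j}>y)$ by a \emph{maximal} Hoeffding inequality (Lemma \ref{Hoeffding}) applied on dyadic blocks $2^j\le s\le 2^{j+1}$, obtaining $\mathbb P(\zeta_{i,j}>y)\le\bigl(\tfrac{9}{y^2}+\tfrac{6}{y^4}\bigr)\tfrac{M^2}{T^2}$ and hence $\mathbb E[\zeta_{i,j}^+]\le 3\sqrt{M/T}+6M/T$ per time slot, which multiplied by $T$ and $\sum_{i,j}x^*_{\epsilon,i,j}=\sum_i\lambda_i$ gives the stated constants. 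Your ``Audibert--Bubeck-style peeling over $n$'' for the regime $n<T/M$, which you yourself identify as the main obstacle, is precisely the step you leave unexecuted; if you carry it out rigorously you will be forced into a maximal-inequality/uniform-in-$n$ formulation equivalent to the paper's $\zeta_{i,j}$ construction, because any argument that fixes $n$ first runs into the dependence issue above.
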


Based on lemmas \ref{epsilon-lemma}, \ref{MAB-lemma} and \ref{MAB-negative-lemma}, we conclude that
\begin{itemize}
\item MOSS learning:
\begin{align}
 T \sum_{i,j}r_{i,j}x_{i,j}^* - \sum_{t=0}^{T-1} \mathbb E\left[\sum_{i,j}r_{i,j}x_{i,j}(t)\right] \leq& \frac{T\epsilon}{\delta}\sum_i \lambda_i + \frac{TB}{V} + 25\sqrt{MT} \sum_{i} \lambda_i.
\end{align}
\item UCB learning:
\begin{align}
 T \sum_{i,j}r_{i,j}x_{i,j}^* - \sum_{t=0}^{T-1} \mathbb E\left[\sum_{i,j}r_{i,j}x_{i,j}(t)\right] \leq&  \frac{T\epsilon}{\delta}\sum_i \lambda_i + \frac{TB}{V} + 8\sqrt{MT\log T} \sum_{i} \lambda_i.
\end{align}
\end{itemize}

\subsection{Constraint Violations}\label{sec: cv}
According to virtual queue update in Algorithm \ref{main alg}, we have 
\begin{align*}
    Q_j^{(k)}(t+1) =& \left[Q_j^{(k)}(t) + \sum_{i}w_{i,j}^{(k)}(t) x_{i,j}(t) - \rho_j^{(k)}(t) + \epsilon \right]^{+}\\
    \geq&Q_j^{(k)}(t) + \sum_{i}w_{i,j}^{(k)}(t) x_{i,j}(t) - \rho_j^{(k)}(t) + \epsilon,
\end{align*}
which implies $$\sum_{i}w_{i,j}^{(k)}(t) x_{i,j}(t) - \rho_j^{(k)}(t) + \epsilon \leq Q_j^{(k)}(t+1) - Q_j^{(k)}(t).$$
Summing over $t,$ we have
\begin{align}
\left(\sum_{t=0}^{T-1}  \mathbb E\left[\sum_{i} w_{i,j}^{(k)}(t)x_{i,j}(t) - \rho_j^{(k)}(t)\right]\right)^+ \leq \left(\mathbb E\left[ Q_j^{(k)}(T)\right]-T\epsilon\right)^{+}, \nonumber
\end{align}
which implies
\begin{align}
\sum_{j}\sum_{k} \left(\sum_{t=0}^{T-1}  \mathbb E\left[\sum_{i} w_{i,j}^{(k)}(t)x_{i,j}(t) - \rho_j^{(k)}(t)\right]\right)^+ \leq \sum_{j}\sum_{k} \left(\mathbb E\left[ Q_j^{(k)}(T)\right]-T\epsilon\right)^{+}, \label{queue and vioaltion}
\end{align} where $Q_j^{(k)}(0) = 0, \forall j, k.$ 

As shown in \eqref{queue and vioaltion}, we are able to bound constraint violations by $\left(\mathbb E[Q_j^{(k)}(T)]-T\epsilon\right)^{+}.$ We establish the upper bound on constraint violations in the following lemma by bounding $\mathbb E[Q_j^{(k)}(T)], \forall j, k.$
\begin{lemma}\label{lemma: cons violation} 
\begin{align*}
&\left(\sum_{t=0}^{T-1}  \mathbb E\left[\sum_{i} w_{i,j}^{(k)}(t)x_{i,j}(t)-\rho_j^{(k)}(t)\right] \right)^+ \\
&\leq \frac{3\sqrt{MK}\nu_{\max}^2}{\gamma}\log\left(\frac{2\nu_{\max}}{\gamma}\right)+\sqrt{MK}\nu_{\max}+\frac{4\sqrt{MK}(V \sum_i \lambda_i + B)}{\delta}-T\epsilon,
\end{align*}
where $\gamma = \frac{\delta}{2} - \epsilon$ and $\nu_{\max} = \max(\gamma, MKC_{\lambda}C_u).$
\end{lemma}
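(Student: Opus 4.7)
The plan is to bound $\mathbb{E}[Q_j^{(k)}(T)]$ via a Lyapunov/Hajek-style drift argument and then insert the result into the already-derived inequality \eqref{queue and vioaltion}. First, I would choose $\mathbf{x}_\epsilon \in \mathcal{X}_\epsilon$ exhibiting uniform slack in every constraint. Assumption \ref{assumption:slater} gives some $\mathbf{x} \in \mathcal{X}$ with constraint slack $\delta$; since $\delta \geq 4\epsilon$, a (possibly convex-combined) $\mathbf{x}_\epsilon$ can be taken in $\mathcal{X}_\epsilon$ with $\sum_i w_{i,j}^{(k)} x_{\epsilon,i,j} - \rho_j^{(k)} + \epsilon \leq -\gamma$ where $\gamma = \delta/2 - \epsilon > 0$. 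Substituting into the drift analysis \eqref{drif ucb} and bounding the reward term $V\,\mathbb{E}[\sum_{i,j}\hat r_{i,j}(t)(x_{i,j}(t) - \tilde x_{\epsilon,i,j}(t)) \mid \mathbf{H}(t)]$ by $V\sum_i \lambda_i$ (using that both allocations share the per-type totals $\Lambda_i(t)$ and that the mean rewards lie in $[0,1]$) produces
\begin{align*}
\mathbb{E}[L(t+1) - L(t) \mid \mathbf{H}(t)] \leq -\gamma\,\|\mathbf{Q}(t)\|_1 + V\sum_i \lambda_i + B.
\end{align*}

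Next I would convert this into a drift condition on $\|\mathbf{Q}(t)\|_2 = \sqrt{2L(t)}$: using $\|\mathbf{Q}\|_1 \geq \|\mathbf{Q}\|_2$, whenever $\|\mathbf{Q}(t)\|_2 \geq c_0 := 2(V\sum_i\lambda_i + B)/\gamma$ the $L$-drift is at most $-\tfrac{\gamma}{2}\|\mathbf{Q}(t)\|_2$, which via the identity $\sqrt{L(t+1)} - \sqrt{L(t)} \leq (L(t+1) - L(t))/\sqrt{2L(t)}$ on descents yields a negative drift of order $\gamma$ for $\|\mathbf{Q}\|_2$. Assumptions \ref{assumption:arrival}--\ref{assumption:constraints} bound the per-slot change $|Q_j^{(k)}(t+1) - Q_j^{(k)}(t)| \leq C_\lambda C_u + \epsilon$, so $\|\mathbf{Q}\|_2$ has increments controlled by the quantity $\nu_{\max} = \max(\gamma, MKC_\lambda C_u)$.

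Third, I would apply a Hajek-type drift tail bound to the non-negative process $\|\mathbf{Q}(t)\|_2$: bounded increments $\nu_{\max}$ combined with negative drift $\gamma$ outside the ball of radius $c_0$ yield a geometric tail of the form $\Pr[\|\mathbf{Q}(T)\|_2 \geq c_0 + r] \leq C\,e^{-\eta r}$ with $\eta = \Theta(\gamma/\nu_{\max}^2)$, and integrating the tail gives
\begin{align*}
\mathbb{E}[\|\mathbf{Q}(T)\|_2] \leq c_0 + \nu_{\max} + \frac{3\nu_{\max}^2}{\gamma}\log\!\left(\frac{2\nu_{\max}}{\gamma}\right).
\end{align*}
Since $Q_j^{(k)}(T) \leq \|\mathbf{Q}(T)\|_2$ and the reduction between $\|\cdot\|_1$-type drift and the per-constraint statement accounts for the $\sqrt{MK}$ factor, substituting this bound on $\mathbb{E}[Q_j^{(k)}(T)]$ into \eqref{queue and vioaltion} (with the $-T\epsilon$ correction) delivers the claimed inequality.

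The principal obstacle is the tail/drift step: extracting the precise constants $3\nu_{\max}^2/\gamma\cdot\log(2\nu_{\max}/\gamma)$ and $4\sqrt{MK}(V\sum_i\lambda_i + B)/\delta$ requires an essentially optimal choice of the Hajek exponent $\eta$ together with careful tracking of the $\sqrt{MK}$ dependence through the $\|\cdot\|_2$-versus-$\|\cdot\|_1$ comparison. A secondary subtlety is that under UCB/MOSS $\hat r_{i,j}(t) = \infty$ for unvisited $(i,j)$ pairs, so the absorption step $V\mathbb{E}[\sum \hat r\,(x - \tilde x_\epsilon)] \leq V\sum_i\lambda_i$ must argue that such slots occur only $O(MN)$ times and that their contribution can be folded into the constants without affecting the stated form.
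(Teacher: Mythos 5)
Your proposal follows the paper's proof essentially step for step: telescope the virtual-queue update to reduce the violation to $\left(\mathbb E[Q_j^{(k)}(T)]-T\epsilon\right)^+$, establish negative drift of $\|\mathbf Q(t)\|_2$ outside a ball of radius $\Theta(V)$ using the Slater point, check the bounded-increment condition, and invoke a Hajek-type concentration result (the paper uses the moment-generating-function form in Lemma \ref{drif lemma} followed by Jensen's inequality, rather than tail integration, but these are interchangeable). The one caveat is bookkeeping: the paper retains the full slack $-(3\delta/4-\epsilon)$ in the drift and sets the threshold $\theta = 4(V\sum_i\lambda_i+B)/\delta$ so that the residual drift is exactly $-\gamma$, whereas your threshold $2(V\sum_i\lambda_i+B)/\gamma$ with drift $-\gamma/2$ gives the same order but constants slightly weaker than those stated in the lemma --- a discrepancy you correctly flag as the remaining work.
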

This result implies that the constraint violations are bounded by a constant that depends on $K, C_\lambda, C_u, \delta, \epsilon$, $B$ and $V.$

The key to prove Lemma \ref{lemma: cons violation} is to establish an upper bound on $\mathbb E[\sum_j\sum_k Q_j^{(k)}(T)].$  
We next present a lemma that can be used to bound $\mathbb E[\sum_j\sum_k Q_j^{(k)}(T)],$ which is derived in \cite{Nee_16} and could be regarded as an application of \cite{Haj_82}.
\begin{lemma} \label{drif lemma}
Let $S(t)$ be the state of Markov chain, $L(t)$ be a Lyapunov function and its drift $\Delta(t) = L(t+1)-L(t).$ Given the constants $\gamma$ and $\nu_{\max}$ with $0 < \gamma \leq \nu_{\max},$ suppose the expected drift $\mathbb E[\Delta(t) | S(t)=s]$ satisfies the following conditions:

(i) There exists constants $\gamma > 0$ and $\theta > 0$ such that $\mathbb E[\Delta(t)|S(t)=s] \leq -\gamma$ when $V(t) \geq \theta.$

(ii) $|L(t+1) - L(t)| \leq \nu_{\max}$ holds with probability one.

Then we have
\begin{align*}
    \mathbb E[e^{r L(t)}] 
    \leq 1 + \frac{2e^{r(\nu_{\max}+\theta)}}{r\gamma},
\end{align*}
where $r = \frac{\gamma}{\nu_{\max}^2+\nu_{\max}\gamma/3}.$
\end{lemma}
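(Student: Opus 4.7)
The plan is to analyze the moment generating function $Y(t)=e^{rL(t)}$ and establish a one-step scalar recursion of the form $\mathbb{E}[Y(t+1)\mid S(t)] \leq (1-\alpha)\,Y(t) + \beta$ for suitable $\alpha\in(0,1)$ and $\beta>0$, then iterate. This is the classical exponential-drift template (Hajek/Neely), and the specific choice $r=\gamma/(\nu_{\max}^2+\nu_{\max}\gamma/3)$ is engineered precisely so that a Bernstein-type quadratic correction balances cleanly against the linear drift supplied by condition (i).

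The first step is to factor $\mathbb{E}[Y(t+1)\mid S(t)=s] = Y(t)\cdot\mathbb{E}[e^{r\Delta(t)}\mid S(t)=s]$ and bound the inner expectation. Because $|\Delta(t)|\leq\nu_{\max}$ almost surely by condition (ii), the Bernstein-type exponential inequality gives $\mathbb{E}[e^{r\Delta(t)}\mid S(t)=s] \leq 1 + r\,\mathbb{E}[\Delta(t)\mid S(t)=s] + \frac{r^2\nu_{\max}^2}{2(1-r\nu_{\max}/3)}$, valid whenever $r\nu_{\max}<3$. For the stated $r$ one computes $r\nu_{\max} = 3\gamma/(3\nu_{\max}+\gamma) \leq 3/4$ under $\gamma\leq\nu_{\max}$, and a short algebraic reduction shows the quadratic correction equals exactly $r\gamma/2$. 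A case split on $L(t)$ then closes the loop: if $L(t)\geq\theta$, condition (i) gives $\mathbb{E}[\Delta(t)\mid S(t)=s]\leq -\gamma$, so $\mathbb{E}[e^{r\Delta(t)}\mid S(t)=s] \leq 1-r\gamma/2$ and hence $\mathbb{E}[Y(t+1)\mid S(t)=s]\leq(1-r\gamma/2)\,Y(t)$; if $L(t)<\theta$, condition (ii) forces $L(t+1)\leq\theta+\nu_{\max}$ deterministically, so $Y(t+1)\leq e^{r(\nu_{\max}+\theta)}$ pointwise. Combining the two regions yields the uniform inequality $\mathbb{E}[Y(t+1)\mid S(t)] \leq (1-r\gamma/2)\,Y(t) + e^{r(\nu_{\max}+\theta)}$.

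Finally, taking total expectations and iterating this scalar recursion from $t=0$ gives $\mathbb{E}[Y(t)] \leq (1-r\gamma/2)^t\,\mathbb{E}[Y(0)] + \frac{2 e^{r(\nu_{\max}+\theta)}}{r\gamma}$, and the initial condition $L(0)=0$ from the virtual-queue application makes $Y(0)=1$, matching the claimed bound $1 + 2e^{r(\nu_{\max}+\theta)}/(r\gamma)$. The main technical hurdle I anticipate is the algebraic verification that the particular $r$ collapses the Bernstein correction to exactly $r\gamma/2$ (together with the side check that $r\nu_{\max}<3$); once those are in hand, the rest is a routine geometric-series computation, and no distributional information beyond the almost-sure bounded-increment hypothesis (ii) and the conditional drift hypothesis (i) is needed.
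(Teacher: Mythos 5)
Your argument is correct, and the key computations check out: with $r=\gamma/(\nu_{\max}^2+\nu_{\max}\gamma/3)$ one has $r\nu_{\max}=3\gamma/(3\nu_{\max}+\gamma)\leq 3/4$ under $\gamma\leq\nu_{\max}$, and the Bernstein correction $\frac{r^2\nu_{\max}^2}{2(1-r\nu_{\max}/3)}$ does collapse to exactly $r\gamma/2$, so the two-region recursion $\mathbb E[e^{rL(t+1)}\mid S(t)]\leq(1-r\gamma/2)e^{rL(t)}+e^{r(\nu_{\max}+\theta)}$ and the geometric-series iteration go through. Note, however, that the paper does not prove this lemma at all: it imports the statement from Neely (2016), describing it as an application of Hajek (1982), so there is no in-paper proof to compare against. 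Your write-up is essentially a reconstruction of that standard exponential-drift argument, which is the right (and really the only) route. Two small points worth making explicit if you flesh this out: (a) the constant $1$ in the conclusion requires $L(0)=0$ (or at least $\mathbb E[e^{rL(0)}]\leq 1$), which you correctly flag as coming from the virtual-queue initialization $Q(0)=0$ but which is not stated as a hypothesis of the lemma; and (b) the condition ``$V(t)\geq\theta$'' in the paper's statement of condition (i) is a typo for ``$L(t)\geq\theta$,'' and your reading is the intended one. You should also record the elementary inequality $\mathbb E[e^{rX}]\leq 1+r\mathbb E[X]+\frac{r^2\nu^2}{2(1-r\nu/3)}$ for $|X|\leq\nu$ a.s.\ and $r\nu<3$ (via $k!\geq 2\cdot 3^{k-2}$), since it is the one nontrivial ingredient you invoke.
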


Though we have analyzed the conditional expected drift of Lyapunov function $\sum_j\sum_{k} \left(Q_j^{(k)}(t)\right)^2$ in Lemma \ref{drif lemma}, we note that cannot be used as a Lyapunov function because condition ii) is not satisfied. Therefore, we consider Lyapunov function \begin{align*}
\bar L(t) = \sqrt{\sum_j\sum_{k} \left(Q_j^{(k)}(t)\right)^2} = ||\mathbf Q(t)||_2
\end{align*} as in \cite{ErySri_12} and prove conditions i) and ii) in Lemma \ref{drif lemma} for $\bar L(t)$ are satisfied in the following subsection.

\subsubsection{\textbf{Drift condition}} 
Given $\mathbf H(t)=\mathbf h$ and $\bar L(t)\geq \theta = \frac{4(V\sum_i\lambda_i + B)}{\delta},$ the conditional expected drift of $\bar L(t)$ is 
\begin{align*}
&\mathbb E[ ||\mathbf Q(t+1)||_2 - ||\mathbf Q(t)||_2 | \mathbf H(t) = \mathbf h] \\
=& \mathbb E\left[ \sqrt{||\mathbf Q(t+1)||_2^2} - \sqrt{||\mathbf Q(t)||_2^2} | \mathbf H(t) = \mathbf h\right] \\
\leq & \frac{1}{2||\mathbf Q||_2} \mathbb E[ ||\mathbf Q(t+1)||_2^2 - ||\mathbf Q(t)||_2^2 | \mathbf H(t) = \mathbf h] \\
\leq & -\left(\frac{3\delta}{4} - \epsilon\right)\frac{||\mathbf Q||_1}{||\mathbf Q||_2} + \frac{V \sum_i\lambda_i + B}{||\mathbf Q||_2} \\
\leq&  -\left(\frac{3\delta}{4} - \epsilon\right) + \frac{V \sum_i\lambda_i + B}{||\mathbf Q||_2} \\
\leq&  -\left(\frac{3\delta}{4} - \epsilon\right) + \frac{V \sum_i\lambda_i + B}{\theta} = -\left(\frac{\delta}{2} -\epsilon\right)
\end{align*}
where the first inequality holds because $\sqrt{x}$ is a concave function; the second inequality holds by Lemma \ref{negative drift} below; the third inequality holds because $||\mathbf Q||_1 \geq ||\mathbf Q||_2;$ and the last inequality holds given $||\mathbf Q||_2 \geq \theta.$
\begin{restatable}{lemma}{driflemma} 
\label{negative drift}
Let $L(t) = \frac{1}{2}\sum_j\sum_{k} \left(Q_j^{(k)}(t)\right)^2.$ Under POND algorithm, the conditional expected drift is  
\begin{align*}
\mathbb E[L(t+1)-L(t)|\mathbf H(t)=\mathbf h] \leq-\left(\frac{3\delta}{4} - \epsilon\right)\sum_j\sum_{k}Q_j^{(k)}(t) + V \sum_i\lambda_i + B.  
\end{align*}
\end{restatable}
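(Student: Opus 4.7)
The plan is to start from the drift inequality derived just above in \eqref{drif ucb}, namely
\begin{align*}
\mathbb E[L(t{+}1)-L(t)\mid \mathbf H(t)=\mathbf h]
\leq \sum_{j,k} Q_j^{(k)} \bigl(\sum_i w_{i,j}^{(k)} x_{\epsilon,i,j} - \rho_j^{(k)} + \epsilon\bigr)
- V\,\mathbb E\Bigl[\sum_{i,j} \hat r_{i,j}(t)\,(x_{\epsilon,i,j}-x_{i,j}(t))\Bigm|\mathbf h\Bigr] + B,
\end{align*}
valid for any $\mathbf x_{\epsilon}\in\mathcal X_{\epsilon}$ by Lemma~\ref{lemma: drift-inequality}. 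The key choice will be to plug in the Slater-feasible solution $\mathbf x'$ from Assumption~\ref{assumption:slater}. Since $\sum_i w_{i,j}^{(k)} x'_{i,j}-\rho_j^{(k)}\leq -\delta$ and $\delta\geq 4\epsilon\geq \epsilon$, this $\mathbf x'$ lies in $\mathcal X_{\epsilon}$, so substituting it makes the first summand bounded above by $-(\delta-\epsilon)\sum_{j,k}Q_j^{(k)}$.

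Next I will handle the reward-mismatch term, rewriting it as $-V\sum_{i,j}\hat r_{i,j}(t) x'_{i,j} + V\,\mathbb E[\sum_{i,j}\hat r_{i,j}(t) x_{i,j}(t)\mid\mathbf h]$. The first piece is non-positive because $\hat r\geq 0$ and $x'\geq 0$, hence can be dropped. The second piece I will bound by $V\sum_i\lambda_i$: since the UCB/MOSS index takes values in $[0,1]$ (via the standard truncation to the known reward range), and $\sum_j x_{i,j}(t)=\Lambda_i(t)$ with $\Lambda_i(t)$ independent of $\mathbf H(t)$ and mean $\lambda_i$, one has $V\,\mathbb E[\sum_{i,j}\hat r_{i,j}(t)x_{i,j}(t)\mid\mathbf h]\leq V\sum_i\mathbb E[\Lambda_i(t)]=V\sum_i\lambda_i$.

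Collecting the three bounds gives $\mathbb E[L(t{+}1)-L(t)\mid\mathbf h]\leq -(\delta-\epsilon)\sum_{j,k}Q_j^{(k)} + V\sum_i\lambda_i + B$, and since $\delta\geq 0$ implies $\delta-\epsilon\geq \tfrac{3\delta}{4}-\epsilon$, this is no larger than the asserted bound $-(\tfrac{3\delta}{4}-\epsilon)\sum_{j,k}Q_j^{(k)} + V\sum_i\lambda_i + B$. The small extra $\delta/4$ slack will be useful in the \textbf{Drift condition} step later when an $\tfrac{||\mathbf Q||_1}{||\mathbf Q||_2}$ factor is absorbed.

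The main technical obstacle in executing the plan is the reward-term control: without the $[0,1]$ clipping, the UCB/MOSS index can exceed one and is even $+\infty$ at uninitialized arms, so the clean estimate $V\sum_i\lambda_i$ requires either the harmless truncation $\hat r_{i,j}(t)\leftarrow \min(1,\hat r_{i,j}(t))$ (justified since the true reward optimum lies in $[0,1]$) or a short auxiliary argument accounting separately for the $O(MN)$ initialization slots. A secondary subtlety, already absorbed inside Lemma~\ref{lemma: drift-inequality}, is that $\mathbf x'$ is a fluid allocation with $\sum_j x'_{i,j}=\lambda_i\neq \Lambda_i(t)$; the reduction to the deterministic Slater residual $\sum_i w_{i,j}^{(k)} x'_{i,j}-\rho_j^{(k)}$ uses the independence of $\{\Lambda_i(t)\},\{w_{i,j}^{(k)}(t)\},\{\rho_j^{(k)}(t)\}$ from the state $\mathbf H(t)$, after MaxWeight is applied to the rescaled comparison $\tilde x_{i,j}=(\Lambda_i(t)/\lambda_i)\,x'_{i,j}$.
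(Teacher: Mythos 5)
Your proposal is correct and follows essentially the same route as the paper's proof: instantiate the drift bound \eqref{drif ucb} at the Slater point of Assumption \ref{assumption:slater} (which lies in $\mathcal X_{\epsilon}$ since $\delta\geq\epsilon$), drop the nonpositive $-V\sum_{i,j}\hat r_{i,j}(t)x_{\epsilon,i,j}$ term, bound $V\,\mathbb E[\sum_{i,j}\hat r_{i,j}(t)x_{i,j}(t)\mid\mathbf h]$ by $V\sum_i\lambda_i$, and convert the Slater slack into the negative drift coefficient (the paper does this via Lemma \ref{lemma: tightness}, you do it directly and get the slightly stronger $-(\delta-\epsilon)$). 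The caveat you flag about the reward-term bound is well taken: the paper justifies that step ``because rewards are $[0,1]$'' even though $\hat r_{i,j}(t)$ is the optimistic index (which exceeds $1$ and is $+\infty$ on uninitialized pairs), so your proposed truncation or separate accounting of initialization slots is actually needed to make that inequality rigorous.
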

The proof of the lemma can be found in Appendix \ref{app: drift-lemma}.

Moreover, for condition ii) in Lemma \ref{drif lemma}, we have 
\begin{align*}
||\mathbf Q(t+1)||_2 - ||\mathbf Q(t)||_2 \leq ||\mathbf Q(t+1) - \mathbf Q(t)||_2 \leq ||\mathbf Q(t+1) - \mathbf Q(t)||_1 \leq M K C_{\lambda}C_u,
\end{align*}
where the last inequality holds because $|Q_j^{(k)}(t+1)-Q_j^{(k)}(t)| \leq C_{\lambda}C_u, \forall j, k,$ based on Assumption \ref{assumption:arrival} and \ref{assumption:constraints}.

\subsubsection{\textbf{Establishing bounds on $\mathbb E[\sum_j\sum_{k} Q_j^{(k)}(t)]$}}\label{sec: bounds on queue}

Let $\theta = \frac{4(V \sum_i\lambda_i + B)}{\delta},$ $\gamma = \frac{\delta}{2} - \epsilon,$ and $\nu_{\max} = \max(\gamma, MKC_{\lambda}C_u).$ We apply Lemma \ref{drif lemma} for $\bar L(t)$ and obtain 
\begin{align*}
    \mathbb E\left[e^{r ||\mathbf Q(t)||_2}\right] 
    \leq 1 + \frac{2e^{r(\nu_{\max}+\theta)}}{r\gamma}, ~ r = \frac{\gamma}{\nu_{\max}^2 + \nu_{\max}\gamma/3},
\end{align*}
which implies that
\begin{align*}
    \mathbb E\left[e^{\frac{r}{\sqrt{MK}} ||\mathbf Q(t)||_1}\right] 
    \leq 1 + \frac{2e^{r(\nu_{\max}+\theta)}}{r\gamma},
\end{align*} because $||\mathbf Q(t)||_1 \leq \sqrt{MK}||\mathbf Q(t)||_2.$
By Jensen's inequality, we have 
$$e^{\frac{r}{\sqrt{MK}} \mathbb E\left[||\mathbf Q(t)||_1\right]} \leq \mathbb E\left[e^{\frac{r}{\sqrt{MK}} ||\mathbf Q(t)||_1}\right],$$ which implies
\begin{align*}
    \mathbb E\left[||\mathbf Q(t)||_1\right]
    \leq& \frac{\sqrt{MK}}{r}\log\left(1+\frac{2e^{r(\nu_{\max}+\theta)}}{r\gamma}\right) \\ 
    \leq& \frac{\sqrt{MK}}{r}\log\left(1+\frac{8\nu_{\max}^2}{3\gamma^2}e^{r(\nu_{\max}+\theta)}\right) \\
    \leq& \frac{\sqrt{MK}}{r}\log\left(\frac{11\nu_{\max}^2}{3\gamma^2}e^{r(\nu_{\max}+\theta)}\right)\\
    \leq& \frac{3\sqrt{MK}\nu_{\max}^2}{\gamma}\log\left(\frac{2\nu_{\max}}{\gamma}\right)+\sqrt{MK}\nu_{\max}+\sqrt{MK}\theta\\
    =& \frac{3\sqrt{MK}\nu_{\max}^2}{\gamma}\log\left(\frac{2\nu_{\max}}{\gamma}\right)+\sqrt{MK}\nu_{\max}+\frac{4\sqrt{MK}(V\sum_i\lambda_i + B)}{\delta}
\end{align*}
where the second, third and fourth inequalities hold because $r = \frac{\gamma}{\nu_{\max}^2+\nu_{\max}\gamma/3}$ and $0 < \gamma \leq \nu_{\max}.$

\subsection{Proving Theorem \ref{thm:formal-UCB} and \ref{thm:formal-MOSS}.}
Based on the results in subsections \ref{sec: regret} and \ref{sec: cv} above, we obtain the regret and constraint violations in Theorem \ref{thm:formal-MOSS} 
\begin{align*}
\sum_{t=0}^{T-1} \mathbb E\left[\sum_{i,j}r_{i,j}(x_{i,j}^*-x_{i,j}(t))\right] \leq&  \frac{T\epsilon}{\delta}\sum_i\lambda_i + \frac{TB}{V} + 25\sqrt{MT} \sum_{i} \lambda_i, \\
\left(\sum_{t=0}^{T-1}  \mathbb E\left[\sum_{i} w_{i,j}^{(k)}(t)x_{i,j}(t)-\rho_j^{(k)}(t)\right] \right)^+ 
\leq& \frac{3\sqrt{MK}\nu_{\max}^2}{\gamma}\log\left(\frac{2\nu_{\max}}{\gamma}\right)+\sqrt{MK}\nu_{\max}+\frac{4\sqrt{MK}(V\sum_i\lambda_i + B)}{\delta}-T\epsilon.
\end{align*}
Let $\epsilon = \frac{4V\sqrt{MK}\sum_i\lambda_i}{T\delta},$ the constraint violations satisfy
\begin{align*}
\left(\sum_{t=0}^{T-1}  \mathbb E\left[\sum_{i} w_{i,j}^{(k)}(t)x_{i,j}(t)-\rho_j^{(k)}(t)\right] \right)^+ \leq \frac{3\sqrt{MK}\nu_{\max}^2}{\gamma}\log\left(\frac{2\nu_{\max}}{\gamma}\right) + \sqrt{MK}\nu_{\max} + \frac{4\sqrt{MK}B}{\delta}.
\end{align*}
Theorem \ref{thm:formal-MOSS} holds by choosing $V =\frac{\delta}{2\sum_i\lambda_i} \sqrt{\frac{TB}{\sqrt{MK}}}.$ Theorem \ref{thm:formal-UCB} can be proved by following the same steps.

\section{Experiments}\label{sec: experiments}
In this section, we present simulations, using synthetic experiments and a real dataset from online tutoring, which demonstrate the performance of our POND algorithm. 
In particular, we show that POND achieves the $O(\sqrt{T})$ regret and $O(1)$ constraint violations with ``tightness'', while without tightness, the algorithm achieves $O(\sqrt{T})$ regret and $O(\sqrt{T})$ constraint violations. We also see that POND outperforms the Explore-Then-Commit algorithm (baseline) significantly in the experiments.

As mentioned in Subsection \ref{sec: pond}, UCB performs better than MOSS in practice. So we use $\hat r_{i,j}(t) = \bar r_{i,j}(t-1) + \sqrt{\frac{\log{T}}{N_{i,j}(t-1)}},$ instead of MOSS learning. Our baseline is the Explore-Then-Commit algorithm, which uses the same UCB to explore for $NM\log T$ time slots to estimate the parameters (e.g. reward $\hat{\mathbf r}$ and arrival $\hat{\boldsymbol \lambda}$), solves \eqref{obj-fluid}-\eqref{resource limit-fluid} with the estimated parameters to obtain $\{\hat x_{i,j}^*\},$ and then uses $\hat x_{i,j}^*/\hat \lambda_i$ as the probability to allocate the incoming $i^{\text{th}}$ job to the server $j.$

\subsection{Synthetic Example}
We considered a model with two types of jobs and four servers. In particular, we assumed geometric arrivals with mean $\boldsymbol \lambda = [1.0, 2.0],$ Bernoulli rewards with mean $\mathbf r =\begin{bmatrix}
0.5 & 0.6 & 0.1 & 0.2\\
0.2 & 0.6 & 0.5 & 0.2
\end{bmatrix},$  capacity constraints $\sum_{i=1}^2x_{i,j} \leq \mu_j$ with $\boldsymbol \mu = [0.85, 0.85, 0.8, 0.8];$
fairness constraints $\sum_{i=1}^2 x_{i,j} \geq d_j\sum_{i=1}^2 \lambda_i$ with $\mathbf d = [0.25, 0.25, 0.20, 0.20],$
resource constraints $\sum_{i=1}^2w_{i,j }x_{i,j} \leq \rho_j$ with $\mathbf w = \begin{bmatrix}
2 & 2 & 2 & 2\\
4 & 4 & 4 & 3.5
\end{bmatrix}$
and $\boldsymbol \rho = [3, 3, 2.5, 2.5].$ Let $V=2\sqrt{T}$. We compared POND with ``tightness'' $\epsilon=O(1/\sqrt{T}),$ and ``no tightness'', i.e. $\epsilon=0.$ 

We simulated POND and Explore-Then-Commit over $T$ time slots with $T=[50^2, 75^2, 100^2, 125^2, 150^2],$ where $500$ independent trials were averaged for each $T.$ We plotted the regret, capacity violation, fairness violation and resource violation against $\sqrt{T}$ in Figure \ref{fig:regret and cv v.s. time}, where we used the \emph{maximum} average violation among four servers for each type of constraint violations. Figure \ref{fig:regret and cv v.s. time} shows that using POND with tightness constants $\epsilon = 0.5/\sqrt{T}$ and $1/\sqrt{T}$, POND achieved $O(\sqrt{T})$ regret as in Figure \ref{fig:regret and cv v.s. time-regret} and $O(1)$ constraint violation as in Figure \ref{fig:regret and cv v.s. time-cv}-\ref{fig:regret and cv v.s. time-rv}. Without the tightness constant, POND achieved $O(\sqrt{T})$ regret but $O(\sqrt{T})$ constraint violation as shown by the orange curve in Figure~\ref{fig:regret and cv v.s. time-cv}. These numerical results are consistent with our theoretical analysis.
The experimental results also show that using the tightness constant is critical to achieve the $O(1)$ constraint violations. Moreover, POND performed much better than Explore-Then-Commit by achieving both lower regret and lower constraint violations. 

\begin{figure}[h]
\centering
\begin{subfigure}{.24\textwidth}
  \centering
  \includegraphics[width=1.0\linewidth]{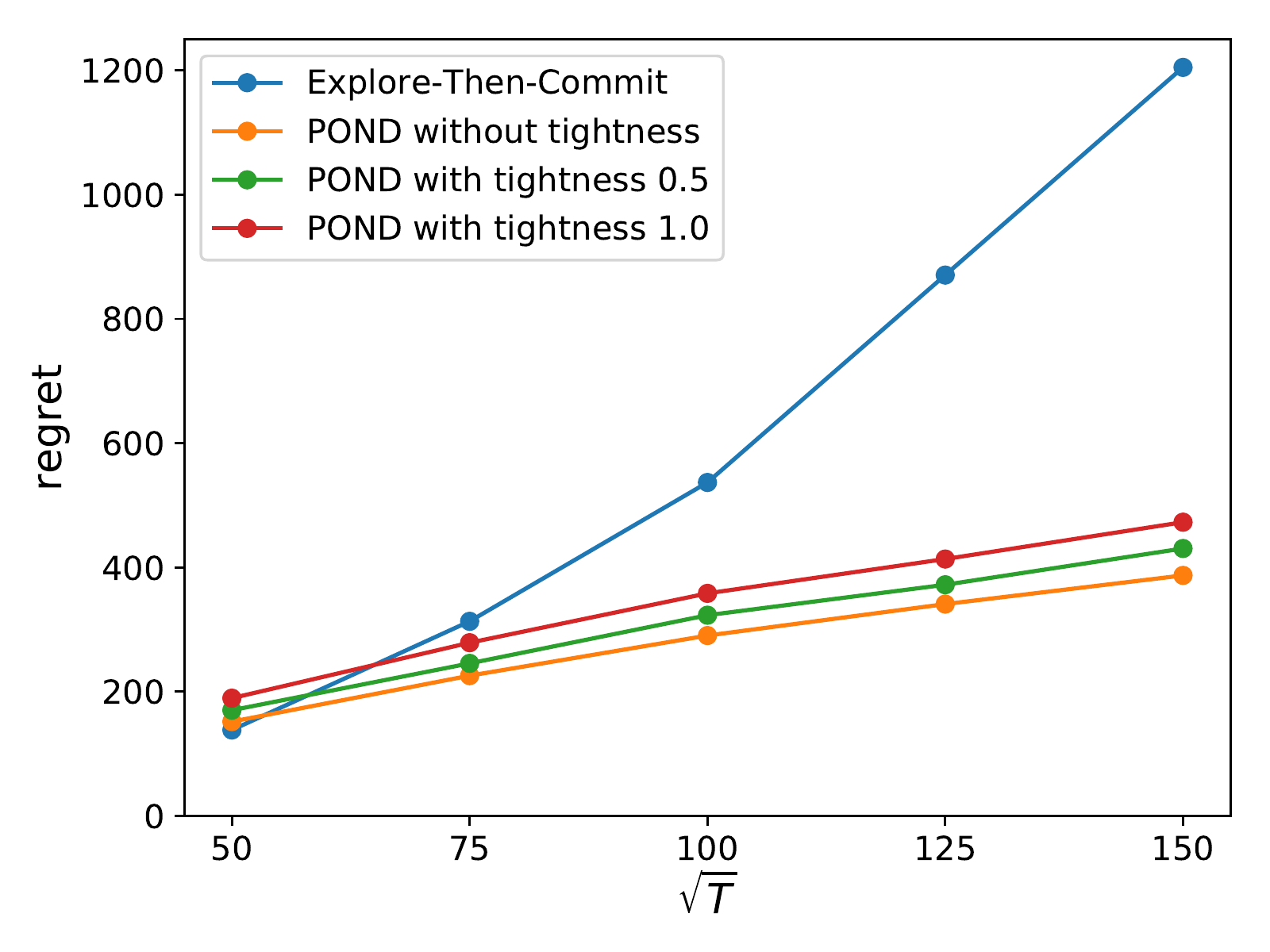}
  \caption{Regret}
  \label{fig:regret and cv v.s. time-regret}
\end{subfigure}%
\begin{subfigure}{.24\textwidth}
  \centering
  \includegraphics[width=1.0\linewidth]{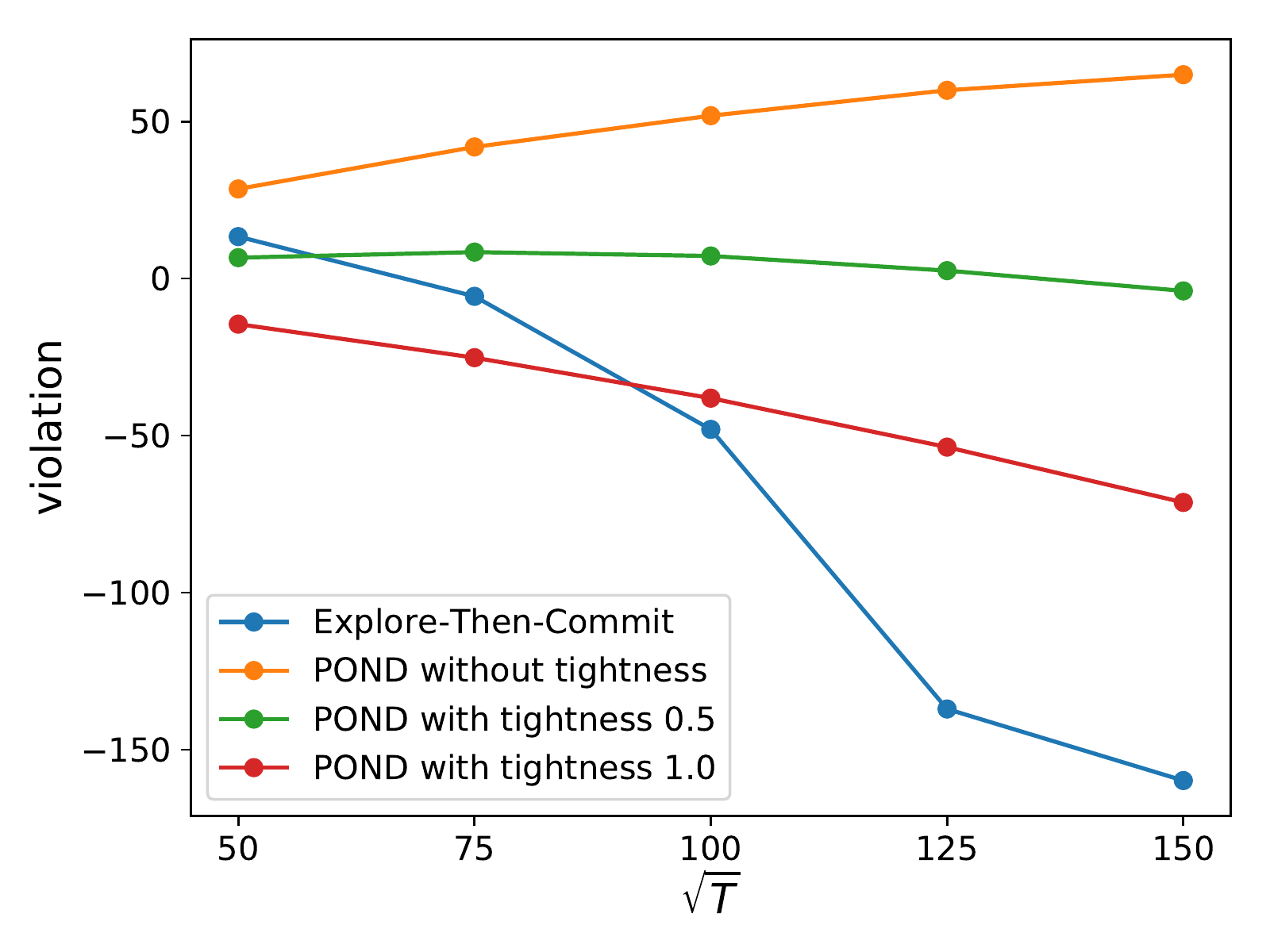}
  \caption{Capacity violation}
    \label{fig:regret and cv v.s. time-cv}
\end{subfigure}
\begin{subfigure}{.24\textwidth}
  \centering
  \includegraphics[width=1.0\linewidth]{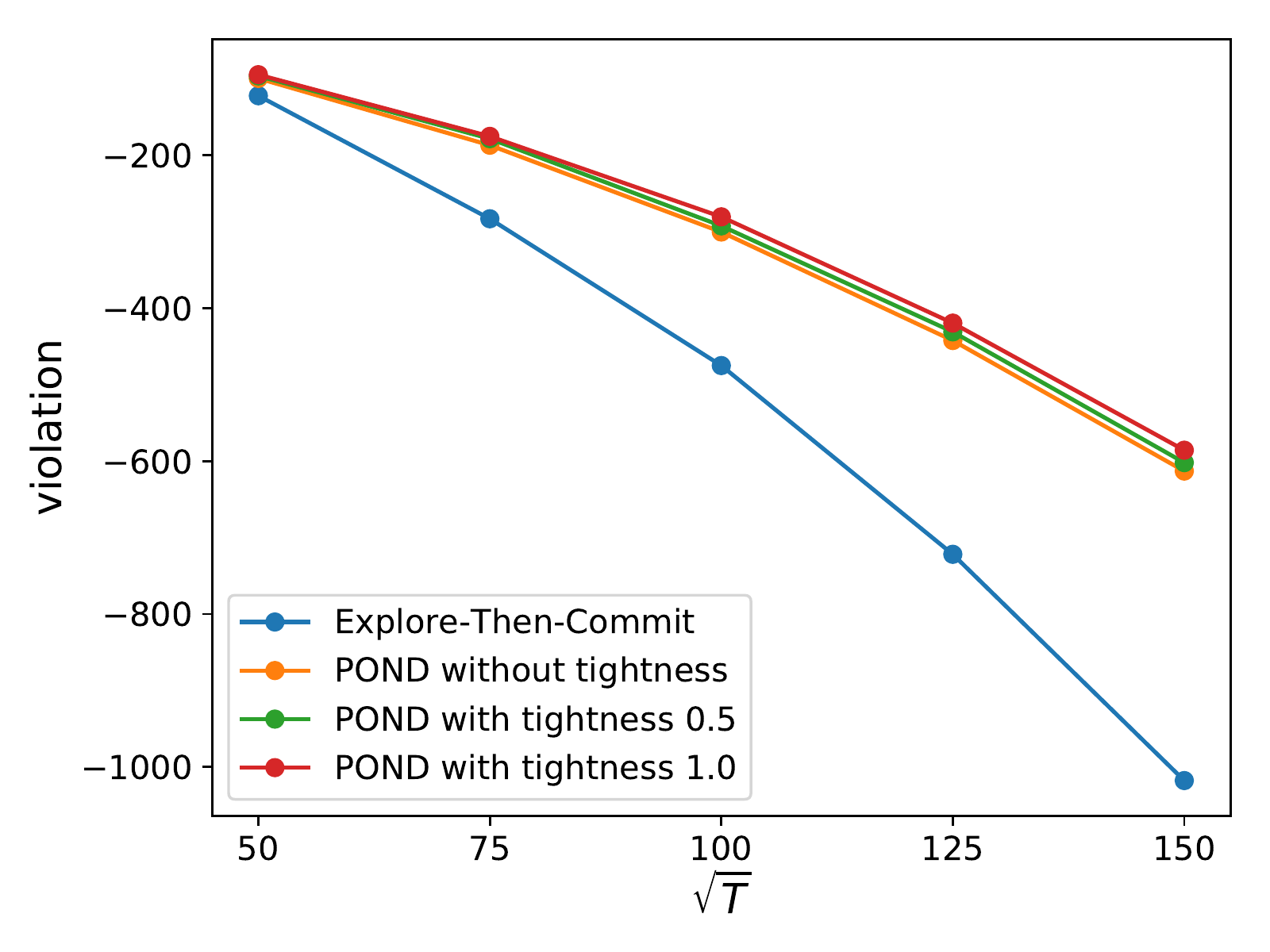}
  \caption{Fairness violation}
  \label{fig:regret and cv v.s. time-fv}
\end{subfigure}
\begin{subfigure}{.24\textwidth}
  \centering
  \includegraphics[width=1.0\linewidth]{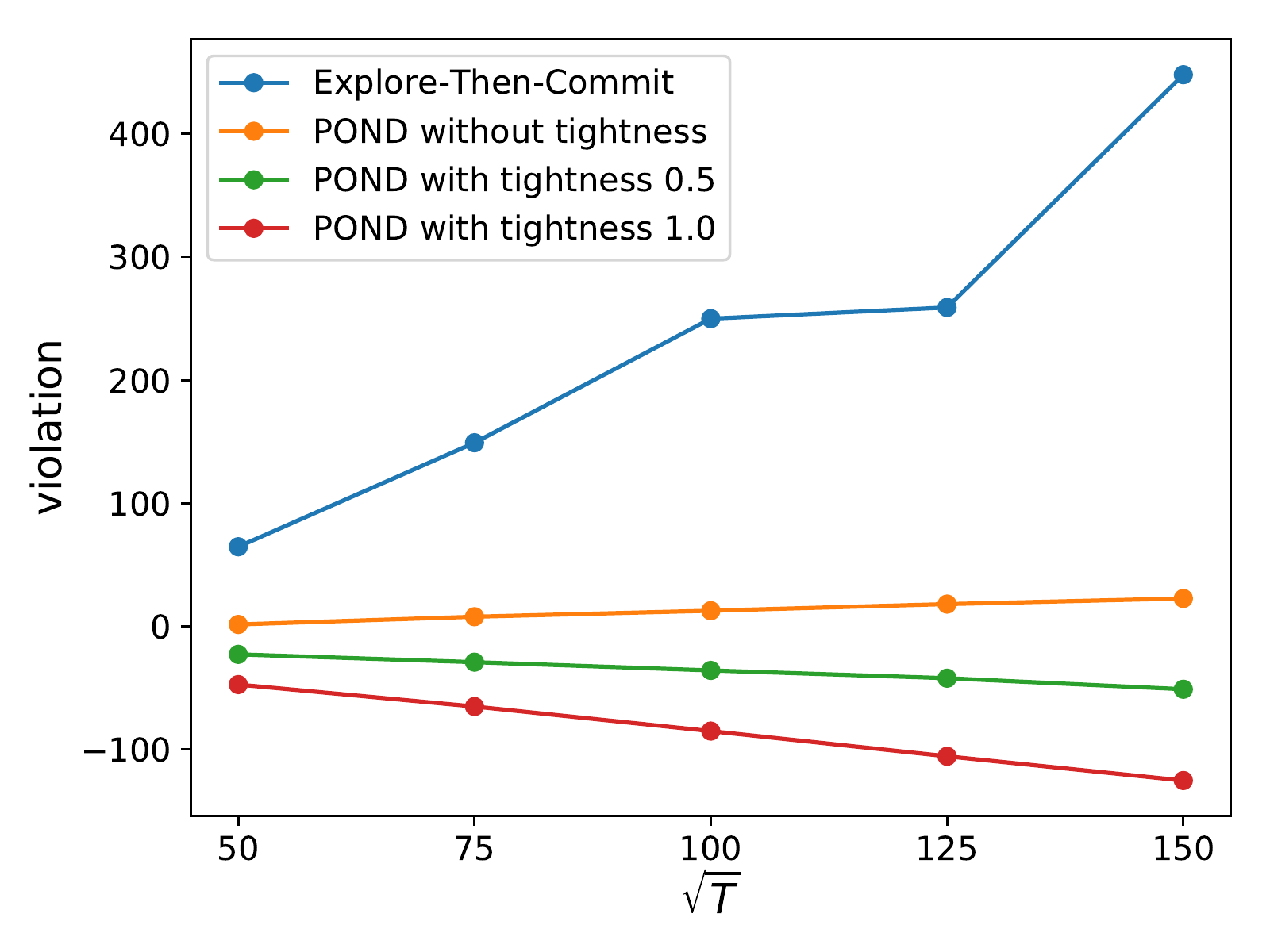}
  \caption{Resource violation}
   \label{fig:regret and cv v.s. time-rv}
\end{subfigure}
\caption{Regret and constraint violation v.s. $T$.}
\label{fig:regret and cv v.s. time}
\end{figure}

To further demonstrate the impact of the tightness constant on the constraint violations, we plotted the trajectories of regret and constraint violations for $T=10,000,$ with and without tightness, in Figures \ref{fig:trjectory with tightness} and \ref{fig:trjectory without tightness}, respectively. The shade regions in these figures include the trajectories of all trials and the thick curves within the shade regions are the corresponding average values. 
Figures \ref{fig:trjectory with tightness-cv}-\ref{fig:trjectory with tightness-rv} show that the violations (capacity and fairness violations) increase then decrease to negative. In contrast, the pink curve in Figure \ref{fig:trjectory without tightness-cv} shows that the capacity violation of server 1 increases to around 50, which is in the order of $O(\sqrt{T})$,  which corresponds to the middle point in the orange line in Figure \ref{fig:regret and cv v.s. time-cv}. The observations here further demonstrate the effectiveness of using tightness to reduce constraint violations.

\begin{figure}[h]
\centering
\begin{subfigure}{.24\textwidth}
  \centering
  \includegraphics[width=1.0\linewidth]{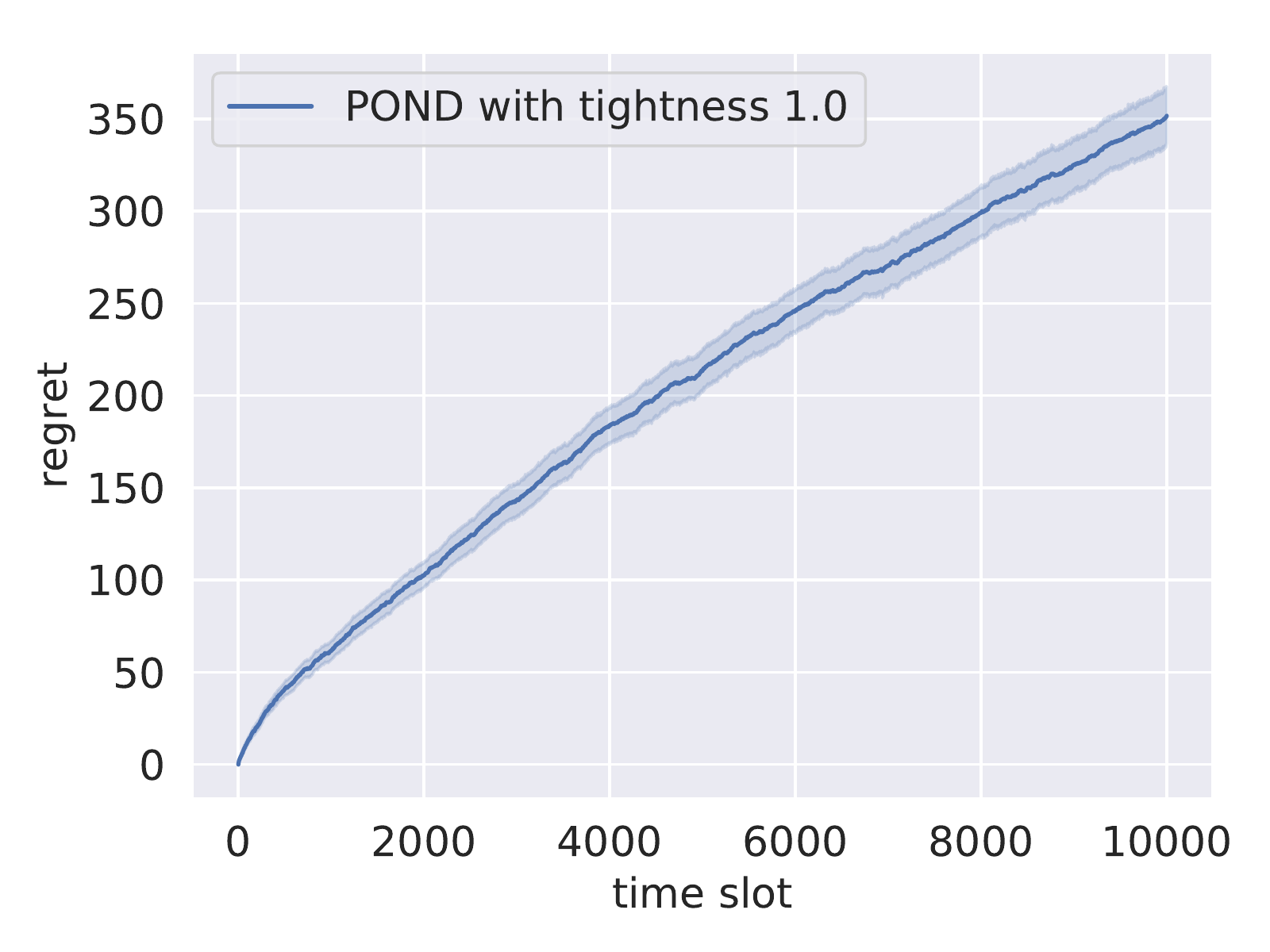}
  \caption{Regret}
  \label{fig:trjectory with tightness-regret}
\end{subfigure}%
\begin{subfigure}{.24\textwidth}
  \centering
  \includegraphics[width=1.0\linewidth]{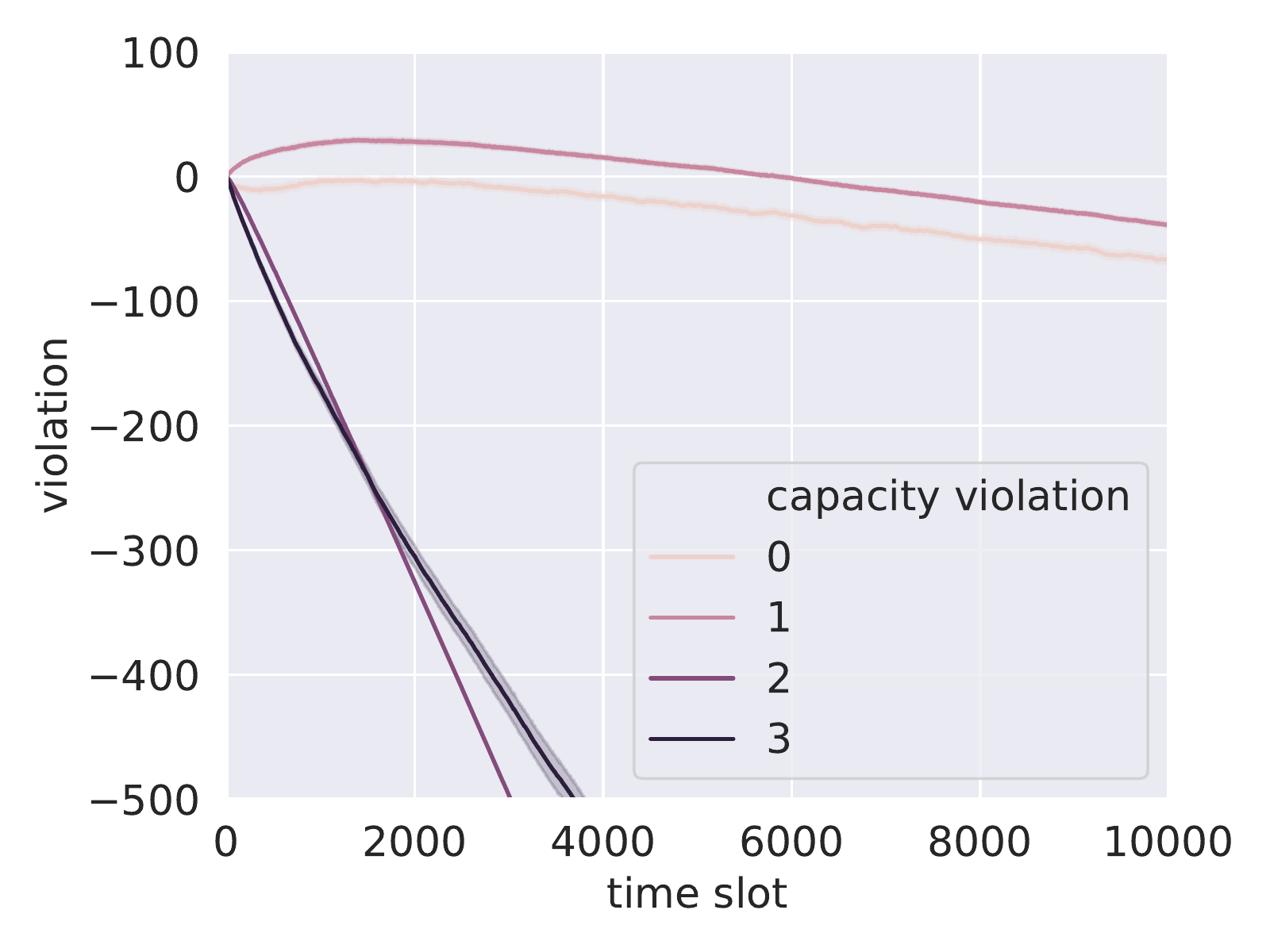}
  \caption{Capacity violation}
  \label{fig:trjectory with tightness-cv}
\end{subfigure}
\begin{subfigure}{.24\textwidth}
  \centering
  \includegraphics[width=1.0\linewidth]{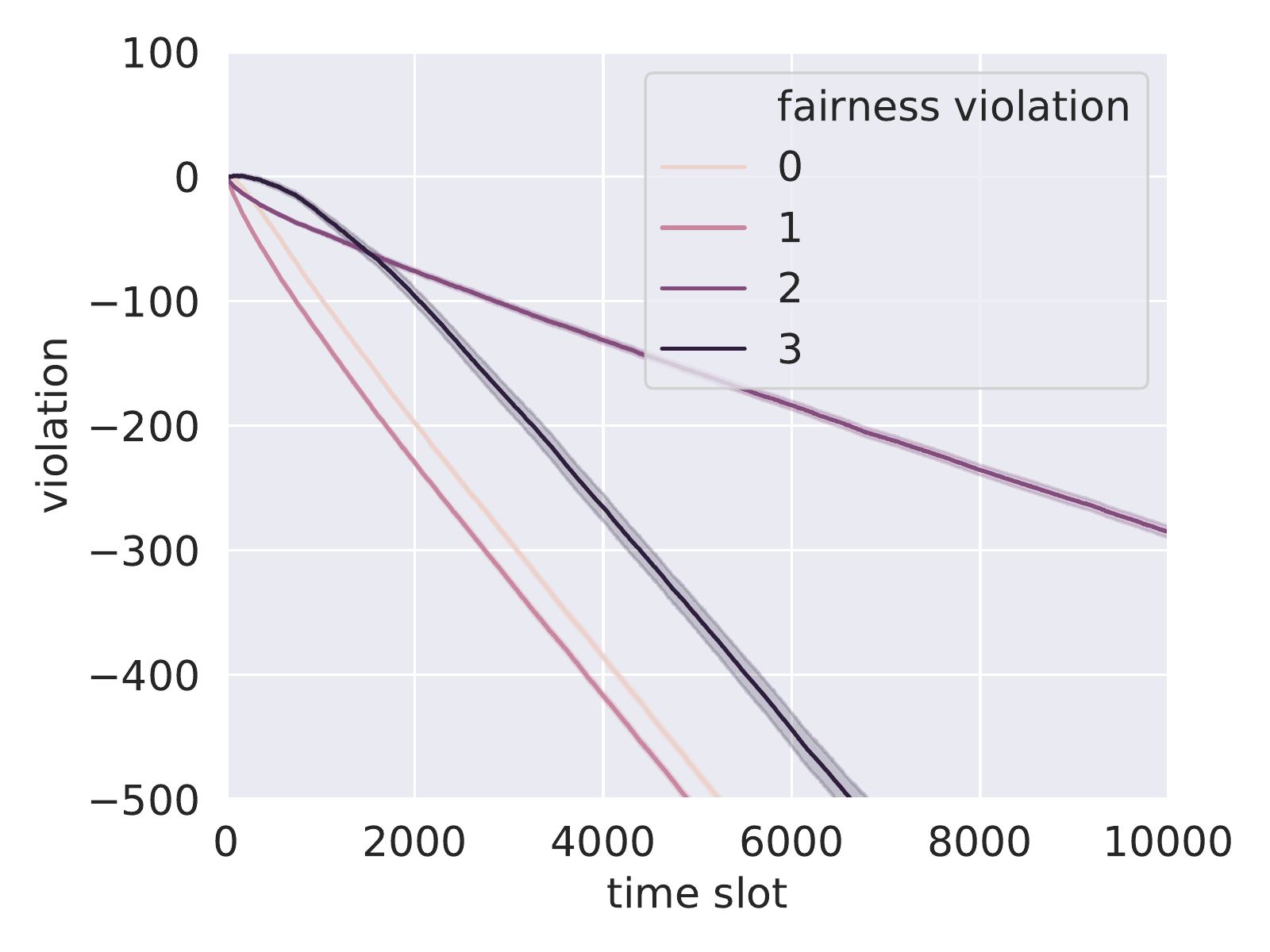}
  \caption{Fairness violation}
  \label{fig:trjectory with tightness-fv}
\end{subfigure}
\begin{subfigure}{.24\textwidth}
  \centering
  \includegraphics[width=1.0\linewidth]{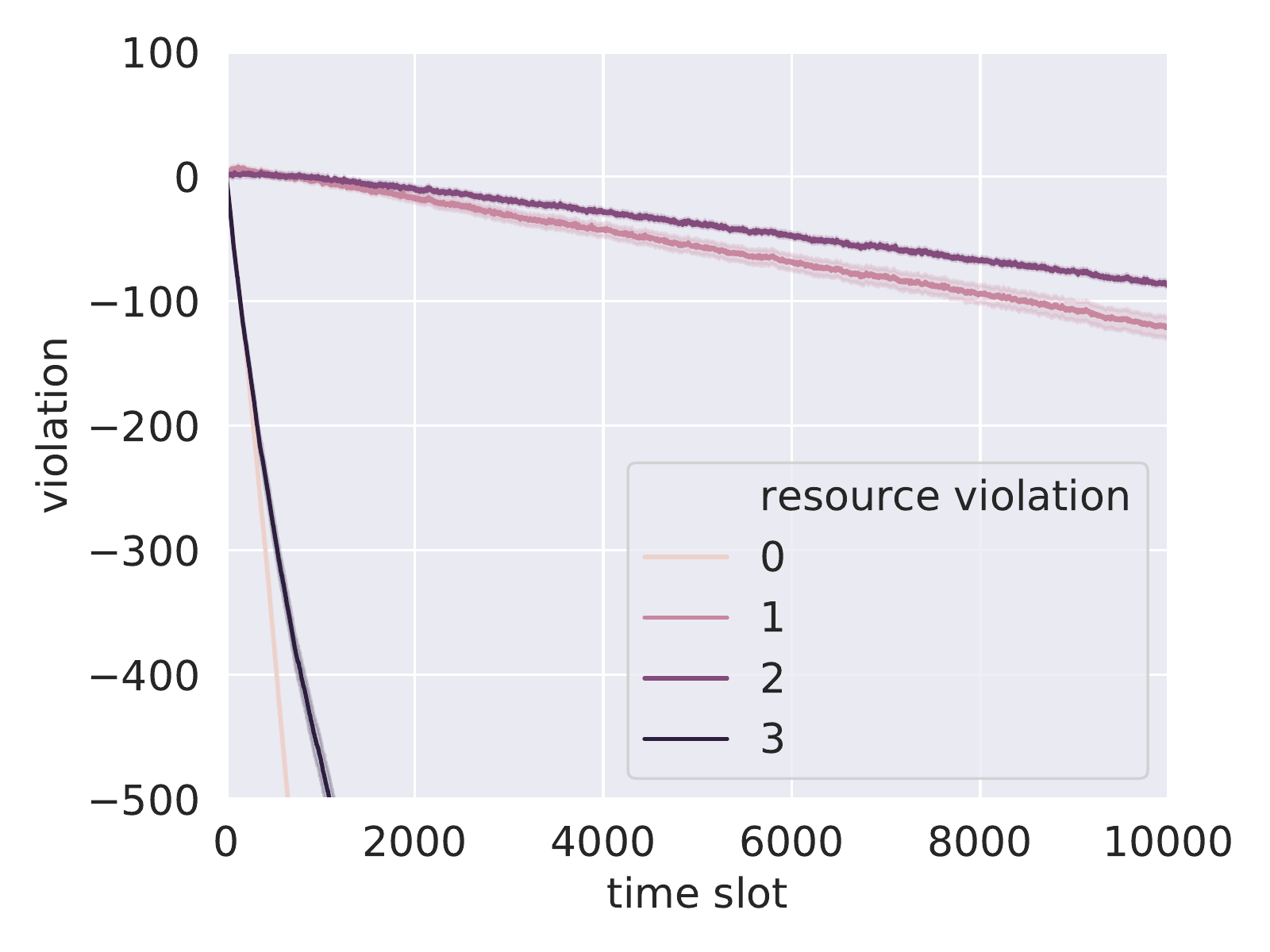}
  \caption{Resource violation}
  \label{fig:trjectory with tightness-rv}
\end{subfigure}
\caption{Trajectory of regret and constraint violation with tightness for $T=10000.$}
\label{fig:trjectory with tightness}
\end{figure} 

\begin{figure}[h]
\centering
\begin{subfigure}{.24\textwidth}
  \centering
  \includegraphics[width=1.0\linewidth]{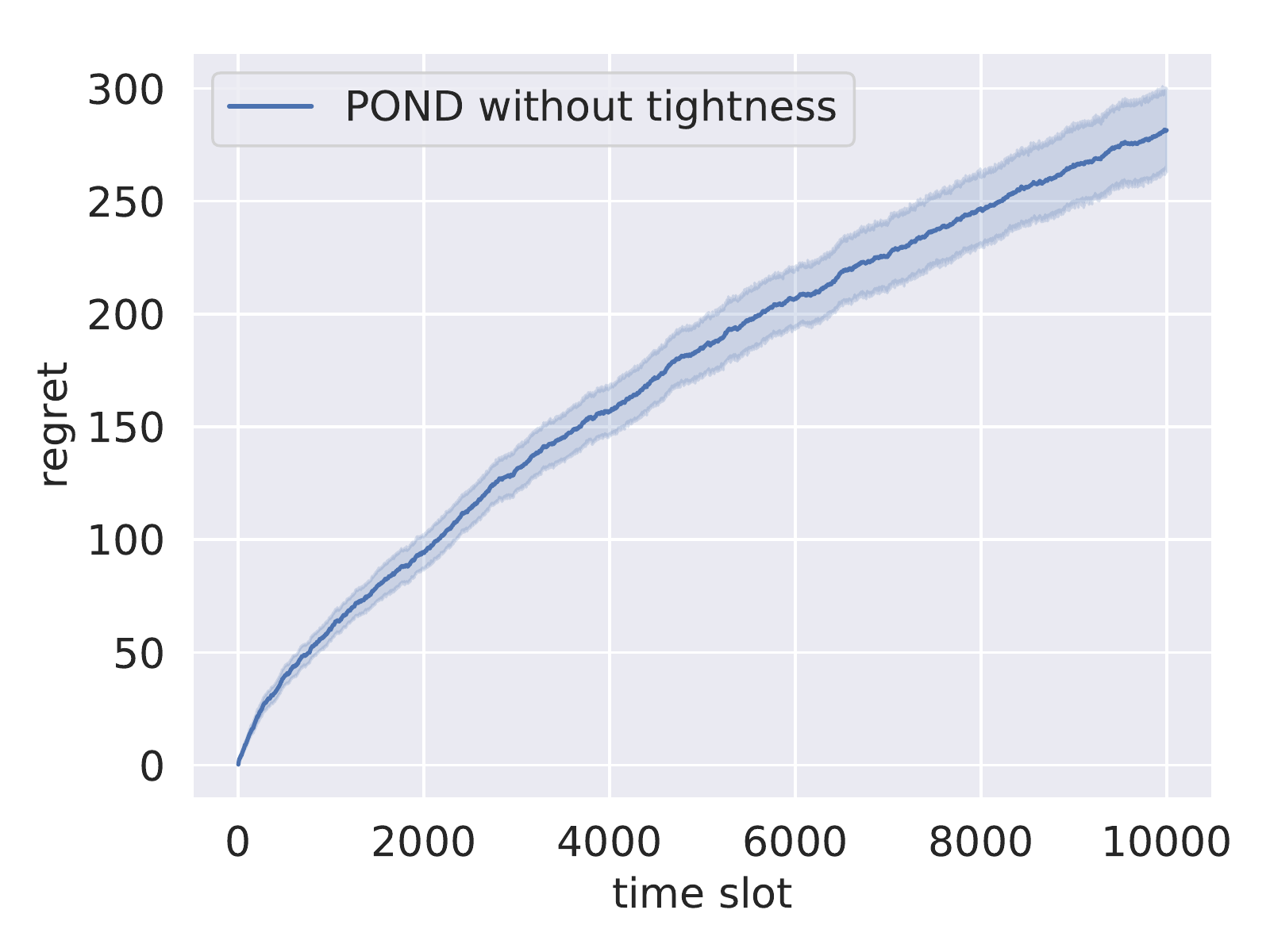}
  \caption{Regret}
  \label{fig:trjectory without tightness-regret}
\end{subfigure}%
\begin{subfigure}{.24\textwidth}
  \centering
  \includegraphics[width=1.0\linewidth]{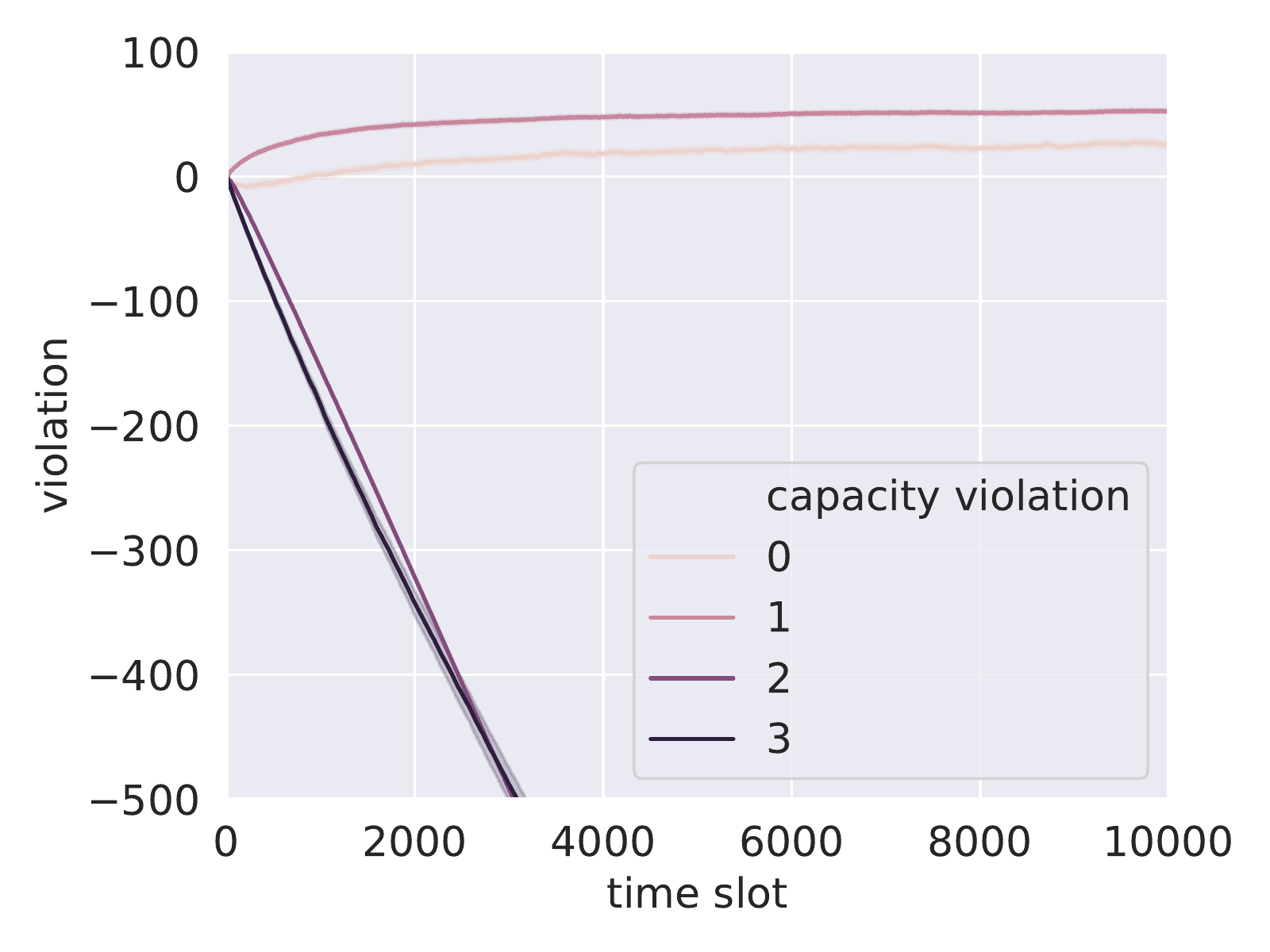}
  \caption{Capacity violation}
  \label{fig:trjectory without tightness-cv}
\end{subfigure}
\begin{subfigure}{.24\textwidth}
  \centering
  \includegraphics[width=1.0\linewidth]{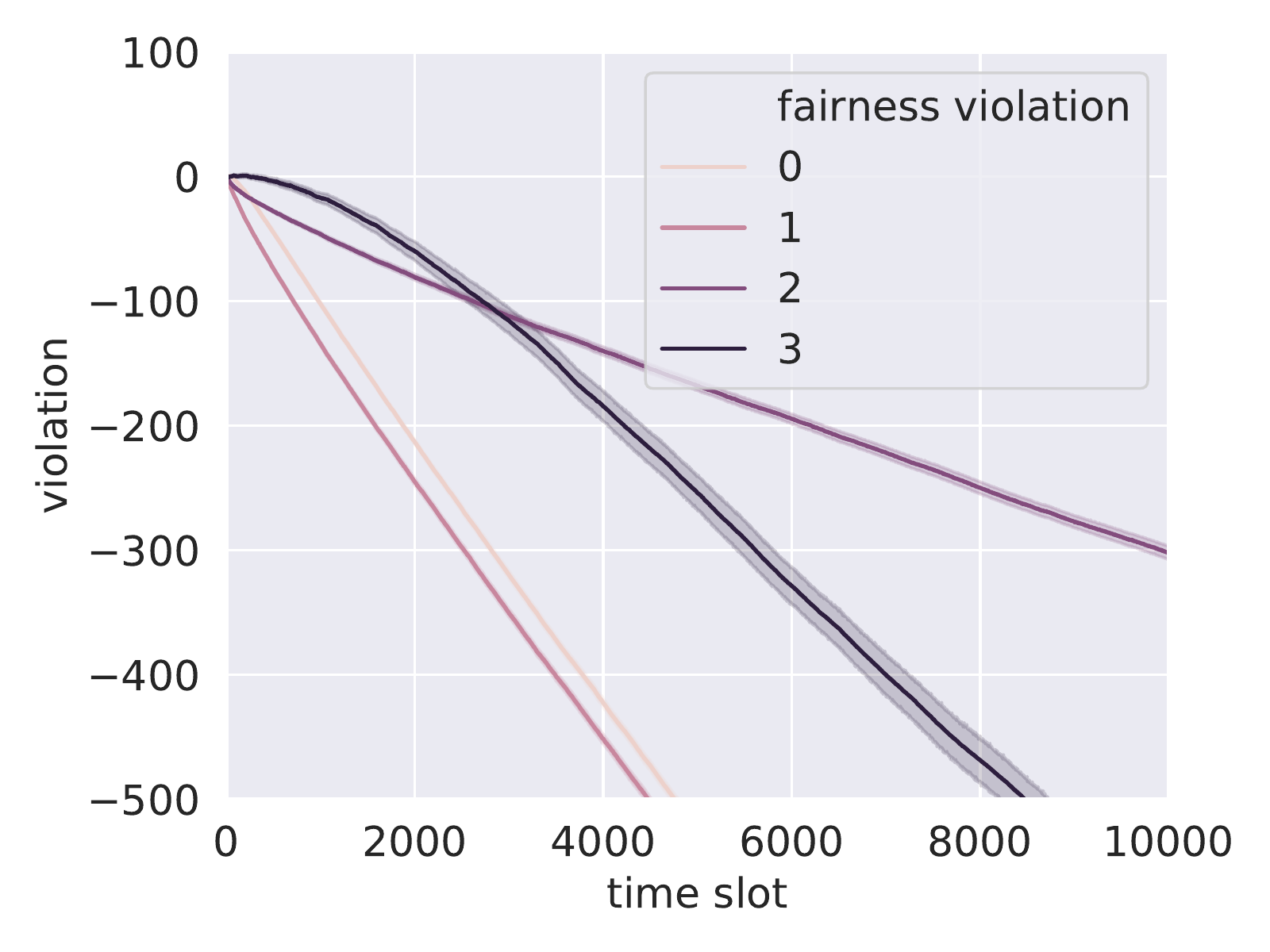}
  \caption{Fairness violation}
  \label{fig:trjectory without tightness-fv}
\end{subfigure}
\begin{subfigure}{.24\textwidth}
  \centering
  \includegraphics[width=1.0\linewidth]{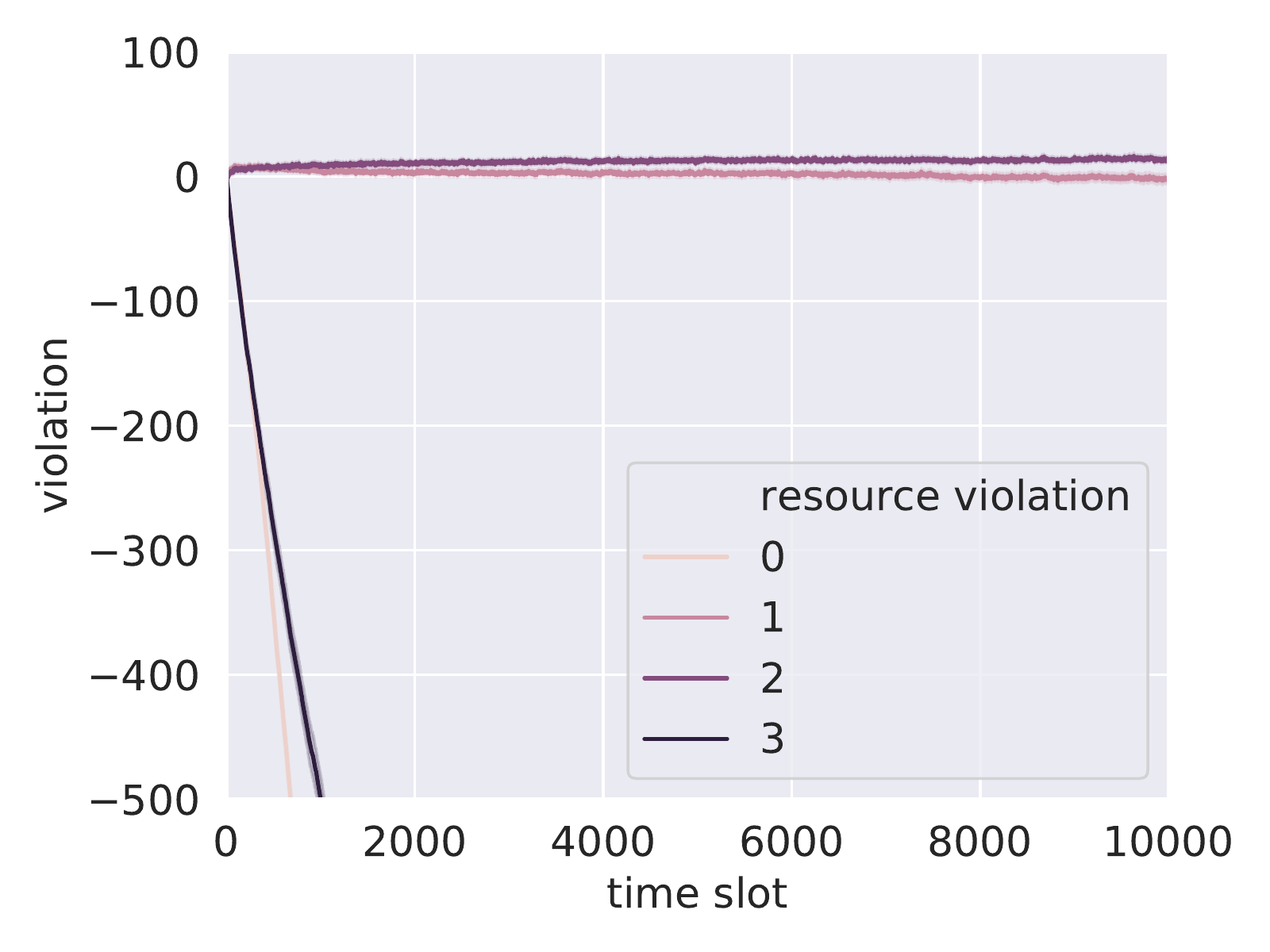}
  \caption{Resource violation}
  \label{fig:trjectory without tightness-rv}
\end{subfigure}
\caption{Regret and constraint violation without tightness for $T=10000.$}
\label{fig:trjectory without tightness}
\end{figure} 

\subsection{Online crowdsourcing: tutoring}
We performed experiments with the dataset on online tutoring in \cite{BloSteSar_19}, where the data was collected from the crowdsourcing marketplace
Amazon Mechanical Turk. In the dataset, users classified with gender (male or female) were presented with three different tutorial types. Users after tutoring were assessed with ten related questions and assessment scores (range from $0$ to $10$ and we normalized it into $[0,1]$) were collected. The total number of valid data points is $2,582.$ Since the dataset was collected by an algorithm that dispatched jobs uniformly at random, we performed the reject sampling method to evaluate the algorithms as in \cite{LiChuLan_10}.
We imposed the following constraints: capacity constraints are $\sum_{i=1}^2x_{i,j} \leq \mu_j$ with $\boldsymbol \mu = [1/3, 0.4, 1/3];$
fairness constraints are $\sum_{i=1}^2 x_{i,j} \geq d_j\sum_{i=1}^2 \lambda_i$ with $\mathbf d = [0.3, 0.3, 0.3];$
resource constraints are $\sum_{i=1}^2w_{i,j }x_{i,j} \leq \rho_j$ with 
$\mathbf w = \begin{bmatrix}
1 & 1 & 1.5\\
1.5 & 1 & 1
\end{bmatrix}$
and $\boldsymbol \rho = [1/2, 0.35, 1/3].$
We performed $100$ trails for POND and Explore-Then-Commit, where the bootstrapping method was utilized to run for $10,000$ time slots. The results with POND with tightness $1.0$ and Explore-Then-Commit are shown in Figure \ref{fig: reward POND and ETC}, \ref{fig: turoting POND} and \ref{fig: turoting ETC}. The average accumulated rewards ($\text{accumulated rewards}/T$) are shown in Figure \ref{fig: reward POND and ETC}, where POND achieved larger average reward ($0.366$) than Explore-Then-Commit ($0.355$) in $T=10,000$ time slots. Moreover, POND led to much lower constraint violations in Figure \ref{fig: turoting POND tightness-cv}-\ref{fig: turoting POND tightness-rv} than Explore-Then-Commit in Figure \ref{fig: turoting ETC-cv}-\ref{fig: turoting ETC-rv}.

\begin{figure}[h]
\centering
  \includegraphics[width=2.5in]{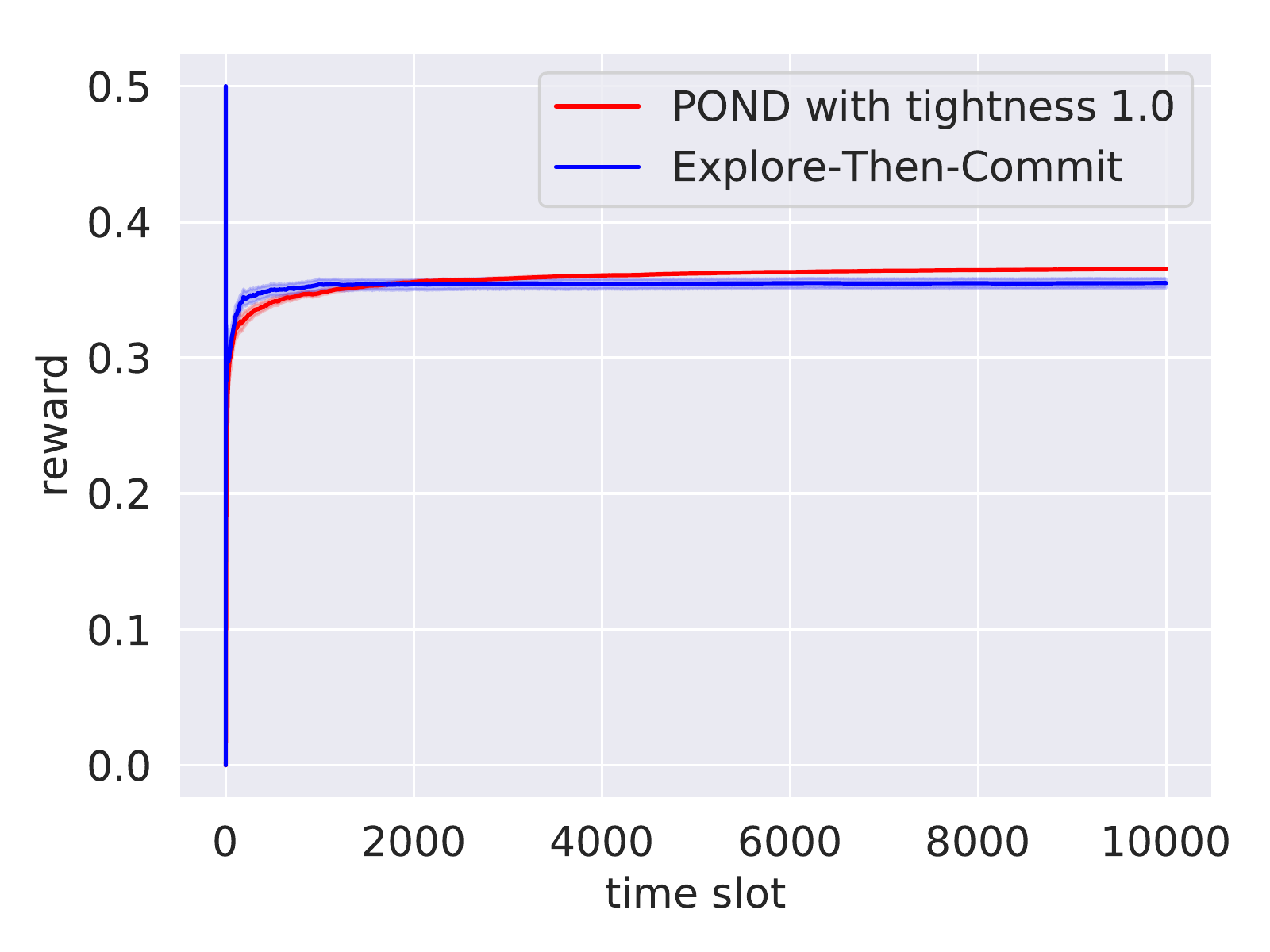}
  \caption{Reward: POND $(0.366)$ v.s. Explore-Then-Commit $(0.355).$}
  \label{fig: reward POND and ETC}
\end{figure}

\begin{figure}[h]
\centering
\begin{subfigure}{.32\textwidth}
  \centering
  \includegraphics[width=1.0\linewidth]{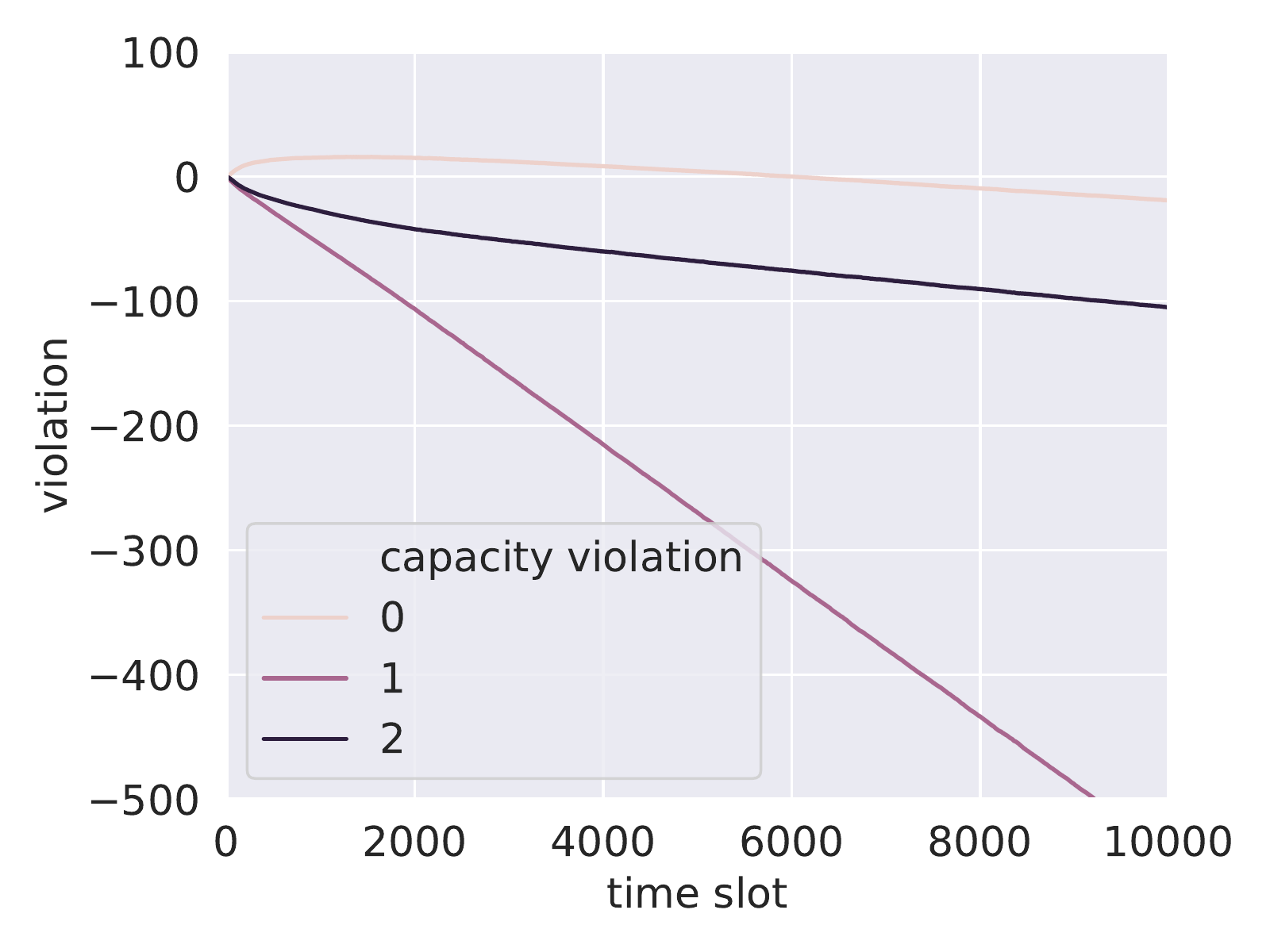}
  \caption{Capacity violation}
  \label{fig: turoting POND tightness-cv}
\end{subfigure}
\begin{subfigure}{.32\textwidth}
  \centering
  \includegraphics[width=1.0\linewidth]{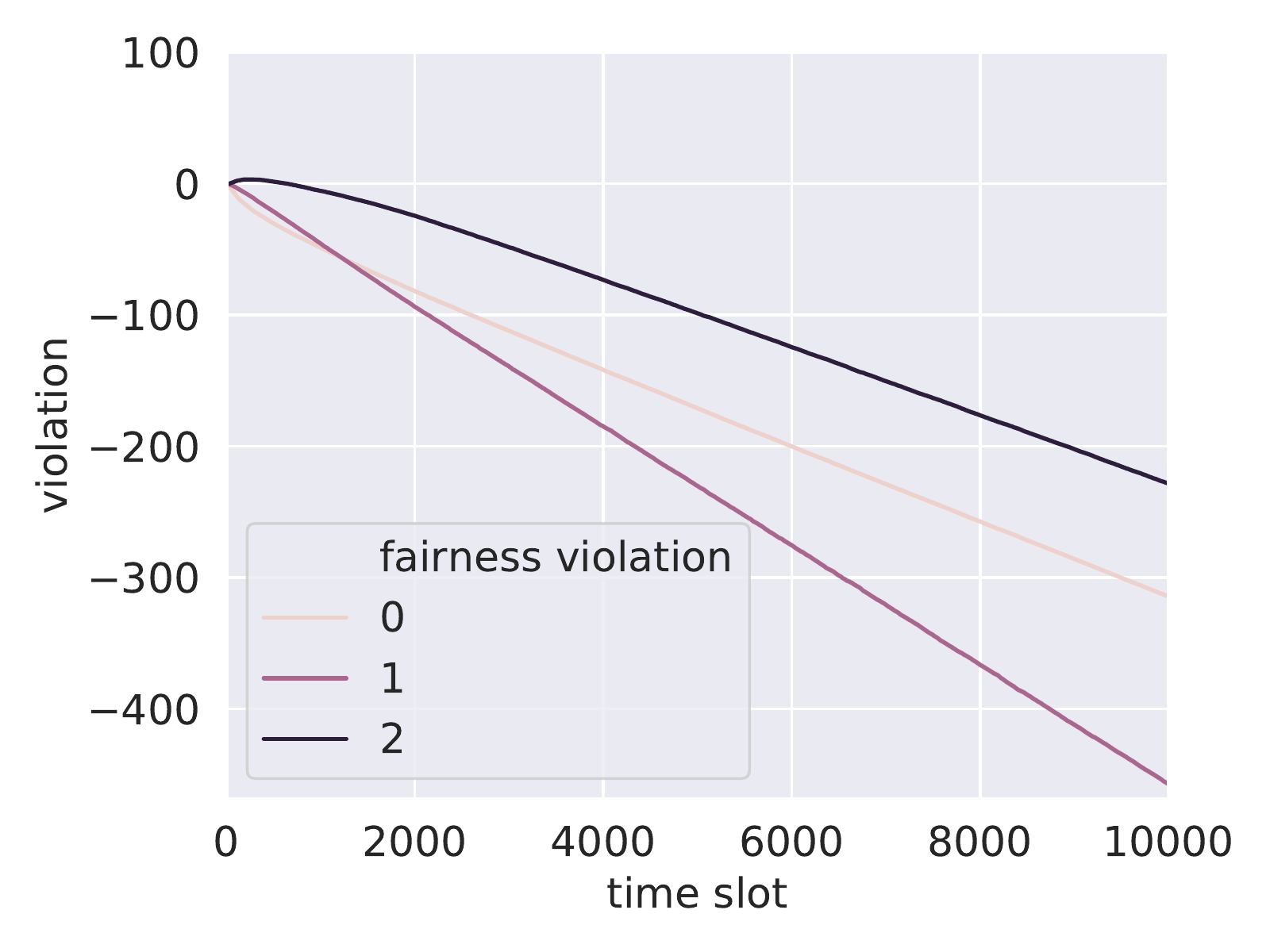}
  \caption{Fairness violation}
  \label{fig: turoting POND tightness-fv}
\end{subfigure}
\begin{subfigure}{.32\textwidth}
  \centering
  \includegraphics[width=1.0\linewidth]{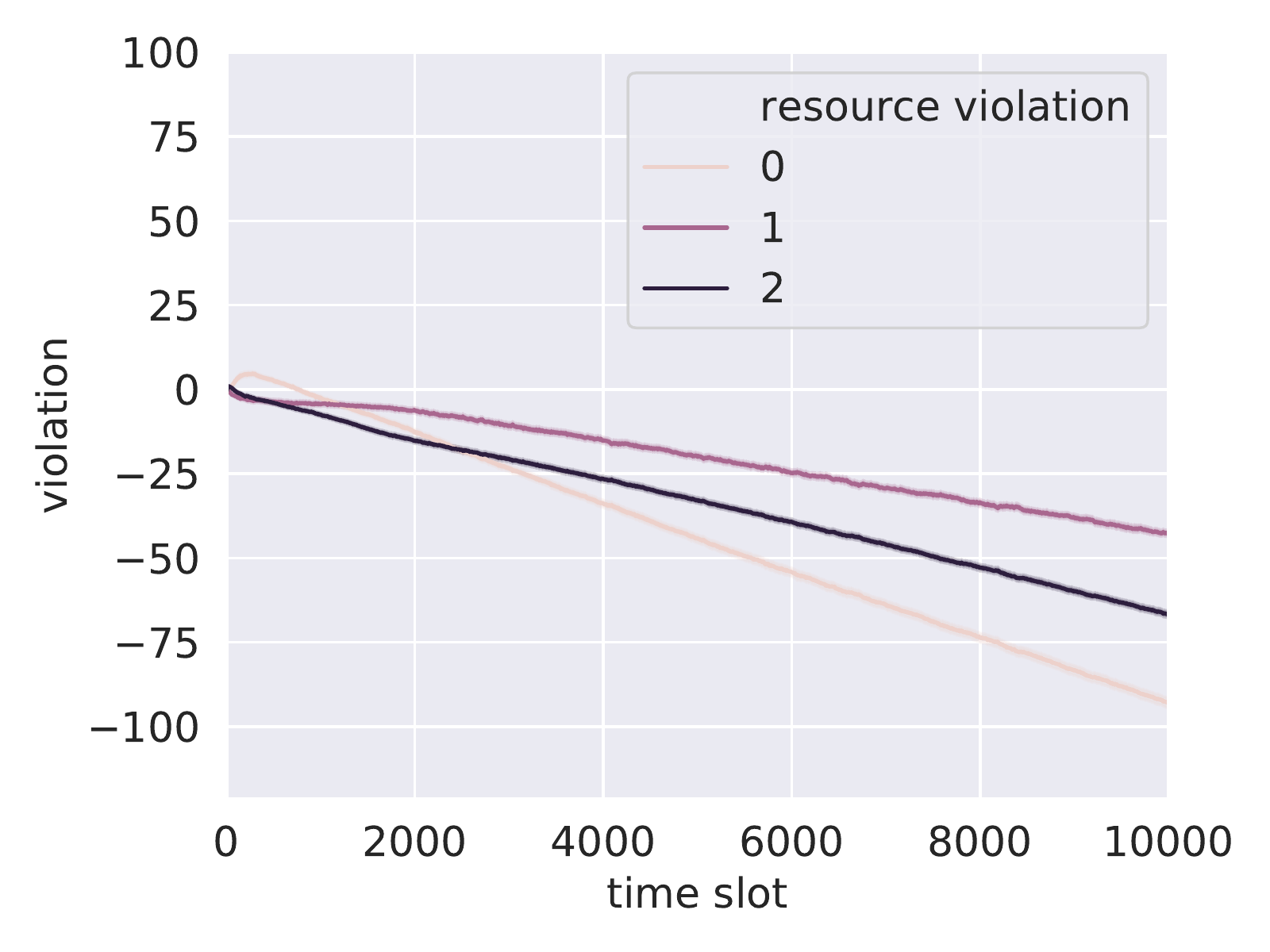}
  \caption{Resource violation}
  \label{fig: turoting POND tightness-rv}
\end{subfigure}
\caption{Constraint violations under POND tightness $1.0$.}
\label{fig: turoting POND}
\end{figure} 

\begin{figure}[h]
\centering
\begin{subfigure}{.32\textwidth}
  \centering
  \includegraphics[width=1.0\linewidth]{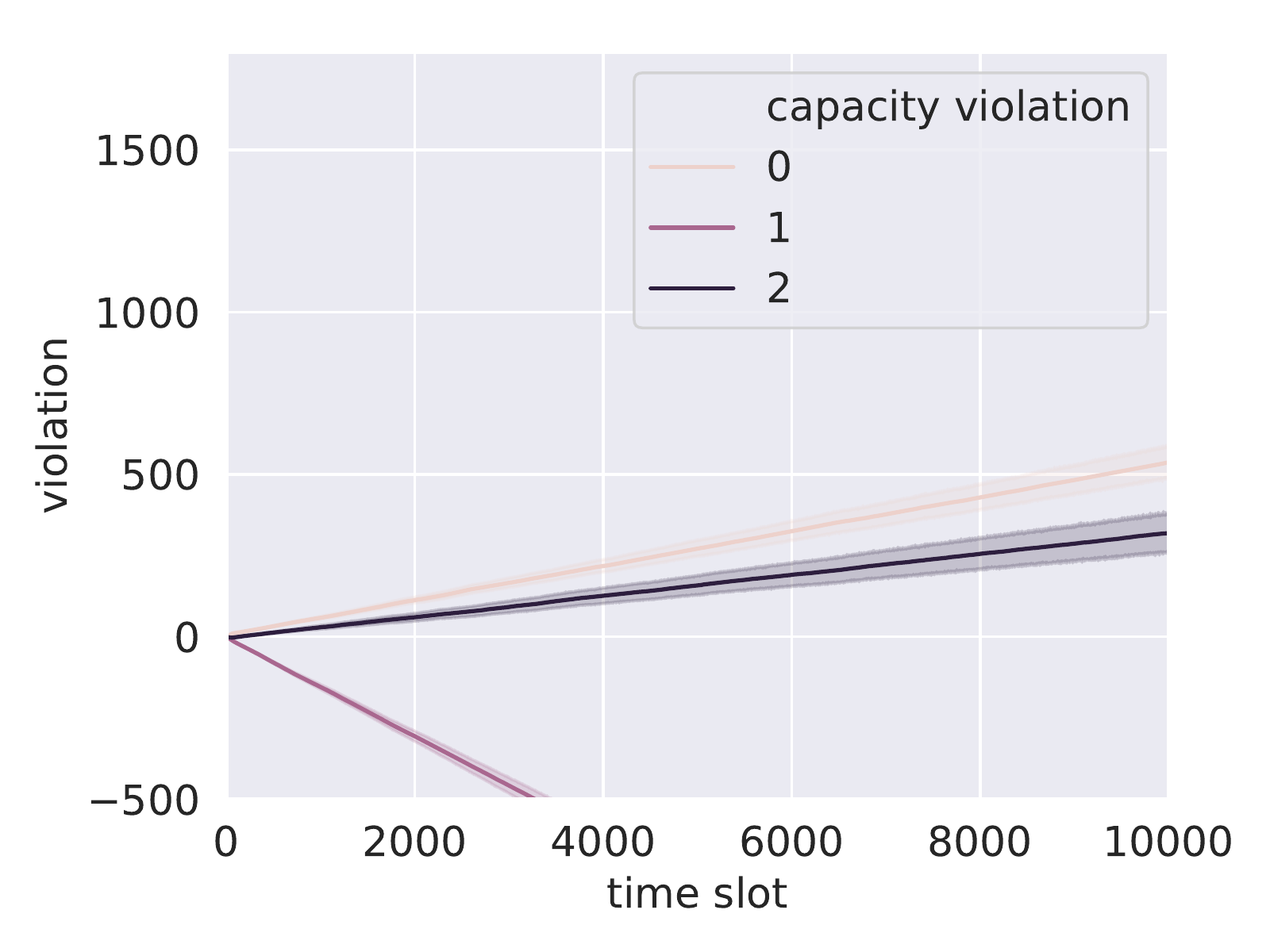}
  \caption{Capacity violation}
  \label{fig: turoting ETC-cv}
\end{subfigure}
\begin{subfigure}{.32\textwidth}
  \centering
  \includegraphics[width=1.0\linewidth]{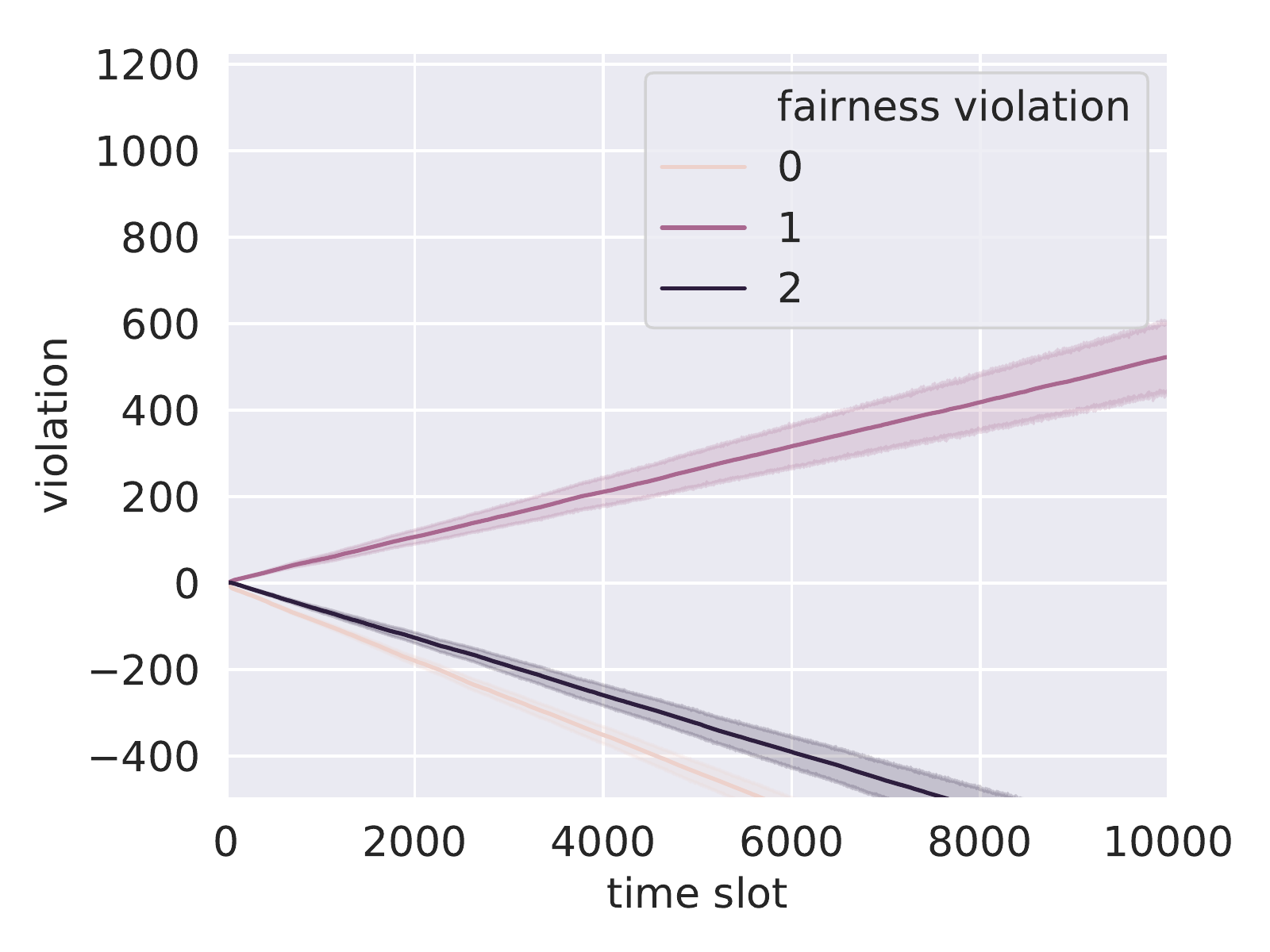}
  \caption{Fairness violation}
  \label{fig: turoting ETC-fv}
\end{subfigure}
\begin{subfigure}{.32\textwidth}
  \centering
  \includegraphics[width=1.0\linewidth]{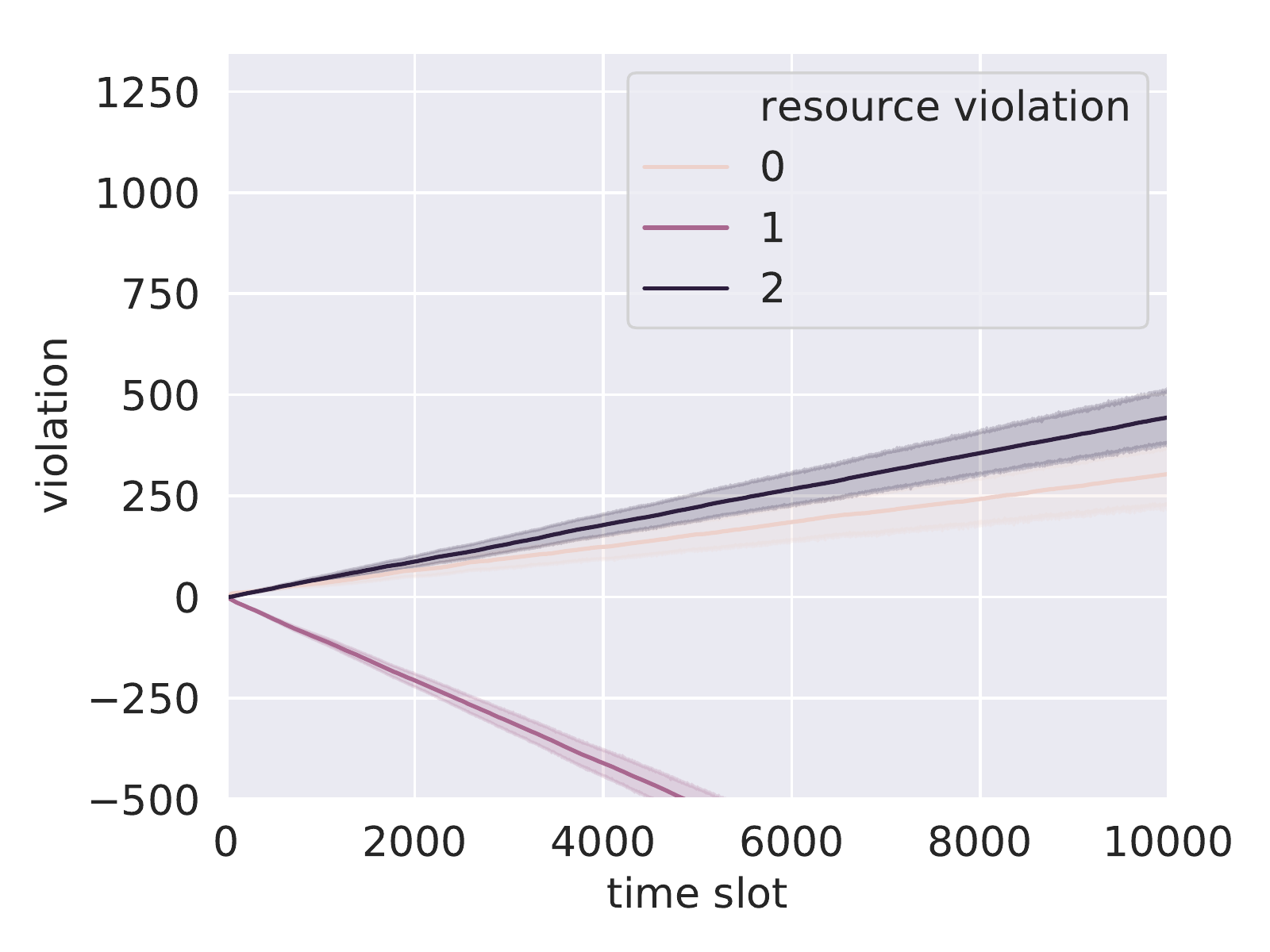}
  \caption{Resource violation}
  \label{fig: turoting ETC-rv}
\end{subfigure}
\caption{Constraint violations under Explore-Then-Commit.}
\label{fig: turoting ETC}
\end{figure} 

\section{Conclusion}

In this paper, we developed a novel online dispatching algorithm, POND, to maximize cumulative reward over a finite time horizon, subject to general constraints that arise from resource capacity and fairness considerations. Given unknown arrival,  reward, and constraint distributions, POND leverages the UCB approach to learn the reward while using the MaxWeight algorithm to make the dispatching decision with virtual queues tracking the constraint violations. We used Lyapunov drift analysis and regret analysis to show that our POND algorithm can achieve $O(\sqrt{T})$ regret while a constant $O(1)$ constraint violation, with the key being introducing a ``tightness'' constant to balance between the regret and constraint violation. Via numerical experiments based on synthetic data and a real dataset, we further demonstrated the critical role of this tightness constant in achieving the $O(1)$ constraint violation, showing our bounds are sharp. We also showed that our algorithm performs significantly better than the baseline Explore-Then-Commit algorithm, leading to both smaller regret and constraint violation in experiments with the synthetic data and real data.

\newpage
\bibliographystyle{ACM-Reference-Format}
\bibliography{inlab-ref}

%%% -*-BibTeX-*-
%%% Do NOT edit. File created by BibTeX with style
%%% ACM-Reference-Format-Journals [18-Jan-2012].

\begin{thebibliography}{34}

%%% ====================================================================
%%% NOTE TO THE USER: you can override these defaults by providing
%%% customized versions of any of these macros before the \bibliography
%%% command.  Each of them MUST provide its own final punctuation,
%%% except for \shownote{}, \showDOI{}, and \showURL{}.  The latter two
%%% do not use final punctuation, in order to avoid confusing it with
%%% the Web address.
%%%
%%% To suppress output of a particular field, define its macro to expand
%%% to an empty string, or better, \unskip, like this:
%%%
%%% \newcommand{\showDOI}[1]{\unskip}   % LaTeX syntax
%%%
%%% \def \showDOI #1{\unskip}           % plain TeX syntax
%%%
%%% ====================================================================

\ifx \showCODEN    \undefined \def \showCODEN     #1{\unskip}     \fi
\ifx \showDOI      \undefined \def \showDOI       #1{#1}\fi
\ifx \showISBNx    \undefined \def \showISBNx     #1{\unskip}     \fi
\ifx \showISBNxiii \undefined \def \showISBNxiii  #1{\unskip}     \fi
\ifx \showISSN     \undefined \def \showISSN      #1{\unskip}     \fi
\ifx \showLCCN     \undefined \def \showLCCN      #1{\unskip}     \fi
\ifx \shownote     \undefined \def \shownote      #1{#1}          \fi
\ifx \showarticletitle \undefined \def \showarticletitle #1{#1}   \fi
\ifx \showURL      \undefined \def \showURL       {\relax}        \fi
% The following commands are used for tagged output and should be
% invisible to TeX
\providecommand\bibfield[2]{#2}
\providecommand\bibinfo[2]{#2}
\providecommand\natexlab[1]{#1}
\providecommand\showeprint[2][]{arXiv:#2}

\bibitem[\protect\citeauthoryear{Audibert and Bubeck}{Audibert and
  Bubeck}{2009}]%
        {AudBub_09}
\bibfield{author}{\bibinfo{person}{Jean-Yves Audibert} {and}
  \bibinfo{person}{Sébastien Bubeck}.} \bibinfo{year}{2009}\natexlab{}.
\newblock \showarticletitle{Minimax Policies for Adversarial and Stochastic
  Bandits}. In \bibinfo{booktitle}{\emph{Proceedings of the 22nd Annual
  Conference on Learning Theory (COLT)}}.
\newblock


\bibitem[\protect\citeauthoryear{Auer, Cesa-Bianchi, and Fischer}{Auer
  et~al\mbox{.}}{2002}]%
        {AurCesFis_02}
\bibfield{author}{\bibinfo{person}{Peter Auer}, \bibinfo{person}{Nicol\`{o}
  Cesa-Bianchi}, {and} \bibinfo{person}{Paul Fischer}.}
  \bibinfo{year}{2002}\natexlab{}.
\newblock \showarticletitle{Finite-Time Analysis of the Multiarmed Bandit
  Problem}.
\newblock \bibinfo{journal}{\emph{Machine Learning}} \bibinfo{volume}{47},
  \bibinfo{number}{2–3} (\bibinfo{date}{May} \bibinfo{year}{2002}),
  \bibinfo{pages}{235–256}.
\newblock


\bibitem[\protect\citeauthoryear{Balseiro, Lu, and Mirrokni}{Balseiro
  et~al\mbox{.}}{2020}]%
        {BalLuMir_20}
\bibfield{author}{\bibinfo{person}{Santiago Balseiro}, \bibinfo{person}{Haihao
  Lu}, {and} \bibinfo{person}{Vahab Mirrokni}.}
  \bibinfo{year}{2020}\natexlab{}.
\newblock \bibinfo{title}{Regularized Online Allocation Problems: Fairness and
  Beyond}.
\newblock
\newblock
\showeprint[arxiv]{2007.00514}


\bibitem[\protect\citeauthoryear{Besson and Kaufmann}{Besson and
  Kaufmann}{2018}]%
        {BesKau_17}
\bibfield{author}{\bibinfo{person}{Lilian Besson} {and} \bibinfo{person}{Emilie
  Kaufmann}.} \bibinfo{year}{2018}\natexlab{}.
\newblock \bibinfo{title}{{What Doubling Tricks Can and Can't Do for
  Multi-Armed Bandits}}.  (\bibinfo{date}{Feb.} \bibinfo{year}{2018}).
\newblock
\urldef\tempurl%
\url{https://hal.inria.fr/hal-01736357}
\showURL{%
\tempurl}


\bibitem[\protect\citeauthoryear{Bubeck and Cesa-Bianchi}{Bubeck and
  Cesa-Bianchi}{2012}]%
        {BubCes_12}
\bibfield{author}{\bibinfo{person}{Sébastien Bubeck} {and}
  \bibinfo{person}{Nicolò Cesa-Bianchi}.} \bibinfo{year}{2012}\natexlab{}.
\newblock \showarticletitle{Regret Analysis of Stochastic and Nonstochastic
  Multi-armed Bandit Problems}.
\newblock \bibinfo{journal}{\emph{Foundations and Trends® in Machine
  Learning}} \bibinfo{volume}{5}, \bibinfo{number}{1} (\bibinfo{year}{2012}),
  \bibinfo{pages}{1--122}.
\newblock


\bibitem[\protect\citeauthoryear{Cayci, Eryilmaz, and Srikant}{Cayci
  et~al\mbox{.}}{2020}]%
        {CayErySri_20}
\bibfield{author}{\bibinfo{person}{Semih Cayci}, \bibinfo{person}{Atilla
  Eryilmaz}, {and} \bibinfo{person}{R Srikant}.}
  \bibinfo{year}{2020}\natexlab{}.
\newblock \showarticletitle{Budget-Constrained Bandits over General Cost and
  Reward Distributions}. In \bibinfo{booktitle}{\emph{Proceedings of Machine
  Learning Research}}, Vol.~\bibinfo{volume}{108}. \bibinfo{pages}{4388--4398}.
\newblock


\bibitem[\protect\citeauthoryear{Chen, Cuellar, Luo, Modi, Nemlekar, and
  Nikolaidis}{Chen et~al\mbox{.}}{2020}]%
        {CheAleLuo_20}
\bibfield{author}{\bibinfo{person}{Yifang Chen}, \bibinfo{person}{Alex
  Cuellar}, \bibinfo{person}{Haipeng Luo}, \bibinfo{person}{Jignesh Modi},
  \bibinfo{person}{Heramb Nemlekar}, {and} \bibinfo{person}{Stefanos
  Nikolaidis}.} \bibinfo{year}{2020}\natexlab{}.
\newblock \showarticletitle{Fair Contextual Multi-Armed Bandits: Theory and
  Experiments}. In \bibinfo{booktitle}{\emph{Proceedings of Machine Learning
  Research}}, Vol.~\bibinfo{volume}{124}. \bibinfo{pages}{181--190}.
\newblock


\bibitem[\protect\citeauthoryear{Choudhury, Joshi, Wang, and
  Shakkottai}{Choudhury et~al\mbox{.}}{2021}]%
        {ChoJosWan_21}
\bibfield{author}{\bibinfo{person}{Tuhinangshu Choudhury},
  \bibinfo{person}{Gauri Joshi}, \bibinfo{person}{Weina Wang}, {and}
  \bibinfo{person}{Sanjay Shakkottai}.} \bibinfo{year}{2021}\natexlab{}.
\newblock \bibinfo{booktitle}{\emph{Job Dispatching Policies for Queueing
  Systems with Unknown Service Rates}}.
\newblock \bibinfo{type}{{T}echnical {R}eport}.
\newblock
\urldef\tempurl%
\url{http://www.andrew.cmu.edu/user/gaurij/queueing_bandits.pdf}
\showURL{%
\tempurl}


\bibitem[\protect\citeauthoryear{Eryilmaz and Srikant}{Eryilmaz and
  Srikant}{2012}]%
        {ErySri_12}
\bibfield{author}{\bibinfo{person}{Atilla Eryilmaz} {and} \bibinfo{person}{R.
  Srikant}.} \bibinfo{year}{2012}\natexlab{}.
\newblock \showarticletitle{Asymptotically tight steady-state queue length
  bounds implied by drift conditions}.
\newblock \bibinfo{journal}{\emph{Queueing Syst.}} \bibinfo{volume}{72},
  \bibinfo{number}{3-4} (\bibinfo{date}{Dec.} \bibinfo{year}{2012}),
  \bibinfo{pages}{311--359}.
\newblock


\bibitem[\protect\citeauthoryear{Ferreira, Simchi-Levi, and Wang}{Ferreira
  et~al\mbox{.}}{2018}]%
        {FerSimWan_18}
\bibfield{author}{\bibinfo{person}{Kris~Johnson Ferreira},
  \bibinfo{person}{David Simchi-Levi}, {and} \bibinfo{person}{He Wang}.}
  \bibinfo{year}{2018}\natexlab{}.
\newblock \showarticletitle{Online Network Revenue Management Using Thompson
  Sampling}.
\newblock \bibinfo{journal}{\emph{Operations Research}} \bibinfo{volume}{66},
  \bibinfo{number}{6} (\bibinfo{year}{2018}), \bibinfo{pages}{1586--1602}.
\newblock


\bibitem[\protect\citeauthoryear{Hajek}{Hajek}{1982}]%
        {Haj_82}
\bibfield{author}{\bibinfo{person}{B. Hajek}.} \bibinfo{year}{1982}\natexlab{}.
\newblock \showarticletitle{Hitting-time and occupation-time bounds implied by
  drift analysis with applications}.
\newblock \bibinfo{journal}{\emph{Ann. Appl. Prob.}} (\bibinfo{year}{1982}),
  \bibinfo{pages}{502--525}.
\newblock


\bibitem[\protect\citeauthoryear{Hazan}{Hazan}{2016}]%
        {Haz_13}
\bibfield{author}{\bibinfo{person}{Elad Hazan}.}
  \bibinfo{year}{2016}\natexlab{}.
\newblock \showarticletitle{Introduction to Online Convex Optimization}.
\newblock \bibinfo{journal}{\emph{Foundations and Trends® in Optimization}}
  \bibinfo{volume}{2}, \bibinfo{number}{3-4} (\bibinfo{year}{2016}),
  \bibinfo{pages}{157--325}.
\newblock


\bibitem[\protect\citeauthoryear{Hoeffding}{Hoeffding}{1963}]%
        {Hoe_63}
\bibfield{author}{\bibinfo{person}{Wassily Hoeffding}.}
  \bibinfo{year}{1963}\natexlab{}.
\newblock \showarticletitle{Probability Inequalities for Sums of Bounded Random
  Variables}.
\newblock \bibinfo{journal}{\emph{J. Amer. Statist. Assoc.}}
  \bibinfo{volume}{58}, \bibinfo{number}{301} (\bibinfo{year}{1963}),
  \bibinfo{pages}{13--30}.
\newblock


\bibitem[\protect\citeauthoryear{{Hsu}, {Xu}, {Lin}, and {Bell}}{{Hsu}
  et~al\mbox{.}}{2018}]%
        {HsuXuLin_18}
\bibfield{author}{\bibinfo{person}{W. {Hsu}}, \bibinfo{person}{J. {Xu}},
  \bibinfo{person}{X. {Lin}}, {and} \bibinfo{person}{M.~R. {Bell}}.}
  \bibinfo{year}{2018}\natexlab{}.
\newblock \showarticletitle{Integrating Online Learning and Adaptive Control in
  Queueing Systems with Uncertain Payoffs}. In \bibinfo{booktitle}{\emph{Proc.
  Information Theory and Applications Workshop (ITA)}}.
\newblock


\bibitem[\protect\citeauthoryear{Johari, Kamble, and Kanoria}{Johari
  et~al\mbox{.}}{2016}]%
        {JohKamKan_16}
\bibfield{author}{\bibinfo{person}{Ramesh Johari}, \bibinfo{person}{Vijay
  Kamble}, {and} \bibinfo{person}{Yash Kanoria}.}
  \bibinfo{year}{2016}\natexlab{}.
\newblock \bibinfo{title}{Matching while Learning}.
\newblock
\newblock
\showeprint[arxiv]{1603.04549}


\bibitem[\protect\citeauthoryear{Krishnasamy, Arapostathis, Johari, and
  Shakkottai}{Krishnasamy et~al\mbox{.}}{2018}]%
        {KriAraJoh_18}
\bibfield{author}{\bibinfo{person}{Subhashini Krishnasamy},
  \bibinfo{person}{Ari Arapostathis}, \bibinfo{person}{Ramesh Johari}, {and}
  \bibinfo{person}{Sanjay Shakkottai}.} \bibinfo{year}{2018}\natexlab{}.
\newblock \bibinfo{title}{On Learning the $c\mu$ Rule in Single and Parallel
  Server Networks}.
\newblock
\newblock
\showeprint[arxiv]{1802.06723}


\bibitem[\protect\citeauthoryear{Krishnasamy, Sen, Johari, and
  Shakkottai}{Krishnasamy et~al\mbox{.}}{2016}]%
        {KriSenJoh_16}
\bibfield{author}{\bibinfo{person}{Subhashini Krishnasamy},
  \bibinfo{person}{Rajat Sen}, \bibinfo{person}{Ramesh Johari}, {and}
  \bibinfo{person}{Sanjay Shakkottai}.} \bibinfo{year}{2016}\natexlab{}.
\newblock \showarticletitle{Regret of Queueing Bandits}.
\newblock In \bibinfo{booktitle}{\emph{Advances Neural Information Processing
  Systems (NeurIPS)}}. \bibinfo{pages}{1669--1677}.
\newblock


\bibitem[\protect\citeauthoryear{Krishnasamy, Sen, Johari, and
  Shakkottai}{Krishnasamy et~al\mbox{.}}{2021}]%
        {KriSenJoh_21}
\bibfield{author}{\bibinfo{person}{Subhashini Krishnasamy},
  \bibinfo{person}{Rajat Sen}, \bibinfo{person}{Ramesh Johari}, {and}
  \bibinfo{person}{Sanjay Shakkottai}.} \bibinfo{year}{2021}\natexlab{}.
\newblock \showarticletitle{Learning Unknown Service Rates in Queues: A
  Multiarmed Bandit Approach}.
\newblock \bibinfo{journal}{\emph{Operations Research}} \bibinfo{volume}{69},
  \bibinfo{number}{1} (\bibinfo{year}{2021}), \bibinfo{pages}{315--330}.
\newblock


\bibitem[\protect\citeauthoryear{Kunniyur and Srikant}{Kunniyur and
  Srikant}{2001}]%
        {KunSri_01}
\bibfield{author}{\bibinfo{person}{Srisankar Kunniyur} {and}
  \bibinfo{person}{Rayadurgam Srikant}.} \bibinfo{year}{2001}\natexlab{}.
\newblock \showarticletitle{Analysis and design of an adaptive virtual queue
  (AVQ) algorithm for active queue management}.
\newblock \bibinfo{journal}{\emph{ACM SIGCOMM Computer Communication Review}}
  \bibinfo{volume}{31}, \bibinfo{number}{4} (\bibinfo{year}{2001}),
  \bibinfo{pages}{123--134}.
\newblock


\bibitem[\protect\citeauthoryear{Lattimore and Szepesvári}{Lattimore and
  Szepesvári}{2020}]%
        {LatSze_20}
\bibfield{author}{\bibinfo{person}{Tor Lattimore} {and} \bibinfo{person}{Csaba
  Szepesvári}.} \bibinfo{year}{2020}\natexlab{}.
\newblock \bibinfo{booktitle}{\emph{Bandit Algorithms}}.
\newblock \bibinfo{publisher}{Cambridge University Press}.
\newblock


\bibitem[\protect\citeauthoryear{{Li}, {Liu}, and {Ji}}{{Li}
  et~al\mbox{.}}{2019}]%
        {LiLiuJi_19}
\bibfield{author}{\bibinfo{person}{F. {Li}}, \bibinfo{person}{J. {Liu}}, {and}
  \bibinfo{person}{B. {Ji}}.} \bibinfo{year}{2019}\natexlab{}.
\newblock \showarticletitle{Combinatorial Sleeping Bandits with Fairness
  Constraints}. In \bibinfo{booktitle}{\emph{Proc. IEEE Int. Conf. Computer
  Communications (INFOCOM)}}. \bibinfo{pages}{1702--1710}.
\newblock


\bibitem[\protect\citeauthoryear{Li, Chu, Langford, and Schapire}{Li
  et~al\mbox{.}}{2010}]%
        {LiChuLan_10}
\bibfield{author}{\bibinfo{person}{Lihong Li}, \bibinfo{person}{Wei Chu},
  \bibinfo{person}{John Langford}, {and} \bibinfo{person}{Robert~E. Schapire}.}
  \bibinfo{year}{2010}\natexlab{}.
\newblock \showarticletitle{A Contextual-Bandit Approach to Personalized News
  Article Recommendation}. In \bibinfo{booktitle}{\emph{Proc. Int. Conf. World
  Wide Web (WWW)}}. \bibinfo{pages}{661–670}.
\newblock


\bibitem[\protect\citeauthoryear{Lu, Balseiro, and Mirrokni}{Lu
  et~al\mbox{.}}{2020}]%
        {LuBalMir_20}
\bibfield{author}{\bibinfo{person}{Haihao Lu}, \bibinfo{person}{Santiago
  Balseiro}, {and} \bibinfo{person}{Vahab Mirrokni}.}
  \bibinfo{year}{2020}\natexlab{}.
\newblock \bibinfo{title}{Dual Mirror Descent for Online Allocation Problems}.
\newblock
\newblock
\showeprint[arxiv]{2002.10421}


\bibitem[\protect\citeauthoryear{Mahdavi, Jin, and Yang}{Mahdavi
  et~al\mbox{.}}{2012}]%
        {MahJinYan_12}
\bibfield{author}{\bibinfo{person}{Mehrdad Mahdavi}, \bibinfo{person}{Rong
  Jin}, {and} \bibinfo{person}{Tianbao Yang}.} \bibinfo{year}{2012}\natexlab{}.
\newblock \showarticletitle{Trading Regret for Efficiency: Online Convex
  Optimization with Long Term Constraints}.
\newblock \bibinfo{journal}{\emph{Journal of Machine Learning Research}}
  \bibinfo{volume}{13}, \bibinfo{number}{81} (\bibinfo{year}{2012}),
  \bibinfo{pages}{2503--2528}.
\newblock


\bibitem[\protect\citeauthoryear{Metevier, Giguere, Brockman, Kobren, Brun,
  Brunskill, and Thomas}{Metevier et~al\mbox{.}}{2019}]%
        {BloSteSar_19}
\bibfield{author}{\bibinfo{person}{Blossom Metevier}, \bibinfo{person}{Stephen
  Giguere}, \bibinfo{person}{Sarah Brockman}, \bibinfo{person}{Ari Kobren},
  \bibinfo{person}{Yuriy Brun}, \bibinfo{person}{Emma Brunskill}, {and}
  \bibinfo{person}{Philip~S. Thomas}.} \bibinfo{year}{2019}\natexlab{}.
\newblock \showarticletitle{Offline Contextual Bandits with High Probability
  Fairness Guarantees}.
\newblock In \bibinfo{booktitle}{\emph{Advances Neural Information Processing
  Systems (NeurIPS)}}. \bibinfo{pages}{14922--14933}.
\newblock


\bibitem[\protect\citeauthoryear{Neely}{Neely}{2010}]%
        {Nee_10}
\bibfield{author}{\bibinfo{person}{Michael~J. Neely}.}
  \bibinfo{year}{2010}\natexlab{}.
\newblock \showarticletitle{Stochastic Network Optimization with Application to
  Communication and Queueing Systems}.
\newblock \bibinfo{journal}{\emph{Synthesis Lectures on Communication
  Networks}} \bibinfo{volume}{3}, \bibinfo{number}{1} (\bibinfo{year}{2010}),
  \bibinfo{pages}{1--211}.
\newblock


\bibitem[\protect\citeauthoryear{{Neely}}{{Neely}}{2016}]%
        {Nee_16}
\bibfield{author}{\bibinfo{person}{M.~J. {Neely}}.}
  \bibinfo{year}{2016}\natexlab{}.
\newblock \showarticletitle{Energy-Aware Wireless Scheduling With Near-Optimal
  Backlog and Convergence Time Tradeoffs}.
\newblock \bibinfo{journal}{\emph{IEEE/ACM Transactions on Networking}}
  \bibinfo{volume}{24}, \bibinfo{number}{4} (\bibinfo{year}{2016}),
  \bibinfo{pages}{2223--2236}.
\newblock


\bibitem[\protect\citeauthoryear{Srikant and Ying}{Srikant and Ying}{2014}]%
        {SriYin_14}
\bibfield{author}{\bibinfo{person}{R. Srikant} {and} \bibinfo{person}{Lei
  Ying}.} \bibinfo{year}{2014}\natexlab{}.
\newblock \bibinfo{booktitle}{\emph{Communication Networks: {A}n Optimization,
  Control and Stochastic Networks Perspective}}.
\newblock \bibinfo{publisher}{Cambridge University Press}.
\newblock


\bibitem[\protect\citeauthoryear{Stolyar}{Stolyar}{2005}]%
        {Sto_05}
\bibfield{author}{\bibinfo{person}{A.~L. Stolyar}.}
  \bibinfo{year}{2005}\natexlab{}.
\newblock \showarticletitle{Maximizing queueing network utility subject to
  stability: Greedy primal-dual algorithm}.
\newblock \bibinfo{journal}{\emph{Queueing Systems}} \bibinfo{volume}{50},
  \bibinfo{number}{4} (\bibinfo{date}{August} \bibinfo{year}{2005}),
  \bibinfo{pages}{401--457}.
\newblock


\bibitem[\protect\citeauthoryear{{Tariq}, {Sen}, d.~{Veciana}, and
  {Shakkottai}}{{Tariq} et~al\mbox{.}}{2019}]%
        {TarSenVec_19}
\bibfield{author}{\bibinfo{person}{I. {Tariq}}, \bibinfo{person}{R. {Sen}},
  \bibinfo{person}{G. d. {Veciana}}, {and} \bibinfo{person}{S. {Shakkottai}}.}
  \bibinfo{year}{2019}\natexlab{}.
\newblock \showarticletitle{Online Channel-state Clustering And Multiuser
  Capacity Learning For Wireless Scheduling}. In
  \bibinfo{booktitle}{\emph{Proc. IEEE Int. Conf. Computer Communications
  (INFOCOM)}}. \bibinfo{pages}{136--144}.
\newblock


\bibitem[\protect\citeauthoryear{Wei, Yu, and Neely}{Wei et~al\mbox{.}}{2020}]%
        {WeiYuNee_20}
\bibfield{author}{\bibinfo{person}{Xiaohan Wei}, \bibinfo{person}{Hao Yu},
  {and} \bibinfo{person}{Michael~J. Neely}.} \bibinfo{year}{2020}\natexlab{}.
\newblock \showarticletitle{Online Primal-Dual Mirror Descent under Stochastic
  Constraints}.
\newblock \bibinfo{journal}{\emph{Proc. Ann. ACM SIGMETRICS Conf.}}
  \bibinfo{volume}{4}, \bibinfo{number}{2} (\bibinfo{date}{June}
  \bibinfo{year}{2020}).
\newblock


\bibitem[\protect\citeauthoryear{Yu and Neely}{Yu and Neely}{2020}]%
        {YuNee_20}
\bibfield{author}{\bibinfo{person}{Hao Yu} {and} \bibinfo{person}{Michael~J.
  Neely}.} \bibinfo{year}{2020}\natexlab{}.
\newblock \showarticletitle{A Low Complexity Algorithm with O($\sqrt{T}$)
  Regret and O(1) Constraint Violations for Online Convex Optimization with
  Long Term Constraints}.
\newblock \bibinfo{journal}{\emph{Journal of Machine Learning Research}}
  \bibinfo{volume}{21}, \bibinfo{number}{1} (\bibinfo{year}{2020}),
  \bibinfo{pages}{1--24}.
\newblock


\bibitem[\protect\citeauthoryear{Yu, Neely, and Wei}{Yu et~al\mbox{.}}{2017}]%
        {YuNeeWei_17}
\bibfield{author}{\bibinfo{person}{Hao Yu}, \bibinfo{person}{Michael~J. Neely},
  {and} \bibinfo{person}{Xiaohan Wei}.} \bibinfo{year}{2017}\natexlab{}.
\newblock \showarticletitle{Online Convex Optimization with Stochastic
  Constraints}. In \bibinfo{booktitle}{\emph{Advances Neural Information
  Processing Systems (NeurIPS)}}. \bibinfo{pages}{1427–1437}.
\newblock


\bibitem[\protect\citeauthoryear{Zinkevich}{Zinkevich}{2003}]%
        {Zin_03}
\bibfield{author}{\bibinfo{person}{Martin Zinkevich}.}
  \bibinfo{year}{2003}\natexlab{}.
\newblock \showarticletitle{Online Convex Programming and Generalized
  Infinitesimal Gradient Ascent}. In \bibinfo{booktitle}{\emph{International
  Conference on Machine Learning (ICML)}}. \bibinfo{pages}{928–935}.
\newblock


\end{thebibliography}

\appendix

\section{Proof of Lemma \ref{LEMMA: DRIFT-INEQUALITY}}\label{app: lemma-drift-inequality}

\lemmadriftinequality*
\begin{proof} Since $\mathbf x_{\epsilon}$ is a feasible solution in $\mathcal X_{\epsilon},$ we have $\lambda_i = \sum_{j} x_{\epsilon,i,j}.$ Recall $\eta_{i,j} = \hat r_{i,j} - \sum_k Q_j^{(k)} w_{i,j}^{(k)}$ in Algorithm \ref{main alg} and let $j^* \in \argmax_j \eta_{i,j}.$ 
Given $\mathbf H(t)=\mathbf h,$ we have
\begin{align}
&V \sum_{i}\sum_{j} \hat r_{i,j} x_{\epsilon,i,j}-\sum_j\sum_{k}Q_j^{(k)}\left(\sum_{i}w_{i,j}^{(k)} x_{\epsilon,i,j} - \rho_j^{(k)} + \epsilon\right) \nonumber \\
=& \sum_{i}\sum_{j} \left(V\hat r_{i,j} x_{\epsilon,i,j}-\sum_{k}Q_j^{(k)}w_{i,j}^{(k)} x_{\epsilon,i,j}\right) - \sum_{j}\sum_{k} \left(- \rho_j^{(k)} + \epsilon\right) \nonumber \\
\leq& \sum_{i} \left(V\hat r_{i,j^*} \lambda_i-\sum_{k}Q_{j^*}^{(k)}w_{i,j^*}^{(k)} \lambda_i\right) - \sum_{j}\sum_{k} \left(- \rho_j^{(k)} + \epsilon\right) \nonumber \\
=& \mathbb E\left[  \sum_{i} V\hat r_{i,j^*}(t) \Lambda_{i}(t)-\sum_{k}Q_{j^*}^{(k)}w_{i,j^*}^{(k)}(t) \Lambda_{i}(t) \Big| \mathbf H(t) = \mathbf h \right] - \sum_{j}\sum_{k} \left(- \rho_j^{(k)} + \epsilon\right) \nonumber \\
=& \mathbb E\left[  \sum_{i} \sum_{j} V \hat r_{i,j}(t) x_{i,j}(t)-\sum_{j}\sum_{k}Q_j^{(k)}\left(\sum_iw_{i,j}^{(k)}(t) x_{i,j}(t) - \rho_j^{(k)}(t) + \epsilon\right) \Big| \mathbf H(t) = \mathbf h \right] \nonumber 
\end{align}
where the first inequality holds because $\lambda_{i} = \sum_j  x_{\epsilon,i,j}$ and $j^* \in \argmax_j \eta_{i,j};$ the last equality holds because $x_{i,j}(t)$ maximizes $\sum_{i, j} \left(V \hat r_{i,j}(t) - \sum_k w_{i,j}^{(k)}(t)Q_j^{(k)}(t)\right) x_{i,j}$ with $\Lambda_{i}(t) = \sum_j  x_{i,j}(t)$ in Algorithm \ref{main alg}.
\end{proof}

\section{Proof of Lemma \ref{lemma: elsilon gap}}

\lemmaelsilongap*
\begin{proof}
Since $\mathbf x^*$ is the optimal solution to optimization problem \eqref{obj-fluid}-\eqref{resource limit-fluid}, we have 
\begin{align*}
&\lambda_i = \sum_{j}  x_{i,j}^*, \forall i \in \mathcal N, x_{i,j}^* \geq 0, \forall i \in \mathcal N, j \in \mathcal M,\\
&\sum_{i}  w_{i,j}^{(k)} x_{i,j}^* \leq \rho_j^{(k)}, ~\forall j \in \mathcal M, \forall k \in \mathcal K. 
\end{align*}
Under Assumption \ref{assumption:slater}, there exists $\mathbf x^{\text{in}} \in \mathcal X$ such that \begin{align*}
&\lambda_i = \sum_{j}  x_{i,j}^{\text{in}}, \forall i \in \mathcal N, x_{i,j}^{\text{in}} \geq 0, \forall i \in \mathcal N, j \in \mathcal M,\\
&\sum_{i}  w_{i,j}^{(k)} x_{i,j}^{\text{in}} \leq \rho_j^{(k)} -\delta, ~\forall j \in \mathcal M, \forall k \in \mathcal K. 
\end{align*}
We construct $\mathbf x_\epsilon = \left(1-\frac{\epsilon}{\delta}\right) \mathbf x^* + \frac{\epsilon}{\delta} \mathbf x^{\text{in}}$ such that it satisfies that
\begin{align*}
&\sum_{j} x_{\epsilon, i,j} = \left(1-\frac{\epsilon}{\delta}\right) \sum_{j} x_{i,j}^* + \frac{\epsilon}{\delta}  \sum_{j} x_{i,j}^{\text{in}} = \lambda_i, \forall i \in \mathcal N
\end{align*}
and 
\begin{align*}
\sum_{i} w_{i,j}^{(k)} x_{\epsilon, i,j} = \sum_{i} w_{i,j}^{(k)} \left[\left(1-\frac{\epsilon}{\delta}\right)x_{i,j}^* +  \frac{\epsilon}{\delta} x_{i,j}^{\text{in}} \right] \leq \rho_j^{(k)} -\epsilon, ~\forall j \in \mathcal M, \forall k \in \mathcal K. 
\end{align*}
Therefore, $\mathbf x_\epsilon$ is a feasible solution to ``$\epsilon$-tight'' optimization problem \eqref{obj-fluid-tight} - \eqref{resource limit-fluid-tight} that
\begin{align*}
    \max_{\mathbf x} & ~  \sum_{i,j} r_{i,j} x_{i,j}\\
    \text{s.t.} & ~ \lambda_i = \sum_{j}  x_{i,j}, ~\forall i \in \mathcal N,  ~x_{i,j} \geq 0, \forall i \in \mathcal N, j \in \mathcal M,\\
    & ~ \sum_{i}  w_{i,j}^{(k)} x_{i,j} + \epsilon \leq \rho_j^{(k)}, ~\forall j \in \mathcal M, \forall k \in \mathcal K. 
\end{align*}
Given $\mathbf x_\epsilon^*$ is an optimal solution to ``$\epsilon$-tight'' optimization problem above,
we have
\begin{align*} 
 \sum_{i,j} r_{i,j} (x^*_{i,j} -x^*_{\epsilon,i,j} )
\leq & \sum_{i,j} r_{i,j} (x^*_{i,j} -x_{\epsilon,i,j}) \\
= & \sum_{i,j} r_{i,j} \left(x^*_{i,j} - \left(1-\frac{\epsilon}{\delta}\right)  x^*_{i,j} - \frac{\epsilon}{\delta} x_{i,j}^{\text{in}}\right) \\
\leq & \sum_{i,j} r_{i,j} \left(x^*_{i,j} - \left(1-\frac{\epsilon}{\delta}\right)  x^*_{i,j} \right) \\
\leq & \sum_{i,j} \left(x^*_{i,j} - \left(1-\frac{\epsilon}{\delta}\right)  x^*_{i,j} \right) \\
= & \frac{\epsilon}{\delta}\sum_{i,j} x^*_{i,j} = \frac{\epsilon}{\delta} \sum_i \lambda_i
\end{align*}
where the first inequality holds because $\mathbf x_\epsilon^*$ is the optimal solution and $\mathbf x_\epsilon$ is a feasible solution; the first equality holds because $\mathbf x_\epsilon = \left(1-\frac{\epsilon}{\delta}\right) \mathbf x^* + \frac{\epsilon}{\delta} \mathbf x^{\text{in}};$ the third inequality holds because $x_{i,j}^{\text{in}}$ is non-negative.

\end{proof}

\section{Proof of Lemma  \ref{MAB-negative-lemma}}

\subsection{MOSS learning}
Let $\bar R_{i,j,s} = \frac{1}{s}\sum_{k=1}^sR_{i,j,k}$ be the empirical mean based on $s$ sampled rewards of a type $i$ job assigned to server $j.$ As a standard trick used in regret analysis, we assume the sampled rewards, e.g. $R_{i,j,k}$ is drawn before the system starts, but its value is revealed only when the $k$th type-$i$ job is served by server $j.$ In this way, the distribution of $\bar R_{i,j,s}$ is independent of the online dispatching decisions, so we can apply the Hoeffding inequality. 
Next, define $\zeta_{i,j} = r_{i,j} - \min_{s \leq C_{\lambda}T} \left( \bar R_{i,j,s} + \sqrt{\frac{2}{s}\log \frac{T}{M s}}\right).$ We have the inequality to be $$\mathbb P(r_{i,j} - \hat{r}_{i,j}(t) > y) \leq \mathbb P(\zeta_{i,j} > y).$$  We then introduce a tight version of Hoeffding inequality to study $\mathbb P(\zeta_{i,j} > y)$ in the following lemma \cite{Hoe_63}.
\begin{lemma}\label{Hoeffding}
Let $X_1, X_2, \cdots, X_s$ to be the i.i.d random variables with $X_i \in [0, 1]$ and $\mathbb E[X_i] = \mu,$ we have
$$\mathbb P\left(\exists 1\leq s \leq m, ~ \sum_{i=1}^s (\mu - X_i) > y\right) \leq e^{-\frac{2y^2}{m}}.$$
\end{lemma}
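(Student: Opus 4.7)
The plan is to reduce this maximal inequality to the standard Hoeffding exponential-moment bound by constructing an exponential supermartingale indexed by $s$ and then invoking Doob's maximal inequality. Write $Y_i = \mu - X_i$ and $S_s = \sum_{i=1}^s Y_i$. By Hoeffding's lemma, since each $Y_i$ has mean zero and takes values in an interval of length $1$, for every $\lambda \geq 0$ one has $\mathbb E[e^{\lambda Y_i}] \leq e^{\lambda^2/8}$. For a fixed $\lambda > 0$, define $M_s = \exp(\lambda S_s - s \lambda^2/8)$ with $M_0 = 1$. Independence of the $Y_i$'s combined with the MGF bound gives $\mathbb E[M_s \mid \mathcal F_{s-1}] \leq M_{s-1}$, so $(M_s)$ is a nonnegative supermartingale with respect to the natural filtration.

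Next, I would apply Doob's maximal inequality to the nonnegative supermartingale $(M_s)_{0 \leq s \leq m}$, which gives $\mathbb P(\max_{0 \leq s \leq m} M_s \geq a) \leq \mathbb E[M_0]/a = 1/a$ for every $a > 0$. To translate this into a bound on $S_s$, observe that for any $s \leq m$ and $\lambda > 0$, the event $\{S_s > y\}$ is contained in $\{M_s > e^{\lambda y - m \lambda^2/8}\}$, since $-s\lambda^2/8 \geq -m\lambda^2/8$. Consequently the event $\{\exists\, 1 \leq s \leq m : S_s > y\}$ is contained in $\{\max_{s \leq m} M_s > e^{\lambda y - m\lambda^2/8}\}$, and the Doob bound yields
\[
\mathbb P\!\left(\exists\, 1 \leq s \leq m : S_s > y\right) \;\leq\; \exp\!\left(-\lambda y + \frac{m\lambda^2}{8}\right).
\]

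Finally, I would optimize the free parameter $\lambda > 0$. The exponent $-\lambda y + m\lambda^2/8$ is minimized at $\lambda^\star = 4y/m$, at which it equals $-2y^2/m$, delivering exactly the claimed bound $e^{-2y^2/m}$. The only genuinely nontrivial step is verifying the exponential supermartingale property in the first paragraph, which is routine once independence and the Hoeffding MGF bound are combined. An alternative, equally clean route would apply the optional stopping theorem to $(M_s)$ at the bounded stopping time $T \wedge m$, where $T = \inf\{s : S_s > y\}$: on $\{T \leq m\}$ one has $M_{T \wedge m} \geq e^{\lambda y - m\lambda^2/8}$, so $1 \geq \mathbb E[M_{T\wedge m}] \geq e^{\lambda y - m\lambda^2/8}\,\mathbb P(T \leq m)$, giving the same estimate after the same optimization in $\lambda$.
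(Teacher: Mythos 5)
Your proof is correct. The paper itself gives no proof of this lemma: it is stated as a known ``tight version of Hoeffding's inequality'' and attributed to Hoeffding's 1963 paper, so there is no in-paper argument to compare against. Your route --- centering to $Y_i=\mu-X_i$, bounding the moment generating function via Hoeffding's lemma ($\mathbb E[e^{\lambda Y_i}]\leq e^{\lambda^2/8}$ since $Y_i$ lives in an interval of length $1$), forming the exponential supermartingale $M_s=\exp(\lambda S_s-s\lambda^2/8)$, applying Doob's (Ville's) maximal inequality for nonnegative supermartingales, and optimizing at $\lambda^\star=4y/m$ to get exponent $-2y^2/m$ --- is exactly the standard derivation of this maximal inequality, and every step checks out; the event inclusion $\{S_s>y\}\subseteq\{M_s>e^{\lambda y-m\lambda^2/8}\}$ for $s\leq m$ is the right way to get a bound uniform in $s$. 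The optional-stopping variant you sketch is equally valid. The only (cosmetic) caveat is that the optimization presumes $y>0$, which is how the lemma is used in the paper, and which should arguably be stated in the hypothesis.
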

Based on Lemma \ref{Hoeffding}, we first study the tail probability:
\begin{align*}
    \mathbb P(\zeta_{i,j} > y) =& \mathbb P\left(r_{i,j} - \min_{s \leq C_{\lambda}T} \left( \bar R_{i,j,s} + \sqrt{\frac{2}{s}\log \frac{T}{M s}}\right) > y\right) \\
    =& \mathbb P\left(\exists s \leq C_{\lambda}T, ~r_{i,j} - \bar R_{i,j,s} - \sqrt{\frac{2}{s}\log \frac{T}{M s}} > y\right) \\
    =& \mathbb P\left(\exists s \leq C_{\lambda}T, ~s r_{i,j} - s \bar R_{i,j,s} > sy + \sqrt{2s\log \frac{T}{M s}}\right) \\
    \leq& \sum_{j=0}^{\infty} \mathbb P\left(\exists s, ~ 2^j \leq s \leq 2^{j+1}, ~s r_{i,j} - s \bar R_{i,j,s} > 2^jy + \sqrt{2^{j+1}\log \frac{T}{M2^{j+1}}}\right)
\end{align*}
By invoking Lemma \ref{Hoeffding}, it implies
\begin{align*}
    \mathbb P(\zeta_{i,j} > y)
    \leq \sum_{j=0}^{\infty} e^{-\frac{\left(2^jy + \sqrt{2^{j+1}\log \frac{T}{M2^{j+1}}}\right)^2}{2^{j}}}
    \leq \sum_{j=0}^{\infty} e^{-2^{j}y^2-2\log \frac{T}{M2^{j+1}}}
    = \frac{M^2}{T^2}\sum_{j=0}^{\infty} 4^{j+1} e^{-2^{j}y^2}. 
\end{align*}
where the second inequality holds because $(a+b)^2 \geq a^2+b^2, \forall a, b\geq 0.$ Then we have 
\begin{align*}
\frac{M^2}{T^2}\sum_{j=0}^{\infty} 4^{j+1} e^{-2^{j}y^2} 
\leq \frac{16M^2}{e^2T^2y^2} + \int_0^{\infty} 4^{j+1} e^{-2^{j}y^2} dj \leq \left(\frac{9}{y^2}+\frac{6}{y^4}\right)\frac{M^2}{T^2},
\end{align*}
where we use $\sum_{j=a}^b f(j) \leq \max_{a \leq s \leq b} f(s) + \int_{b}^a f(s)ds$ for any function $f(s)$ is unimodal in $[a,b].$ Therefore, we have
\begin{align}
    \mathbb E[\zeta_{i,j}] 
    = \int_0^{\infty}\mathbb P(\zeta_{i,j} > y)dy
    \leq \int_0^{\infty} \min\left(1, \left(\frac{9}{y^2}+\frac{6}{y^4}\right)\frac{M^2}{T^2}\right) dy
    \leq 3\sqrt{\frac{M}{T}}+\frac{6M}{T}.
\end{align}
To conclude, we have
\begin{align}
    \eqref{UCB-negative-bound}= \sum_{t=0}^{T-1} \sum_{i,j} \mathbb E\left[ (r_{i,j} - \hat{r}_{i,j}(t)) x^*_{\epsilon,i,j} \right] \leq \left(3\sqrt{MT}+6M\right) \sum_{i}\lambda_i. 
\end{align}

\subsection{UCB learning}
Recall $\bar R_{i,j,s} = \frac{1}{s}\sum_{k=1}^sR_{i,j,k}$ is the empirical mean based on the first $s$ sampled rewards of a type $i$ job assigned to server $j.$ We study the upper bound on \eqref{UCB-negative-bound}. 
\begin{align*}
& \sum_{t=0}^{T-1} \sum_{i,j} \mathbb E\left[ (r_{i,j} - \hat{r}_{i,j}(t)) x^*_{\epsilon,i,j} \right]\\
\leq& \sum_{t=0}^{T-1} \sum_{i,j} \mathbb E\left[ (r_{i,j} - \hat{r}_{i,j}(t)) \mathbb I(r_{i,j} - \hat{r}_{i,j}(t) \geq 0)  x^*_{\epsilon,i,j} \right]\\
=& \sum_{t=0}^{T-1} \sum_{i,j} x^*_{\epsilon,i,j} \mathbb P\left( r_{i,j} - \hat{r}_{i,j}(t) \geq 0 \right).
\end{align*}
We bound the probability in the following. 
\begin{align}
\mathbb P\left( r_{i,j} - \hat{r}_{i,j}(t) \geq 0 \right) =& \mathbb P\left( r_{i,j} \geq \bar{r}_{i,j}(t-1) + \sqrt{\frac{\log T}{N_{i,j}(t-1)}} \right)\nonumber\\
\leq& \sum_{s = 1}^{C_{\lambda}T} \mathbb P\left( r_{i,j} \geq \bar R_{i,j,s} + \sqrt{\frac{\log T}{s}} \right)\label{ucb-tail-1}\\
\leq& \sum_{s = 1}^{C_{\lambda}T} \frac{2}{T^2}\label{ucb-tail-2}\\
\leq& \frac{2C_{\lambda}}{T}, \nonumber
\end{align}
where \eqref{ucb-tail-1} holds because the maximal arrival is bounded by $C_\lambda T$ and \eqref{ucb-tail-2} holds by Hoeffding inequality. It implies that
\begin{align*}
\sum_{t=0}^{T-1} \sum_{i,j} x^*_{\epsilon,i,j} \mathbb P\left( r_{i,j} - \hat{r}_{i,j}(t) \geq 0 \right) \leq \sum_{t=0}^{T-1} \frac{2C_{\lambda}}{T} \sum_{i,j} x^*_{\epsilon,i,j} \leq 2C_{\lambda}\sum_i \lambda_i. 
\end{align*}

\section{Proof of Lemma \ref{MAB-lemma}}
We study the upper bound on \eqref{UCB-bound}, which follows most of the steps in \cite{AudBub_09}. \begin{align*}
& \sum_{i,j}\sum_{t=0}^{T-1}\mathbb  E[(\hat r_{i,j}(t) - r_{i,j})x_{i,j}(t)] \\
=& \sum_{i,j}\sum_{t=0}^{T-1}\mathbb E[(\hat r_{i,j}(t) - r_{i,j}) \mathbb I(\hat r_{i,j}(t) - r_{i,j} \leq \zeta)x_{i,j}(t)] + \mathbb E[(\hat r_{i,j}(t) - r_{i,j}) \mathbb I(\hat r_{i,j}(t) - r_{i,j} > \zeta)x_{i,j}(t)] \\
\leq& T\zeta \sum_{i}\lambda_{i}  + \sum_{i,j}\sum_{t=0}^{T-1}\mathbb E[(\hat r_{i,j}(t) - r_{i,j}) \mathbb I(\hat r_{i,j}(t) - r_{i,j} > \zeta)x_{i,j}(t)] 
\end{align*}
where the last inequality holds because
\begin{align*}
\sum_{i,j}\sum_{t=0}^{T-1}\mathbb E[(\hat r_{i,j}(t) - r_{i,j}) \mathbb I(\hat r_{i,j}(t) - r_{i,j} 
\leq \zeta)x_{i,j}(t)] \\
\leq T\zeta \sum_{i,j}\mathbb E[x_{i,j}(t)] = T\zeta \sum_{i}\mathbb E[\Lambda_{i}(t)]=T\zeta \sum_{i} \lambda_i.
\end{align*}
The second term is bounded as follows
\begin{align*}
&\sum_{i,j}\sum_{t=0}^{T-1}\mathbb E[(\hat r_{i,j}(t) - r_{i,j}) \mathbb I(\hat r_{i,j}(t) - r_{i,j} > \zeta)x_{i,j}(t)] \\
\leq& \sum_{i}\mathbb E[\Lambda_{i}(t)]\sum_j \sum_{t=0}^{T-1}\mathbb E[(\hat r_{i,j}(t) - r_{i,j}) \mathbb I(\hat r_{i,j}(t) - r_{i,j} > \zeta)] \\
=& \sum_{i}\lambda_{i}\sum_{j} \sum_{t=0}^{T-1}\mathbb E[(\hat r_{i,j}(t) - r_{i,j}) \mathbb I(\hat r_{i,j}(t) - r_{i,j} > \zeta)].
\end{align*}

\subsection{MOSS learning}
Let $\zeta = \sqrt{\frac{2M}{T}},$ we focus on the term 
\begin{align*}
\sum_{t=0}^{T-1}\mathbb E[(\hat r_{i,j}(t) - r_{i,j}) \mathbb I(\hat r_{i,j}(t) - r_{i,j} > \zeta)] 
=& \int_{\zeta}^{\infty} \sum_{t=0}^{T-1} \mathbb P(\hat r_{i,j}(t) - r_{i,j} > y) dy \\
\leq& \int_{\zeta}^{\infty} \sum_{s=1}^{C_{\lambda}T} \mathbb P\left(\bar R_{i,j,s} + \sqrt{\frac{2}{s}\log \frac{T}{M s}}  - r_{i,j} > y\right) dy
\end{align*}
Let $s(y)=\ceil*{\frac{2\log\frac{Ty^2}{M}}{(1-c)^2y^2}}$ with $c=0.25.$ Therefore, we have
\begin{align*}
&\int_{\zeta}^{\infty} \sum_{s=1}^{C_{\lambda}T} \mathbb P\left(\bar R_{i,j,s} + \sqrt{\frac{2}{s}\log \frac{T}{M s}} - r_{i,j} > y\right) dy \\
\leq& \int_{\zeta}^{\infty} \frac{2\log\frac{Ty^2}{M}}{(1-c)^2y^2} + \sum_{s=s(y)}^{C_{\lambda} T} \mathbb P\left(\bar R_{i,j,s} + \sqrt{\frac{2}{s}\log \frac{T}{M s}} - r_{i,j} > y\right) dy
\end{align*}
The first term is bounded as follows:
\begin{align*}
\int_{\zeta}^{\infty} \frac{2\log\frac{Ty^2}{M}}{(1-c)^2y^2} dy = \frac{2}{(1-c)^2} \frac{\log (\frac{T\zeta^2}{M})+2}{\zeta}
\end{align*}
The second term is bounded as follows:
\begin{align*}
&\int_{\zeta}^{\infty} \sum_{s=s(y)}^{C_{\lambda} T} \mathbb P\left(\bar R_{i,j,s} + \sqrt{\frac{2}{s}\log \frac{T}{M s}} - r_{i,j} > y\right) dy \\
\leq& \int_{\zeta}^{\infty} \sum_{s=s(y)}^{C_{\lambda} T} \mathbb P(\bar R_{i,j,s} - r_{i,j} > cy) dy\\
=& \int_{\zeta}^{\frac{1}{c}} \sum_{s=s(y)}^{C_{\lambda} T} \mathbb P(\bar R_{i,j,s} - r_{i,j} > cy) dy\\
\leq& \int_{\zeta}^{\frac{1}{c}} \sum_{s=s(y)}^{C_{\lambda} T} e^{-2sc^2y^2} dy \\
\leq& \int_{\zeta}^{\frac{1}{c}} \frac{e^{-\frac{4c^2}{(1-c)^2}\log \frac{T\zeta^2}{M}}} {1-e^{-2c^2y^2}}dy
\end{align*}
where the first inequality holds because the value of $s(y)=\ceil*{\frac{2\log\frac{Ty^2}{M}}{(1-c)^2y^2}}$ and $y \geq \zeta=\sqrt{\frac{2M}{T}};$ the second inequality holds because the assumption that the reward is in $[0, 1];$ the third inequality holds by Hoeffding inequality. 
Now the last term is bounded as follows
\begin{align*}
\int_{\zeta}^{\frac{1}{c}} \frac{1} {1-e^{-2c^2y^2}}dy =& \int_{\zeta}^{\frac{1}{2c}} \frac{1} {2c^2y^2 - 2c^4y^4}dy + \int_{\frac{1}{2c}}^{\frac{1}{c}} \frac{1} {1-e^{-2c^2y^2}}dy\\
\leq& \int_{\zeta}^{\frac{1}{2c}} \frac{2} {3c^2y^2}dy + \frac{1} {2c(1-e^{-0.5})}\\
=& \frac{2}{3c^2\zeta} - \frac{4}{3 c} + \frac{1} {2c(1-e^{-0.5})} \\
\leq& \frac{2}{3c^2\zeta}
\end{align*}
Recall $c=0.25$ and we have
\begin{align*}
&\sum_{t=0}^{T-1}\mathbb E[(\hat r_{i,j}(t) - r_{i,j}) \mathbb I(\hat r_{i,j}(t) - r_{i,j} > \zeta)] \\
\leq&  \frac{2}{(1-c)^2} \frac{\log (\frac{T\zeta^2}{M})+2}{\zeta} + \frac{2}{3c^2\zeta} e^{-\frac{4c^2}{(1-c)^2}\log \frac{T\zeta^2}{M}}\\
\leq& 19\sqrt{\frac{T}{M}}
\end{align*}
which implies
\begin{align*}
\sum_{i}\lambda_i\sum_{j}\sum_{t=0}^{T-1}\mathbb E[(\hat r_{i,j}(t) - r_{i,j}) \mathbb I(\hat r_{i,j}(t) - r_{i,j} > \zeta)]\mathbb \leq 19\sqrt{TM} \sum_{i} \lambda_i
\end{align*}
Substitute $\zeta = \frac{2M}{T},$ we have
\begin{align}
\eqref{UCB-bound} \leq 2 \sqrt{TM} \sum_{i} \lambda_i + 19 \sqrt{TM} \sum_{i} \lambda_i = 21\sqrt{TM} \sum_{i} \lambda_i
\end{align}

\subsection{UCB learning}
Let $\zeta = \sqrt{\frac{3M\log T}{T}},$ we focus on the term 
\begin{align*}
\sum_{t=0}^{T-1}\mathbb E[(\hat r_{i,j}(t) - r_{i,j}) \mathbb I(\hat r_{i,j}(t) - r_{i,j} > \zeta)] 
=& \int_{\zeta}^{\infty} \sum_{t=0}^{T-1} \mathbb P(\hat r_{i,j}(t) - r_{i,j} > y) dy \\
\leq& \int_{\zeta}^{\infty} \sum_{s=1}^{C_{\lambda}T} \mathbb P\left(\bar R_{i,j,s} + \sqrt{\frac{\log T}{s}} - r_{i,j} > y\right) dy
\end{align*}

Let $s(y) = \ceil*{\frac{\log T}{(1-c)^2y^2}}$ and $c = 1-\frac{1}{\sqrt{3}},$ we have 
\begin{align*}
&\int_{\zeta}^{\infty} \sum_{s=1}^{C_{\lambda}T} \mathbb P\left(\bar R_{i,j,s} + \sqrt{\frac{\log T}{s}} - r_{i,j} > y\right) dy \\
\leq&  \int_{\zeta}^{\infty} s(y) dy + \int_{\zeta}^{\infty} \sum_{s=\ceil{s(y)}}^{C_{\lambda}T}\mathbb P\left(\bar R_{i,j,s} + \sqrt{\frac{\log T}{s}} - r_{i,j} > y\right) dy \\
\leq& \int_{\zeta}^{\infty} \frac{3\log T}{y^2} dy + \int_{\zeta}^{\infty} \sum_{s=\ceil{s(y)}}^{C_{\lambda}T}\mathbb P\left(\bar R_{i,j,s} - r_{i,j} > cy\right) dy.
\end{align*}

By following the steps as in MOSS learning, we have 
\begin{align*}
\int_{\zeta}^{\infty} \sum_{s=\ceil{s(y)}}^{C_{\lambda}T}\mathbb P\left(\bar R_{i,j,s} - r_{i,j} > cy\right) dy \leq& \frac{4}{T\zeta}.
\end{align*}
and 
\begin{align*}
\int_{\zeta}^{+\infty} \frac{3\log T}{y^2} dy = \frac{3\log T}{\zeta}.
\end{align*}

Substitute $\zeta = \sqrt{\frac{3M\log T}{T}},$ we have that
\begin{align}
\eqref{UCB-bound} \leq 4\sqrt{MT\log T} \sum_{i} \lambda_i.
\end{align}

\section{Proof of Lemma \ref{negative drift}} \label{app: drift-lemma}
\driflemma*
\begin{proof}
According to the conditional expected drift in \eqref{drif ucb} (given $\mathbf H(t)=\mathbf h$):
\begin{align}
& \mathbb E[L(t+1)-L(t)|\mathbf H(t)=\mathbf h] \nonumber\\
\leq& \sum_j\sum_{k}Q_j^{(k)}\left(\sum_{i}w_{i,j}^{(k)} x_{\epsilon,i,j} - \rho_j^{(k)}+\epsilon\right) -  V\mathbb E\left[\sum_{i,j} \hat r_{i,j}(t)x_{\epsilon,i,j}-\sum_{i,j} \hat r_{i,j}(t)x_{i,j}(t)|\mathbf H(t)=\mathbf h\right] + B \nonumber \\
\leq& \sum_j\sum_{k}Q_j^{(k)}\left(\sum_{i}w_{i,j}^{(k)} x_{\epsilon,i,j} - \rho_j^{(k)}+\epsilon\right) + V\mathbb E\left[\sum_{i,j} \hat r_{i,j}(t)x_{i,j}(t)|\mathbf H(t)=\mathbf h\right] + B \nonumber \\
\leq& \sum_j\sum_{k}Q_j^{(k)}\left(\sum_{i}w_{i,j}^{(k)} x_{\epsilon,i,j} - \rho_j^{(k)}+\epsilon\right) + V \sum_i \lambda_i + B \nonumber \\
\leq& -\left(\frac{3\delta}{4} - \epsilon\right)\sum_j\sum_{k}Q_j^{(k)} + V \sum_i \lambda_i + B \label{cond:drift-exact}
\end{align}
where the second inequality holds because the utility function is always positive $\mathbf x_{\epsilon} \in \mathcal X_{\epsilon};$ the third inequality holds because rewards are $[0,1];$ the last inequality holds because of Lemma \ref{lemma: tightness} below.
\end{proof}

\begin{lemma}\label{lemma: tightness}
Suppose there exists $\mathbf x \in \mathcal X$ satisfying $\sum_{i}w_{i,j}^{(k)} x_{i,j} - \rho_j^{(k)} \leq -\delta, \forall j, k$ as in Assumption \ref{assumption:slater} and $\delta \geq 4 \epsilon,$ there always exists $\mathbf x_\epsilon$ such that $$\sum_{i}w_{i,j}^{(k)} x_{\epsilon,i,j} - \rho_j^{(k)}+\epsilon \leq -\frac{3\delta}{4}, \forall k.$$
\begin{proof}
Under Assumption \ref{assumption:slater}, there exists $\mathbf x \in \mathcal X$ satisfying $\sum_{i}w_{i,j}^{(k)} x_{i,j} - \rho_j^{(k)} \leq -\delta, \forall j, k,$ we can always find $\mathbf x_\epsilon$ with $\delta \geq 4 \epsilon$ such that
$$\sum_{i}w_{i,j}^{(k)} x_{\epsilon,i,j} - \rho_j^{(k)}+\epsilon \leq \sum_{i}w_{i,j}^{(k)} x_{i,j} - \rho_j^{(k)}+\epsilon  \leq -\delta + \epsilon \leq -\frac{3\delta}{4}, ~\forall j, k.$$
\end{proof}
\end{lemma}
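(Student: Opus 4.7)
\textbf{Proof plan for Lemma \ref{lemma: tightness}.} My plan is to prove this by taking the $\mathbf x_\epsilon$ promised by the conclusion to be exactly the Slater-feasible point $\mathbf x$ itself. Concretely, I would set $\mathbf x_\epsilon := \mathbf x$ and then verify the claimed bound coordinate by coordinate. No convex combination or averaging between two candidates is needed, because a Slater point already carries enough slack on every constraint to absorb an additive tightness of $\epsilon$, provided $\epsilon$ is small relative to $\delta$.

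For verification, I would start from Assumption \ref{assumption:slater}, which gives $\sum_i w_{i,j}^{(k)} x_{i,j} - \rho_j^{(k)} \leq -\delta$ simultaneously for every $j$ and $k$. Adding $\epsilon$ to both sides yields $\sum_i w_{i,j}^{(k)} x_{i,j} - \rho_j^{(k)} + \epsilon \leq -\delta + \epsilon$. The quantitative hypothesis $\delta \geq 4\epsilon$ is equivalent to $\epsilon \leq \delta/4$, so $-\delta + \epsilon \leq -\delta + \delta/4 = -3\delta/4$. Chaining these two inequalities delivers the advertised bound $\sum_i w_{i,j}^{(k)} x_{\epsilon,i,j} - \rho_j^{(k)} + \epsilon \leq -3\delta/4$ for every $j,k$ at once, since the Slater margin and the bound $\epsilon \leq \delta/4$ are uniform in $(j,k)$.

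I do not anticipate any real obstacle: the lemma is essentially a quantitative restatement of the Slater condition, and the constant $3\delta/4$ is simply the residual margin that remains after we have spent $\delta/4$ worth of slack on the tightness constant. If anything were to be subtle, it would only be the simultaneity across all $(j,k)$; but Assumption \ref{assumption:slater} is already stated as a single $\mathbf x$ satisfying all constraints with margin $\delta$, so choosing that same $\mathbf x$ to play the role of $\mathbf x_\epsilon$ preserves simultaneity for free. This is exactly the property needed upstream in Lemma \ref{negative drift}, where the bound $-3\delta/4 + \epsilon$ on each shifted constraint slack turns the drift term $\sum_j\sum_k Q_j^{(k)}(\sum_i w_{i,j}^{(k)} x_{\epsilon,i,j} - \rho_j^{(k)} + \epsilon)$ into a negative linear-in-$\|\mathbf Q\|_1$ contribution.
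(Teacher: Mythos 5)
Your proposal is correct and follows essentially the same route as the paper: both take the Slater point $\mathbf x$ as (or as dominating) the candidate $\mathbf x_\epsilon$ and chain $-\delta+\epsilon\leq -3\delta/4$ from $\delta\geq 4\epsilon$. Your explicit choice $\mathbf x_\epsilon:=\mathbf x$ also lies in $\mathcal X_\epsilon$ (since $-\delta\leq-\epsilon$), which is what the upstream drift argument needs, so nothing is missing.
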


\end{document}